\documentclass[preprint,12pt]{alt2024} %

\title[Learning bounded-degree polytrees with known skeleton]{Learning bounded-degree polytrees with known skeleton}

\usepackage{times}
\usepackage{authblk}
\author[1]{Davin Choo\thanks{Equal contribution}}
\author[2]{Joy Qiping Yang$^*$}
\author[3]{Arnab Bhattacharyya}
\author[4]{Clément L. Canonne}
\affil[1,2,3]{National University of Singapore}
\affil[2,4]{University of Sydney}

\usepackage{algorithm}
\DontPrintSemicolon
\LinesNumbered
\RestyleAlgo{ruled}
\SetKwComment{Comment}{$\triangleright$ }{}
\SetKwInput{KwData}{Input}
\SetKwInput{KwResult}{Output}
\SetKwRepeat{Do}{do}{while}

\usepackage{enumerate}
\usepackage[nointegrals]{wasysym}
\usepackage{bbm}
\usepackage{amsmath}
\usepackage{pifont}
\usepackage{booktabs}
\usepackage{xcolor}
\definecolor{darkgreen}{rgb}{0,0.5,0}
\usepackage{hyperref}
\hypersetup{
    unicode=false,          %
    colorlinks=true,        %
    linkcolor=red,          %
    citecolor=darkgreen,    %
    filecolor=magenta,      %
    urlcolor=blue           %
}
\usepackage[capitalize, nameinlink]{cleveref}

\usepackage{thmtools}
\usepackage{thm-restate}
\newtheorem{mylemma}[theorem]{Lemma}
\newtheorem{mycorollary}[theorem]{Corollary}
\newtheorem{claim}[theorem]{Claim}
\newtheorem{assumption}[theorem]{Assumption}

\usepackage{tikz}
\usetikzlibrary{calc, graphs, graphs.standard, shapes, arrows, arrows.meta, positioning, decorations.pathreplacing, decorations.markings, decorations.pathmorphing, fit, matrix, patterns, shapes.misc, tikzmark}

\newcommand{\bX}{\mathbf{X}}
\newcommand{\bY}{\mathbf{Y}}

\newcommand{\cN}{\mathcal{N}}
\newcommand{\cO}{\mathcal{O}}

\newcommand{\cX}{\mathcal{X}}
\newcommand{\cY}{\mathcal{Y}}
\newcommand{\cZ}{\mathcal{Z}}

\newcommand{\pd}{P}
\newcommand{\eps}{\varepsilon}
\newcommand{\skel}{\mathrm{skel}}

\newcommand{\Bern}{\mathrm{Bern}}
\newcommand{\kl}{d_{\mathrm{KL}}}

\newcommand{\sqhell}{d_{\mathrm{H}}^2}

\definecolor{DSred}{rgb}{1,0,0}

\usepackage{texmacs}

\allowdisplaybreaks
\newcommand{\ignore}[1]{}

\begin{document}

\maketitle

\begin{abstract}
We establish finite-sample guarantees for efficient proper learning of bounded-degree \emph{polytrees}, a rich class of high-dimensional probability distributions and a subclass of Bayesian networks, a widely-studied type of graphical model.
Recently, \cite{DBLP:journals/corr/abs-2011-04144} obtained finite-sample guarantees for recovering tree-structured Bayesian networks, i.e., 1-polytrees.
We extend their results by providing an efficient algorithm which learns $d$-polytrees in polynomial time and sample complexity for any bounded $d$ when the underlying undirected graph (skeleton) is known.
We complement our algorithm with an information-theoretic sample complexity lower bound, showing that the dependence on the dimension and target accuracy parameters are nearly tight.
\end{abstract}

\begin{keywords}%
Bayesian network, Polytree, Sample complexity
\end{keywords}

\section{Introduction}
\label{sec:intro}

Distribution learning, or density estimation, is the task of obtaining a good estimate of an unknown underlying probability distribution $\pd$ from observational samples.
Understanding which classes of distributions could be or could not be learnt efficiently, in terms of \emph{sample} (data) and \emph{computational} (time) complexities, is a fundamental problem in both computer science and statistics.

{\em Bayesian networks} (or {\em Bayes nets} in short) represent a class of high-dimensional distributions that can be explicitly described by how each variable is generated sequentially in a directed fashion.
Being interpretable, Bayes nets have been used to model beliefs in a wide variety of domains (e.g.\ see \cite{jensen2007bayesian,DBLP:books/daglib/0023091} and references therein).
A fundamental problem in computational learning theory is to identify families of Bayes nets which can be learned efficiently from observational data.
Formally, a Bayes net is a probability distribution $\pd$, defined over some directed acyclic graph (DAG) $G = (V,E)$ on $|V| = n$ nodes that factorizes according to $G$ (i.e. Markov with respect to $G$) in the following sense: $\pd(v_1, \ldots, v_n) = \prod_{v_1, \ldots, v_n} \pd(v \mid \pi(v))$, where $\pi(v) \subseteq V$ are the parents of $v$ in $G$.
While it is well-known that given the DAG structure of a Bayes net, there exists sample-efficient algorithms\footnote{In terms of computational efficiency, one can efficiently learn the distribution by following the DAG structure to learn each local probability table. See \cite[Section 6]{DBLP:journals/siamcomp/BhattacharyyaGPTV23} for a formal analysis of such an approach; there is an exponential dependency on the number of parents but this is unavoidable.} that output good hypotheses \citep{DBLP:journals/ml/Dasgupta97, BhattacharyyaGMV20}, there is no known computationally efficient algorithms for obtaining the DAG of a Bayes net.
In fact, it has long been understood that Bayes net structure learning is computationally expensive in general \citep{DBLP:journals/jmlr/ChickeringHM04}.
However, these hardness results fall short when the goal is learning the distribution $\pd$ in the probabilistically approximately correct (PAC) \citep{DBLP:conf/stoc/Valiant84} sense (with respect to, say, KL divergence or total variation distance), rather than trying to recover an exact graph structure from the Markov equivalence class of $\pd$.
That is, prior hardness results on \emph{exact} recovery do not carry over to \emph{approximate} recovery in the PAC sense.

{\em Polytrees} are a subclass of Bayesian networks where the undirected graph underlying the DAG is a forest, i.e.\ there are no cycles in the undirected graph obtained by ignoring edge directions.
A polytree with maximum in-degree $d$ is also known as a $d$-polytree.
With an infinite number of samples, one can recover the DAG of a non-degenerate polytree in the equivalence class with the Chow--Liu algorithm \citep{DBLP:journals/tit/ChowL68} and some additional conditional independence tests \citep{DBLP:journals/corr/abs-1304-2736}.
However, this algorithm does \emph{not} work in the finite sample regime. The only known result for learning polytrees with finite sample guarantees is for 1-polytrees \citep{DBLP:journals/corr/abs-2011-04144}.
Furthermore, in the agnostic setting, when the goal is to find the closest polytree distribution to an arbitrary distribution $\pd$, the learning problem becomes NP-hard \citep{DBLP:conf/uai/Dasgupta99}.

Here, we investigate what happens when the given distribution is a $d$-polytree, for $d > 1$.
\emph{Are $d$-polytrees computationally hard to learn in the realizable PAC-learning setting?}
One motivation for studying polytrees is due to a recent work of \cite{gao2021efficient} which shows that polytrees are easier to learn than general Bayes nets due to the underlying graph being a tree, allowing typical causal assumptions such as faithfulness to be dropped when designing efficient learning algorithms.

\paragraph{Contributions.}
We give a sample-efficient algorithm for proper Bayes net learning in the realizable setting, when provided with the ground truth skeleton (i.e., the underlying forest).
Crucially, our result does not require any distributional assumptions such as strong faithfulness, etc.
We also give information-theoretic sample complexity lower bounds that hold even when the ground truth skeleton is known and given to us.

\begin{theorem}
\label{theo:main}
Consider a discrete distribution $\pd$ on $n$ variables, each with alphabet $\Sigma$, defined on a polytree $G^*$ with an unknown maximum in-degree $d^*$.
Given $m$ samples from $\pd$, accuracy parameter $\eps>0$, failure probability $\delta$, the skeleton of $G^*$, and an in-degree upper bound $d \geq d^*$, there exists an algorithm that outputs a $d$-polytree distribution $\hat{\pd}$ such that $\kl(\pd \;\|\; \hat{\pd}) \leq \eps$ with success probability at least $1 - \delta$, as long as
\[
m
= \tilde{\Omega} \!\left( \frac{n \cdot |\Sigma|^{d+1}}{\eps}  \log \frac{1}{\delta} \right) \;.
\]
Moreover, the algorithm runs in time polynomial in $m$, $|\Sigma|^d$, and $n^d$.
\end{theorem}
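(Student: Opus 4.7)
My plan is to combine structure learning via dynamic programming on the forest skeleton with parameter learning on the resulting DAG, using the chain-rule decomposition of KL for Bayes nets. For any orientation $\hat\pi$ of the given skeleton and any conditional tables $\hat P(\cdot\mid\cdot)$,
\begin{equation*}
\kl(P \;\|\; \hat P) = \underbrace{\Big(\sum_v H_P(v \mid \hat\pi(v)) - H(P)\Big)}_{\text{structure cost}} + \underbrace{\sum_v \E_{P(\hat\pi(v))}\!\kl\!\big(P(v\mid\hat\pi(v)) \,\|\, \hat P(v\mid\hat\pi(v))\big)}_{\text{parameter cost}},
\end{equation*}
where both terms are nonnegative and the structure term vanishes at the ground-truth orientation $\pi^*$. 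I target $\eps/2$ for each. Once an orientation $\hat\pi$ is chosen, the parameter cost is the standard known-DAG parameter-learning problem, which I handle via Laplace-smoothed empirical conditional tables as in \cite{DBLP:journals/siamcomp/BhattacharyyaGPTV23}; their known-DAG analysis contributes the $\tilde O(n|\Sigma|^{d+1}/\eps)$ term to the sample complexity.

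For structure learning, I exploit that the skeleton is a forest: every orientation of a forest is automatically acyclic, so the only constraints are the in-degree cap $|\hat\pi(v)|\leq d$ and local consistency per edge. I root each tree of the skeleton arbitrarily and run a bottom-up DP in which the state at a node $v$ is a single bit $b_v\in\{0,1\}$ indicating whether the edge from $v$ to its tree-parent is oriented into $v$. The transition at $v$, given $b_v$, enumerates every subset $S\subseteq N(v)$ with $|S|\leq d$ consistent with $b_v$, charges a local score $\hat s(v,S)$ meant to estimate $H_P(v\mid S)$, and recurses into each tree-child $c$ with $b_c$ determined by whether $c\in S$. The DP returns the orientation minimizing $\sum_v \hat s(v,\hat\pi(v))$ subject to the in-degree cap in total time $O\!\big(\sum_v\binom{|N(v)|}{\leq d}\big)=O(n^{d+1})$, matching the runtime claim. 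Crucially, because $\pi^*$ is feasible ($d^*\leq d$) and achieves zero population structure cost, the DP output obeys the empirical inequality $\sum_v \hat s(v,\hat\pi(v)) \leq \sum_v \hat s(v,\pi^*(v))$.

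The main obstacle is turning this empirical optimality into a bound on the \emph{population} structure cost while attaining the $\eps^{-1}$ (rather than $\eps^{-2}$) sample rate. If each $\hat s(v,S)$ approximates $H_P(v\mid S)$ to uniform additive error $\eta$, a standard telescoping bounds the population structure cost by $2n\eta$, forcing the target $\eta=\eps/(2n)$. A plug-in empirical-entropy estimator would need $\tilde O(|\Sigma|^{d+1}/\eta^2)$ samples per entry and so yield the wrong rate; I instead use a minimax-optimal conditional-entropy estimator, which achieves additive accuracy $\eta$ using $\tilde O(|\Sigma|^{d+1}/\eta)$ samples. Substituting $\eta=\eps/(2n)$ and taking a union bound over the $\leq n\cdot n^d$ candidate $(v,S)$ pairs produces the claimed sample bound $\tilde\Omega(n|\Sigma|^{d+1}\log(1/\delta)/\eps)$. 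The delicate steps are: (i) ensuring that the fast rate survives through the subtraction $H_P(v,S)-H_P(S)$ used to form the conditional-entropy estimator; (ii) controlling the concentration of $\hat s$ uniformly across all $(v,S)$ without inflating the sample complexity beyond logarithmic factors; and (iii) interfacing the structure-cost and parameter-cost analyses so that the chosen orientation $\hat\pi$ admits the Laplace-smoothed parameter bound without additional error terms.
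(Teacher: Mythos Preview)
Your DP on the rooted forest is sound and would indeed return an orientation minimizing the empirical score subject to the in-degree cap. The gap is in the claimed rate for the score estimator. You assert that a minimax-optimal conditional-entropy estimator achieves additive accuracy $\eta$ from $\tilde O(|\Sigma|^{d+1}/\eta)$ samples, but no such estimator exists: the minimax sample complexity for estimating entropy over an alphabet of size $k$ to additive error $\eta$ is $\Theta\!\big(\tfrac{k}{\eta\log k}\big)\vee\Theta\!\big(\tfrac{\log^2 k}{\eta^2}\big)$ (Wu--Yang), and the second term---coming from the variance of $\log P(X)$---is an information-theoretic lower bound that the polynomial-approximation estimators do not evade. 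Since your telescoping forces $\eta=\Theta(\eps/n)$, that term becomes $\Theta(n^2\log^2(|\Sigma|^{d+1})/\eps^2)$, i.e.\ $\tilde\Theta(n^2/\eps^2)$ for constant $|\Sigma|,d$, quadratically worse than the target $\tilde O(n/\eps)$. Writing $H(v\mid S)=H(v,S)-H(S)$ does not help; the $1/\eta^2$ variance term survives the subtraction. So your ``delicate step (i)'' fails not because of the difference of two estimators but because neither summand can be estimated at the rate you need.

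This is precisely the obstacle the paper is engineered around. Instead of \emph{estimating} any (conditional) mutual information to additive accuracy, the paper uses only a CMI \emph{tester} that distinguishes $I(X;Y\mid Z)=0$ from $I(X;Y\mid Z)\geq\eta$; this testing problem genuinely has the $\tilde O(|\Sigma|^{d+1}/\eta)$ rate (\cref{cor:CMI_tester}) because one side is a point null. The three-phase algorithm---orient $\eps$-strong v-structures, propagate forced orientations via local tests and Meek $R1(d)$, then freely orient the residual forest as a $1$-polytree---is crafted so that every decision hinges on such a test against the null $I=0$. The analysis (\cref{lem:decomposition_is_possible_when_v_structure_is_absent,lemma:single-extra-incoming-is-okay}) then shows that any edge left unoriented after the first two phases must have small CMI in both relevant directions, so an arbitrary orientation costs at most $O(\eps')$ per vertex. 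Your DP cannot exploit testing in the same way: comparing $\hat s(v,S)$ across arbitrary candidate parent sets $S$ requires additive accuracy, not a one-sided test against zero, and that is exactly where the extra $1/\eta$ factor is lost.
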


We remark that our result holds when given only an upper bound $d$ on the true in-degree $d^*$.
In particular, our result yields a sample complexity upper bound of $\tilde{O}(n/\eps)$ for learning $O(1)$-polytrees with constant $|\Sigma|$ and $d$.
Note that this dependence on the dimension $n$ and the accuracy parameter $\eps$ is optimal, up to logarithmic factors: indeed, we establish in~\cref{thm:lower-bound-for-n-variables} an $\Omega(n/\eps)$ sample complexity lower bound for this question, even for $d=2$ and $|\Sigma|=2$.\footnote{We remark that~\cite[Theorem~7.6]{DBLP:journals/corr/abs-2011-04144} implies an $\Omega(\frac{n}{\eps}\log\frac{n}{\eps})$ sample complexity lower bound for the analogous question when the skeleton is unknown and $d=1$.}

We also state sufficient distributional conditions that enable recovery of the ground truth skeleton.
Informally, we require that the data processing inequality hold in a strong sense with respect to the edges in the skeleton.
Under these conditions, combining with our main result in \cref{theo:main}, we obtain a polynomial-time PAC algorithm to learn bounded-degree polytrees from samples.

\subsection{Overview of algorithm}

Our algorithm is designed with KL divergence in mind; see \cref{eq:mutual_information_decom_chou_liu}.
Since there are efficient algorithms for estimating the parameters of a Bayes net with in-degree $d$ once a close-enough graph $\hat{G}$ is recovered \citep{DBLP:journals/ml/Dasgupta97, BhattacharyyaGMV20}, it suffices to find a good approximation of the underlying DAG $G^*$.
For any distribution $\pd$ and DAG $G$, let $\pd_G$ be defined as the projection of $P$ unto a DAG $G$; see \cref{ssec:prob_dist_def} for the formal definition.
Then, for a distribution $\pd$ that is Markov with respect to a DAG $G^*$, the quality (in terms of KL divergence) of approximating $G^*$ with $G$ is $\kl(\pd, \pd_{G}) = \kl(\pd_{G^{*}}, \pd_{G}) = \sum_{v \in V} I(v; \pi_{G^{*}}(v)) - I(v; \pi_{G}(v))$, where $I(\cdot;\cdot)$ refers to {\em mutual information (MI)}, and $\pi_{G}(v)$ is the set of parents of $v$ in $G$.
When the true skeleton $\skel(G^{*})$ is given to us in advance, what remains is to orient each edge.
As such, given error parameter $\eps > 0$ and upper bound on in-degree $d$, the goal of our algorithm is to judiciously orient the edges of $\skel(G^{*})$ such that $\kl(\pd_{G^{*}}, \pd_{G})$ is at most $\eps$ while ensuring that every vertex has at most $d$ incoming edges.

Our algorithm relies on estimating MI and conditional MI (CMI) terms involving subsets of variables.
A na\"ive approach of estimating these terms additively would incur an unnecessary overhead on the sample complexity.
One of our technical contributions is to show that it suffices to have access to a \emph{tester} that can distinguish between a CMI term being $0$ or at least $\eta$, for some threshold $\eta >0$.
As shown in \cite{DBLP:journals/corr/abs-2011-04144}, the sample complexity for testing (see \cref{cor:CMI_tester}) is an $O(\eta)$ factor smaller than that for estimating the CMI up to additive error of $\pm$ $\eta/2$.

Our algorithm works in three phases.
In the first phase, we orient ``strong v-structures''; see \cref{sec:prelims} for a precise definition.
In the second phase, we locally check if an edge is ``forced'' to orient in a specific direction.
In the third phase, we orient the remaining unoriented edges as a 1-polytree.
Throughout the algorithm, we do \emph{not} unorient edges as we will be able to argue that any orientations performed by the first two phases are guaranteed to respect the orientations of the underlying causal graph from which we draw samples from\footnote{Note that for some distributions there could be more than one ground truth graph, e.g.\ when the Markov equivalence class has multiple graphs. In such situations, for analysis purposes, we are free to choose any graph that the underlying distribution is Markov with respect to as the ``ground truth''.}.

To explain the intuition behind the first two phases, consider the example of a path on 3 vertices $u - v - w$; see \cref{fig:running-example} for a slightly more sophisticated example.
If $u \to v \gets w$, then $u$ and $w$ are \emph{dependent} given $v$.
Otherwise, $u$ and $w$ are \emph{independent} given $v$ since $G$ is a polytree.
That is, one would expect $I(u;w \mid v)$ to be large if and only if $u \to v \gets w$ was a v-structure.
If it is indeed the case that $I(u;w \mid v)$ is ``large'', then this would be detected by our finite-sample CMI test (i.e.\ $u \to v \gets w$ was ``strong'') and so we orient $u \to v$ and $w \to v$ in Phase 1.
Now, after Phase 1, some edges in the graph would be partially oriented; say, we have $u \to v - w$ after Phase 1.
If $u \to v \to w$ was the ground truth, then $I(u;w \mid v) = 0$ and our tester will detect this term as ``small''.
If $u \to v \gets w$ was the ground truth, then $I(u;w) = 0$ and our tester will detect this term as ``small''.
Via the contrapositive of the previous two statements, if $I(u;w \mid v)$ or $I(u;w)$ is ``large'', then we know a specific orientation of the edge $v - w$.
We may also leave $v - w$ unoriented if neither term was ``large''.
Another form of ``forced orientation'' is due to the given upper bound $d$ on the number of parents any vertex can have: we should point all remaining incident unoriented edges \emph{away} from a vertex $v$ whenever $v$ already has $d$ incoming arcs.
For example, if $d = 1$, then we would immediately orient $v \to w$ if we observe $u \to v - w$ after Phase 1.
Given the above intuition, any edge that remains unoriented till the Phase 3 must have been ``flexible'' in the sense that it could be oriented either way.
In fact, we later show that ``not too much error'' has been incurred if the edge orientations from the final phase only increases the incoming degrees of any vertex by at most one.

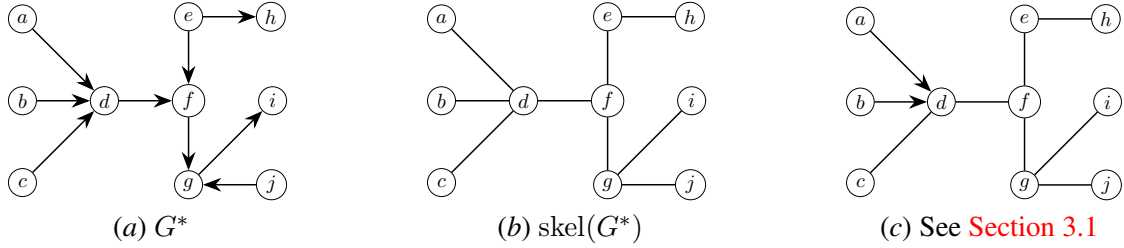
\begin{figure}[tb]
\centering
\begin{subfigure}[$G^*$][b]{
    \centering
    \resizebox{0.25\linewidth}{!}{\resizebox{0.7\linewidth}{!}{%
\begin{tikzpicture}
    \node[draw, circle, minimum size=15pt, inner sep=2pt] at (0,0) (d) {$d$};
    \node[draw, circle, minimum size=15pt, inner sep=2pt, left=of d] (b) {$b$};
    \node[draw, circle, minimum size=15pt, inner sep=2pt, above=of b] (a) {$a$};
    \node[draw, circle, minimum size=15pt, inner sep=2pt, below=of b] (c) {$c$};
    \node[draw, circle, minimum size=15pt, inner sep=2pt, right=of d] (f) {$f$};
    \node[draw, circle, minimum size=15pt, inner sep=2pt, above=of f] (e) {$e$};
    \node[draw, circle, minimum size=15pt, inner sep=2pt, below=of f] (g) {$g$};
    \node[draw, circle, minimum size=15pt, inner sep=2pt, right=of e] (h) {$h$};
    \node[draw, circle, minimum size=15pt, inner sep=2pt, right=of f] (i) {$i$};
    \node[draw, circle, minimum size=15pt, inner sep=2pt, right=of g] (j) {$j$};

    \draw[thick, -{Stealth[scale=1.5]}] (a) -- (d);
    \draw[thick, -{Stealth[scale=1.5]}] (b) -- (d);
    \draw[thick, -{Stealth[scale=1.5]}] (c) -- (d);
    \draw[thick, -{Stealth[scale=1.5]}] (d) -- (f);
    \draw[thick, -{Stealth[scale=1.5]}] (e) -- (f);
    \draw[thick, -{Stealth[scale=1.5]}] (f) -- (g);
    \draw[thick, -{Stealth[scale=1.5]}] (e) -- (h);
    \draw[thick, -{Stealth[scale=1.5]}] (g) -- (i);
    \draw[thick, -{Stealth[scale=1.5]}] (j) -- (g);
\end{tikzpicture}
}}
}
\end{subfigure}
\hfill
\begin{subfigure}[$\skel(G^*)$][b]{
    \centering
    \resizebox{0.25\linewidth}{!}{\resizebox{0.7\linewidth}{!}{%
\begin{tikzpicture}
    \node[draw, circle, minimum size=15pt, inner sep=2pt] at (0,0) (d) {$d$};
    \node[draw, circle, minimum size=15pt, inner sep=2pt, left=of d] (b) {$b$};
    \node[draw, circle, minimum size=15pt, inner sep=2pt, above=of b] (a) {$a$};
    \node[draw, circle, minimum size=15pt, inner sep=2pt, below=of b] (c) {$c$};
    \node[draw, circle, minimum size=15pt, inner sep=2pt, right=of d] (f) {$f$};
    \node[draw, circle, minimum size=15pt, inner sep=2pt, above=of f] (e) {$e$};
    \node[draw, circle, minimum size=15pt, inner sep=2pt, below=of f] (g) {$g$};
    \node[draw, circle, minimum size=15pt, inner sep=2pt, right=of e] (h) {$h$};
    \node[draw, circle, minimum size=15pt, inner sep=2pt, right=of f] (i) {$i$};
    \node[draw, circle, minimum size=15pt, inner sep=2pt, right=of g] (j) {$j$};

    \draw[thick] (a) -- (d);
    \draw[thick] (b) -- (d);
    \draw[thick] (c) -- (d);
    \draw[thick] (d) -- (f);
    \draw[thick] (e) -- (f);
    \draw[thick] (f) -- (g);
    \draw[thick] (e) -- (h);
    \draw[thick] (g) -- (i);
    \draw[thick] (j) -- (g);
\end{tikzpicture}
}}
}
\end{subfigure}
\hfill
\begin{subfigure}[See \cref{sec:algo}][b] {
    \centering
    \resizebox{0.25\linewidth}{!}{\resizebox{0.3\linewidth}{!}{%
\begin{tikzpicture}
    \node[draw, circle, minimum size=15pt, inner sep=2pt] at (0,0) (d) {$d$};
    \node[draw, circle, minimum size=15pt, inner sep=2pt, left=of d] (b) {$b$};
    \node[draw, circle, minimum size=15pt, inner sep=2pt, above=of b] (a) {$a$};
    \node[draw, circle, minimum size=15pt, inner sep=2pt, below=of b] (c) {$c$};
    \node[draw, circle, minimum size=15pt, inner sep=2pt, right=of d] (f) {$f$};
    \node[draw, circle, minimum size=15pt, inner sep=2pt, above=of f] (e) {$e$};
    \node[draw, circle, minimum size=15pt, inner sep=2pt, below=of f] (g) {$g$};
    \node[draw, circle, minimum size=15pt, inner sep=2pt, right=of e] (h) {$h$};
    \node[draw, circle, minimum size=15pt, inner sep=2pt, right=of f] (i) {$i$};
    \node[draw, circle, minimum size=15pt, inner sep=2pt, right=of g] (j) {$j$};

    \draw[thick, -{Stealth[scale=1.5]}] (a) -- (d);
    \draw[thick, -{Stealth[scale=1.5]}] (b) -- (d);
    \draw[thick] (c) -- (d);
    \draw[thick] (d) -- (f);
    \draw[thick] (e) -- (f);
    \draw[thick] (f) -- (g);
    \draw[thick] (e) -- (h);
    \draw[thick] (g) -- (i);
    \draw[thick] (j) -- (g);
\end{tikzpicture}
}}
}
\end{subfigure}
\caption{
3-polytree example where $I(a; b,c) = I(b; a,c) = I(c; a,b) = 0$ due to deg-3 v-structure centered at $d$.
By \cref{cor:CMI_tester}, $I(a;f \mid d) = 0$ implies $\hat{I}(a; f \mid d) \leq C \cdot \eps$, and so we will \emph{not} detect $a \to d \to f$ erroneously as a strong deg-2 v-structure $a \to d \gets f$.
}
\label{fig:running-example}
\end{figure}

\subsection{Overview of information-theoretic lower bound}

Our lower bound shows that $\Omega(n/\eps)$ samples are necessary, \emph{even when a known skeleton is provided}.
To show this, we first show that $\Omega(1/\eps)$ samples are required for the case where $n = 3$ by reducing the problem finding an $\eps$-close graph orientation to the problem of \emph{testing} whether samples are drawn from two given distributions.
To accomplish this, we designed a pair of distributions $P_1$ and $P_2$ and a pair of graphs $G_1$ and $G_2$ such that
(1) $P_1$ and $P_2$ have ``small'' squared Hellinger distance;
(2) $P_i$ has zero KL divergence if projected onto $G_i$;
(3) $P_i$ has ``large'' KL divergence if projected onto $G_j$ (for $j \neq i$).
Since the distributions have small squared Hellinger distance, say less than $\eps$, one needs $\Omega(1/\eps)$ samples to distinguish them, thus showing that $\Omega(1/\eps)$ samples are required for the case where $n = 3$.
To obtain a dependency on $n$, we construct $n/3$ independent copies of the above gadget, \`{a} la proof strategy of \citet[Theorem 7.6]{DBLP:journals/corr/abs-2011-04144}.

\subsection{Other related work}

Structure learning of Bayesian networks is an old problem in machine learning and statistics that has been intensively studied, e.g.\ see \citet[Chapter 18]{DBLP:books/daglib/0023091}.
Many early approaches required faithfulness, a condition which permits learning of the Markov equivalence class, e.g.\ \cite{spirtes1991algorithm, chickering2002optimal, friedman2013learning}.
Finite sample complexity of such algorithms assuming faithfulness-like conditions has also been studied, e.g.\ \cite{friedman1996sample}.
An alternate line of more modern work has considered various other distributional assumptions that permits for efficient learning, e.g.\ \cite{chickering2002finding, hoyer2008nonlinear, shimizu2006linear, peters2014identifiability,ghoshal2017learning,park2017learning,aragam2019globally}, with the final three also showing finite sample complexities.
Specifically for polytrees,
\cite{DBLP:journals/corr/abs-1304-2736} and \cite{DBLP:conf/aaai/GeigerPP90} studied recovery of the DAG for polytrees under the infinite sample regime.

\cite{DBLP:journals/jmlr/AbbeelKN06} studied the problem of efficiently learning a bounded degree factor graph.
Using their method and conversion scheme between factor graphs and Bayes nets, one could efficiently learn polytrees (Bayes nets) with bounded in- \emph{and} out-degrees.
However, as we only consider an upper bound on the in-degrees in this work, directly applying their method scales badly in sample complexity (exponential in the number of variables) for even the simple star-like\footnote{A center node $v$ with undirected edges to the rest of the $n - 1$ nodes; v's in-degree is $d$ and out-degree is $n-d-1$.} polytree.

More recently, \cite{gao2021efficient} studied the more general problem of learning Bayes nets, and their sufficient conditions simplified in the setting of polytrees.
Their approach emphasizes exact recovery, and thus the sample complexity has to depend on the minimum gap of some key mutual information terms.
In contrast, we allow the algorithm to make mistakes when certain mutual information terms are too small to detect for the given sample complexity budget and achieve a PAC-type guarantee.
As such, once the underlying skeleton is discovered, our sample complexity only depends on the $d, n, \eps$ and not on any distributional parameters.

There are also existing works on Bayes net learning with tight bounds in total variation distance with a focus on sample complexity (and not necessarily computational efficiency), e.g.\ \cite{CanonneDKS20}.
Meanwhile, \cite{AcharyaBDK18} consider the problem of learning (in TV distance) a bounded-degree causal Bayes net from interventions, assuming the underlying DAG is known.

\subsection{Outline of paper}

We introduce preliminary notions and related work in \cref{sec:prelims}.
\cref{sec:known-skeleton-and-max-in-degree} then shows how to recover a polytree close in KL divergence, assuming knowledge of the skeleton and maximum in-degree.
\cref{sec:skeleton-assumption} gives sufficient conditions to recover the underlying skeleton from samples, while \cref{sec:lower-bound} provides our sample complexity lower bound (which holds \emph{even when the skeleton is given}).
We conclude in \cref{sec:conclusion} with some open directions and defer some proofs to the appendix.

\section{Preliminaries and tools from previous work}
\label{sec:prelims}

\subsection{Preliminary notions and notation}

We write $[k]$ to mean $\{1, \ldots, k\}$ and the disjoint union as $\dot\cup$.
For any set $A$, let $|A|$ denote its size.
We use hats to denote estimated quantities, e.g., $\hat{I}(X;Y)$ will be the estimated mutual information of $I(X;Y)$. We employ the standard asymptotic notation $O(\cdot)$, $\Omega(\cdot)$ $\Theta(\cdot)$, and write $\tilde{O}(\cdot)$ to omit polylogarithmic factors. Throughout, we identify probability distributions over discrete sets with their probability mass functions (pmf). We use $d^*$ to denote the true maximum in-degree of the underlying polytree, and $d$ to denote an upper bound of $d^*$.
\smallskip

\subsection{Probability distribution definitions}
\label{ssec:prob_dist_def}

\begin{definition}[KL divergence and squared Hellinger distance]
For distributions $P, Q$ defined on the same discrete support $\mathcal{X}$, their KL divergence and squared Hellinger distances are defined as $\kl(P \;\|\; Q) = \sum_{x \in \cX} P(x) \log \frac{P(x)}{Q(x)}$ and $\sqhell(P,Q) = 1 - \sum_{x \in \cX} \sqrt{P(x) \cdot Q(x)}$ respectively.
\end{definition}

Abusing notation, for a distribution $\pd$ on variables $X = \{X_1, \ldots, X_n\}$, we write $\pd_S$ to mean the projection of $\pd$ to the subset of variables $S \subseteq X$ and $\pd_G$ to mean the projection of $\pd$ onto a graph $G$.
More specifically, we have $\pd_G(x_1, \ldots, x_n) = \prod_{x \in X} \pd(x \mid \pi_G(x))$ where $\pi_G(x)$ are the parents of $x$ in $G$.
Note that $P_G$ is the closest distribution\footnote{One can verify this using \cite[Lemma 3.3]{DBLP:journals/corr/abs-2011-04144}: For any distribution $Q$ defined on graph $G$, we have $\kl(P \;\|\; Q) - \kl(P \;\|\; P_G) = \sum_{v \in V} P(\pi_G(v)) \cdot \kl(P(v \mid \pi_G(v)) \;\|\; Q(v \mid \pi_G(v))) \geq 0$.} on $G$ to $P$ in $\kl$, i.e.\ $P_G = \operatorname{argmin}_{Q \in G} \kl(P \;\|\; Q)$.
By \cite{DBLP:journals/tit/ChowL68}, we also know that
\begin{equation}
\label{eq:mutual_information_decom_chou_liu}
\kl(\pd, \pd_G)
= -\sum_{i=1}^n I(X_i; \pi_G(X_i)) - H(\pd_X) + \sum_{i=1}^n H(\pd_{X_i}) \;, 
\end{equation}
where $H$ is the entropy function.
Note that only the first term depends on the graph structure of $G$.

Our goal is to obtain approximately good graph $\hat{G}$ of $P$ in the sense of $\kl(\pd \;\|\; \pd_{\hat{G}}) \leq \eps$.
With $\hat{G}$, one can employ sample and computational-efficient parameter learning algorithms to output the final hypothesis $\hat{\pd}$.
While one can always model the graph $G$ as a clique (a fully connected DAG) in order to satisfy the $\varepsilon$-close requirement, it takes $\exp(n)$ number of samples in the parameter learning step.
Hence, we are interested in obtaining a graph $G$ that is both $\varepsilon$-close and facilitates efficient learning algorithms.
For instance, if $\pd$ was defined on a Bayes net with max in-degree $d$, then we want $G$ to also be a Bayes net with max in-degree $d$.
This is always possible in $\exp(n)$ time by formulating the search of $G$ as an optimization problem that maximizes the summation of mutual information term (first term in \eqref{eq:mutual_information_decom_chou_liu}); see \citep{hoffgen1993learning} and why it is NP-hard in general.
As each mutual information term can be well-estimated, an $\eps$-close graph could be obtained by optimizing over the empirical mutual information scores.
In the case of tree-structured distributions, this can be efficiently solved by using any maximum spanning tree algorithm.
If any valid topological ordering of the target Bayes net $P$ is present, then an efficient greedy approach is able to solve the problem.

\begin{definition}[(Conditional) Mutual Information]
Given a distribution $\pd$, the mutual information (MI) of two random variables $X$ and $Y$, supported on $\cX$ and $\cY$ respectively, is defined as
\(
I(X;Y)
= \sum_{x \in \cX, y \in \cY} \pd(x,y) \cdot \log\left(\frac{\pd(x,y)}{\pd(x) \cdot \pd(y)}\right).
\)
Conditioning on a third random variable $Z$, supported on $\cZ$, the conditional MI is defined as:
\(
I(X;Y \mid Z)
= \sum_{x \in \cX, y \in \cY, z \in \cZ} \pd(x,y,z) \cdot \log\left(\frac{\pd(x,y,z) \cdot \pd(z)}{\pd(x,z) \cdot \pd(y,z)}\right).
\)
\end{definition}

We will employ the following result (\cref{cor:CMI_tester}) about conditional mutual information testers, adapted from Theorem 1.3 of \cite{DBLP:journals/corr/abs-2011-04144}; see \cref{sec:adapting-known-tester-result} for derivation details and \cref{sec:appendix-explicit-constant} for a derivation of a constant $C$ that works.

\begin{restatable}[Conditional MI tester]{mycorollary}{CMItester}
\label{cor:CMI_tester}
Fix any $\eps > 0$.
Let $(X, Y, Z)$ be three random variables over $\Sigma_X, \Sigma_Y, \Sigma_Z$ respectively.
Given the empirical distribution $(\hat{X}, \hat{Y}, \hat{Z})$ over a size $N$ sample of $(X, Y, Z)$, there exists a universal constant $0 < C < 1$ so that for any
\[
N
\geq \Theta \left( \frac{|\Sigma_X| \cdot |\Sigma_Y| \cdot |\Sigma_Z|}{\eps} \cdot \log \frac{|\Sigma_X| \cdot |\Sigma_Y| \cdot |\Sigma_Z|}{\delta} \cdot \log \frac{|\Sigma_X| \cdot |\Sigma_Y| \cdot |\Sigma_Z| \cdot \log(1 / \delta)}{\eps} \right),
\]
the following statements hold with probability $1 - \delta$:\\
(1) If $I(X; Y \mid Z) = 0$, then $\hat{I}(X; Y \mid Z) < C \cdot \eps$.\\
(2) If $\hat{I}(X; Y \mid Z) \leq C \cdot \eps$, then $I(X; Y \mid Z) < \eps$.\\
Unconditional statements involving $I(X; Y)$ and $\hat{I}(X; Y)$ hold similarly by setting $|\Sigma_Z| = 1$.
\end{restatable}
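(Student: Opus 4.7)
The plan is to derive the two claims from Theorem~1.3 of \cite{DBLP:journals/corr/abs-2011-04144}, which supplies a conditional mutual information tester distinguishing $I(X;Y\mid Z)=0$ from $I(X;Y\mid Z)\geq\eps$ with probability $1-\delta$ at the stated sample size. The corollary simply reformulates that tester in the more convenient language of the empirical plug-in estimator $\hat I(X;Y\mid Z)$ with a single threshold $C\eps$, which is the form invoked as a black box throughout \cref{sec:known-skeleton-and-max-in-degree}.

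First I would extract the quantitative concentration behind the tester. For $N$ of the stated magnitude, one should show that with probability $1-\delta$ the empirical CMI $\hat I(X;Y\mid Z)$ lies within an additive $c\eps$ of its expectation, and moreover that the bias $\E[\hat I(X;Y\mid Z)] - I(X;Y\mid Z)$ of the plug-in estimator is itself bounded by $c\eps$, for some absolute constant $c<1/3$. The iterated-logarithm factor inside the sample bound is precisely what is needed to shrink the bias below $c\eps$ at the required confidence level (the bias of the plug-in CMI scales like $|\Sigma_X||\Sigma_Y||\Sigma_Z|/N$, up to logarithmic terms), while the concentration of $\hat I$ around its mean is a standard McDiarmid/Bernstein calculation on bounded-difference functions of the empirical log-likelihood. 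Statement (1) is then immediate by specializing to $I(X;Y\mid Z)=0$: the empirical CMI equals bias plus fluctuation, each $\leq c\eps$, so $\hat I \leq 2c\eps =: C\eps$. Statement (2) is the contrapositive in the other regime: if $I(X;Y\mid Z)\geq\eps$, then $\hat I \geq I - c\eps \geq (1-c)\eps > C\eps$ as long as $C<1-c$, so $\hat I\leq C\eps$ forces $I<\eps$.

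The main obstacle will be arranging the constants so that a single $C\in(0,1)$ certifies both statements on the same high-probability event, i.e., that one can take $C=2c$ with $2c<1-c$; this reduces to $c<1/3$, which in turn is a bookkeeping exercise through the slack parameters of the tester in \cite{DBLP:journals/corr/abs-2011-04144}. Once the doubly-logarithmic factor inside $N$ has absorbed all the bias dependencies on the alphabet sizes and on $1/\delta$, $c$ can be driven below any desired absolute threshold at the cost of an $O(1)$ multiplicative factor in $N$, which is still compatible with the stated sample size. The explicit computation of such a $C$, together with the direct specialization to the unconditional case by setting $|\Sigma_Z|=1$, is carried out in \cref{sec:adapting-known-tester-result,sec:appendix-explicit-constant}.
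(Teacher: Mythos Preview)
Your contrapositive strategy for statement~(2) is the right idea and matches the paper, but the concentration inequality you intend to feed into it does not hold at the stated sample size. You propose to show that $|\hat I - I|\le c\eps$ (bias plus McDiarmid/Bernstein fluctuation each bounded by $c\eps$) when $N=\tilde\Theta(|\Sigma_X||\Sigma_Y||\Sigma_Z|/\eps)$. But additive estimation of (conditional) mutual information to accuracy $\Theta(\eps)$ requires $N=\tilde\Theta(1/\eps^2)$, not $\tilde\Theta(1/\eps)$; the paper even flags this explicitly in the overview, noting that testing is an $O(\eps)$ factor cheaper in samples than additive estimation. A McDiarmid argument gives fluctuations of order $\tilde O(1/\sqrt{N})$, which at $N\sim 1/\eps$ is $\tilde O(\sqrt{\eps})\gg\eps$, so the step ``$\hat I\ge I-c\eps$'' fails.

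What the paper actually uses is not an additive bound but the \emph{multiplicative} lower bound from \cite[Theorem~1.3]{DBLP:journals/corr/abs-2011-04144}: if $I(X;Y\mid Z)\ge\eps$ then $\hat I(X;Y\mid Z)>C\cdot I(X;Y\mid Z)$. Statement~(2) is then exactly the contrapositive of this, with no additive control needed. Statement~(1) is obtained by observing that item~1 of that theorem (``if $I=0$ then $\hat I<\eps$'') can be sharpened to ``$\hat I<C\eps$'' by paying a constant factor $1/C$ in the sample size, which the $\Theta(\cdot)$ already absorbs. So the corollary is a direct repackaging of the two items of Theorem~1.3, not a re-derivation of concentration from scratch.
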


Using the contrapositive of the first statement of \cref{cor:CMI_tester} and non-negativity of conditional mutual information, one can also see that if $\hat{I}(X; Y \mid Z) \geq C \cdot \eps$, then $I(X; Y \mid Z) > 0$.
 
\subsection{Graphical notions}

Let $G = (V,E)$ be a graph on $|V| = n$ vertices and $|E|$ edges where adjacencies are denoted with dashes, e.g.\ $u - v$.
For any vertex $v \in V$, we use $N(v) \subseteq V \setminus \{v\}$ to denote the neighbors of $v$ and $d(v) = |N(v)|$ to denote $v$'s degree.
An undirected cycle is a sequence of $k \geq 3$ vertices such that $v_1 - v_2 - \ldots - v_k - v_1$.
For any subset $E' \subseteq E$ of edges, we say that the graph $H = (V,E')$ is the edge-induced subgraph of $G$ with respect to $E'$.

We use arrows to denote directed edges, e.g.\ $u \to v$, use $\pi(v)$ to denote the parents of $v$, and use $d^{in}(v)$ to denote $v$'s incoming degree.
An interesting directed subgraph on three vertices is the \emph{v-structure}: $u \to v \gets w$, $u \notbackslash w$, and $v$ is called the center of the v-structure.

In this work, we study a generalized version of v-structures (\emph{deg-$\ell$ v-structure}) where the center has $\ell \geq 2$ parents $u_1, u_2, \ldots, u_{\ell}$.
We say that a deg-$\ell$ v-structure is said to be $\eps$-strong if we can reliably identify them in the finite sample regime.

\begin{definition}[$\varepsilon$-strong deg-$\ell$ v-structure]
Let $0 < C < 1$ be the universal constant appearing in \cref{cor:CMI_tester}.
A deg-$\ell$ v-structure is a subgraph on $\ell+1$ nodes $v, u_1, \ldots, u_\ell$ such that:\\
1. \textbf{deg-$\ell$ v-structure}: $v \gets u_k$ for all $k \in [\ell]$, and $u_k \notbackslash u_{k'}$ for all $k, k' \in [\ell]$ and $k \neq k'$\\
2. \textbf{$\eps$-strong}: $I(u_k ; \{ u_1, u_2, \ldots, u_{\ell} \} \setminus u_k \mid v) \geq C \cdot \eps$ for all $k \in [\ell]$
\end{definition}
Meek rules \citep{meek1995} are a set of 4 edge orientation rules that are sound and complete with respect to any partially oriented graph that can be fully oriented in an acyclic fashion.
Soundness means that any orientation due to rule invocations are correct while completeness means that any orientation that could theoretically be recovered would be recovered.
So, one can always repeatedly apply Meek rules till a fixed point to maximize the number of oriented arcs.
One particular orientation rule (Meek $R1$)\footnote{The other 3 rules involve undirected cycles in the graph and are thus irrelevant in our context where the underlying undirected graph is a tree.} orients $b \to c$ whenever a partially oriented graph has the configuration $a \to b - c$ and $a \notbackslash c$ so as to avoid forming a new v-structure of the form $a \to b \gets c$.
In the same spirit, we define Meek $R1(d)$ to orient all incident unoriented edges away from $v$ whenever $v$ already has $d$ parents in a partially oriented graph.

The \emph{skeleton} $\skel(G)$ of a graph $G$ refers to the resulting undirected graph after unorienting all edges in $G$, e.g.\ see \cref{fig:running-example}.
A graph $G$ is a \emph{polytree} if $\skel(G)$ is a forest.
For $d \geq 1$, a polytree $G$ is a $d$-polytree if all vertices in $G$ have at most $d$ parents.
When we \emph{freely orient} a forest, we pick arbitrary root nodes in the connected components and orient to form a 1-polytree.
In polytrees, there is a unique path in $skel(G)$ between any two vertices in the graph.
As such, the ancestors of a vertex $v$ are mutually independent, and typically become mutually dependent when $v$ (or any of $v$'s descendants) are being conditioned over.
\section{Recovering a good orientation given a skeleton and degree bound}
\label{sec:known-skeleton-and-max-in-degree}

In this section, we describe and analyze an algorithm for estimating a probability distribution $\pd$ that is defined on a $d^*$-polytree $G^*$.
We assume that we are given $\skel(G^*)$ and $d$ as input, where $d^* \leq d$.

Note that for some distributions there could be more than one ground truth graph, e.g.\ when the Markov equivalence class has multiple graphs.
In such situations, for analysis purposes, we are free to choose any graph that $\pd$ is Markov with respect to.
As the mutual information scores\footnote{The mutual information score is the sum of the mutual information terms as in \cref{eq:mutual_information_decom_chou_liu}.} are the same for any graphs that $\pd$ is Markov with respect to, the choice of $G^*$ does not matter here.

\subsection{Algorithm}
\label{sec:algo}

At any point in the algorithm, let us define the following sets.
Let $N(v)$ be the set of all neighbors of $v$ in $\skel(G^*)$.
Let $N^{\mathrm{in}}(v) \subseteq N(v)$ be the current set of incoming neighbors of $v$.
Let $N^{\mathrm{out}}(v) \subseteq N(v)$ be the current set of outgoing neighbors of $v$.
Let $N^{\mathrm{un}}(v) \subseteq N(v)$ be the current set of unoriented neighbors of $v$.
That is,
$
N(v) = N^{\mathrm{in}}(v) \;\dot\cup\; N^{\mathrm{out}}(v) \;\dot\cup\; N^{\mathrm{un}}(v)
$.

\begin{algorithm}[tb]
\SetAlgoNoEnd
\setcounter{AlgoLine}{0}
\caption{Algorithm for known skeleton and max in-degree.}
\label{alg:known-skeleton-and-max-in-degree}
\KwData{Skeleton $\skel(G^*) = (V,E)$, max in-degree $d$, threshold $\eps > 0$, universal constant $C$}
\KwResult{A complete orientation of $\skel(G^*)$}

Run Phase 1: Orient strong v-structures (\cref{alg:phase1}) \Comment*[f]{$\cO(n^{d+1})$ time}

Run Phase 2: Local search and Meek $R1(d)$ (\cref{alg:phase2}) \Comment*[f]{$\cO(n^3)$ time}

Run Phase 3: Freely orient remaining unoriented edges (\cref{alg:phase3}) \Comment*[f]{$\cO(n)$ time via DFS}

\KwRet{$\hat{G}$}
\end{algorithm}

Our algorithm has three phases.
In Phase 1, we orient strong v-structures.
In Phase 2, we locally check if an edge is forced to orient one way or another to avoid incurring too much error.\footnote{Note that within the for loop from Line 7 of \cref{alg:phase2}, neither condition may hold, in which case we do not orient anything, hence the ``missing'' else.}
In Phase 3, we orient the remaining unoriented edges as a 1-polytree.
Since the remaining edges were not forced, we may orient the remaining edges in an arbitrary direction (while not incurring ``too much error'') as long as the final incoming degrees of any vertex does not increase by more than 1.
Subroutine \textsc{Orient} (\cref{alg:orient}) performs the necessary updates when we orient $u - v$ to $u \to v$.

\begin{algorithm}[tb]
\SetAlgoNoEnd
\setcounter{AlgoLine}{0}
\caption{\textsc{Orient}: Subroutine to orient edges}
\label{alg:orient}
\KwData{Vertices $u$ and $v$ where $u - v$ is currently unoriented}

Orient $u - v$ as $u \to v$.

Update $N^{\mathrm{in}}(v)$ to $N^{\mathrm{in}}(v) \cup \{u\}$ and $N^{\mathrm{un}}(v)$ to $N^{\mathrm{un}}(v) \setminus \{u\}$.

Update $N^{\mathrm{out}}(u)$ to $N^{\mathrm{out}}(u) \cup \{v\}$ and $N^{\mathrm{un}}(u)$ to $N^{\mathrm{un}}(u) \setminus \{v\}$.
\end{algorithm}

\begin{algorithm}[tb]
\SetAlgoNoEnd
\setcounter{AlgoLine}{0}
\caption{Phase 1: Orient strong v-structures}
\label{alg:phase1}
\KwData{Skeleton $\skel(G^*) = (V,E)$, max in-degree $d$, threshold $\eps > 0$, universal constant $C$}

$\gamma \gets d$

\While{$\gamma \geq 2$}
{
    \For(\Comment*[f]{Arbitrary order}){$v \in V$}
    {

        \For(\Comment*[f]{$\cN_{\gamma} \subseteq 2^{N(v)}$ are the $\gamma$ neighbors of $v$; $|\cN_{\gamma}| = \binom{|N(v)|}{\gamma}$}){$T \in \cN_{\gamma}$}
        {
            \If{\label{alg:line:orient_condition}$|T \cup N^{\mathrm{in}}(v)| \leq d$ \textbf{and} $\hat{I}(u; T \setminus \{u\} \mid v) \geq C \cdot \eps$, $\forall u \in T$}
            {
                \For(\Comment*[f]{Strong deg-$\gamma$ v-structure}){$u \in T$} {
                    \textsc{Orient}($u$, $v$)
                }
            }
        }
    }
    $\gamma \gets \gamma-1$ \Comment*[f]{Decrement degree bound}
}
\end{algorithm}

\textbf{Example}$\quad$
Suppose we have the partially oriented graph \cref{fig:running-example}(c) after Phase 1.
Since $N^{\mathrm{in}}(d) = \{a,b\}$, we will check the edge orientations of $c - d$ and $f - d$.
Since $I(f; \{a,b\} \mid d) = 0$, we will have $\hat{I}(f; \{a,b\} \mid d) \leq \eps$, so we will \emph{not} erroneously orient $f \to d$.
Meanwhile, $I(c; \{a,b\}) = 0$, we will have $\hat{I}(c; \{a,b\}) \leq \eps$, so we will \emph{not} erroneously orient $d \to c$.

\begin{algorithm}[tb]
\SetAlgoNoEnd
\setcounter{AlgoLine}{0}
\caption{Phase 2: Local search and Meek $R1(d)$}
\label{alg:phase2}
\KwData{Skeleton $\skel(G^*) = (V,E)$, max in-degree $d$, threshold $\eps > 0$, universal constant $C$}

\Do(\Comment*[f]{$\cO(n)$ iterations, $\cO(n^2)$ time per iteration}){new edges are being oriented} {
    \If(\Comment*[f]{Meek $R1(d)$}){$\exists v \in V$ such that $|N^{\mathrm{in}}(v)| = d$ and $N^{\mathrm{un}}(v) \neq \emptyset$} {
        Orient all unoriented arcs \emph{away} from $v$

        Update $N^{\mathrm{out}}(v) \gets N^{\mathrm{out}}(v) \cup N^{\mathrm{un}}(v)$; $N^{\mathrm{un}}(v) \gets \emptyset$
    }
    
    \For{every node $v \in V$} {
        \If{$1 \leq |N^{\mathrm{in}}(v)| < d$} {
            \For{every $u \in N^{\mathrm{un}}(v)$} {
                \lIf{$\hat{I}(u; N^{\mathrm{in}}(v) \mid v) > C \cdot \eps$} {
                    \textsc{Orient}($u$, $v$)
                }
                \lElseIf{$\hat{I}(u; N^{\mathrm{in}}(v)) > C \cdot \eps$} {
                    \textsc{Orient}($v$, $u$)
                }
            }
        }
    }
}
\end{algorithm}

\begin{algorithm}[tb]
\SetAlgoNoEnd
\setcounter{AlgoLine}{0}
\caption{Phase 3: Freely orient remaining unoriented edges}
\label{alg:phase3}
\KwData{Skeleton $\skel(G^*) = (V,E)$, max in-degree $d$, threshold $\eps > 0$, universal constant $C$}

Let $H$ be the forest induced by the remaining unoriented edges.

Freely orient $H$ as a 1-polytree (i.e.\ maximum in-degree in $H$ is 1).

Let $\hat{G}$ be the combination of the oriented $H$ and the previously oriented arcs.

\KwRet{$\hat{G}$}
\end{algorithm}

\subsection{Analysis}

In our subsequent analysis, we rely on the conclusions of \cref{cor:CMI_tester} with error tolerance $\eps' = \frac{\eps}{2 n \cdot (d + 1)}$.
Via a union bound over $\cO(n^{d + 1})$ events, \cref{lem:whp} ensures that \emph{all} our (conditional) MI tests in \cref{alg:known-skeleton-and-max-in-degree} will behave as expected with probability at least $1-\delta$, with sufficient samples.

\begin{mylemma}
\label{lem:whp}
Suppose all variables in the Bayesian network have alphabet $\Sigma$, for $|\Sigma| \geq 2$.
For $\eps' > 0$, $\cO(n^{d+1})$ statements of the following forms all simultaneously succeed with probability at least $1 - \delta$:\\
\hspace{1em}(1) If $I(\bX; \bY \mid Z) = 0$, then $\hat{I}(\bX; \bY \mid Z) < C \cdot \eps'$,\\
\hspace{1em}(2) If $\hat{I}(\bX; \bY \mid Z) \leq C \cdot \eps'$, then $I(\bX; \bY \mid Z) < \eps'$.\\
with $m$ empirical samples, where $Z \in V \cup \{\emptyset\}$, $\bX, \bY \subseteq V \setminus \{Z\}$, $|\bX \;\dot\cup\; \bY| \leq d$, and
\[
m
\in \cO \left( \frac{|\Sigma|^{d + 1}}{\eps'} \cdot \log \frac{|\Sigma|^{d + 1} \cdot n^{d}}{\delta} \cdot \log \frac{|\Sigma|^{d + 1} \cdot \log(n^{d} / \delta)}{\eps'} \right)
\]
\end{mylemma}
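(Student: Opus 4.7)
The plan is a straightforward union bound over all (conditional) mutual information tests that can possibly be invoked throughout \cref{alg:known-skeleton-and-max-in-degree}, instantiating \cref{cor:CMI_tester} on each.

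First, I would enumerate and count the relevant tests. A test is specified by a triple $(\bX, \bY, Z)$ with $Z \in V \cup \{\emptyset\}$ and $\bX, \bY \subseteq V\setminus\{Z\}$ with $|\bX \;\dot\cup\; \bY|\leq d$. Since the joint collection $\bX \cup \bY$ has size at most $d$, the number of such triples is bounded by $(n+1) \cdot \binom{n}{\le d} \cdot 2^d = \cO(n^{d+1})$ (the $2^d$ factor counts how to split the $\leq d$ chosen vertices between $\bX$ and $\bY$, and $n+1$ accounts for the choice of $Z$ including the unconditional case). Call this count $M = \cO(n^{d+1})$.

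Next, I would invoke \cref{cor:CMI_tester} for each triple with target error parameter $\eps'$ and per-test failure probability $\delta'' := \delta / M$. Because each variable has alphabet size at most $|\Sigma|$ and $|\bX \;\dot\cup\; \bY \cup \{Z\}|\leq d+1$, the product $|\Sigma_{\bX}|\cdot |\Sigma_{\bY}|\cdot |\Sigma_Z|$ is bounded by $|\Sigma|^{d+1}$. Substituting $|\Sigma|^{d+1}$ and $\delta''$ into the bound of \cref{cor:CMI_tester}, the per-test sample requirement becomes
\[
\cO\!\left( \frac{|\Sigma|^{d+1}}{\eps'} \cdot \log \frac{|\Sigma|^{d+1}}{\delta''} \cdot \log \frac{|\Sigma|^{d+1} \log(1/\delta'')}{\eps'} \right).
\]
Plugging $\delta'' = \delta/M = \Theta(\delta/n^{d+1})$ collapses $\log(|\Sigma|^{d+1}/\delta'')$ to $\log(|\Sigma|^{d+1}\cdot n^{d+1}/\delta) = \Theta(\log(|\Sigma|^{d+1}\cdot n^{d}/\delta))$ (absorbing the extra $\log n$ factor by adjusting constants), matching the bound in the statement up to absolute constants; similarly $\log(1/\delta'') = \Theta(\log(n^d/\delta))$.

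Finally, I would apply a union bound over all $M$ tests: the probability that any single test violates one of the two guarantees of \cref{cor:CMI_tester} is at most $\delta''$, so the probability that any of the $M$ tests fails is at most $M \delta'' = \delta$. Hence with probability at least $1-\delta$, both implications (1) and (2) hold simultaneously across all triples, as required. The only mildly delicate point, and the closest thing to an obstacle, is the bookkeeping in the enumeration step to verify that $M = \cO(n^{d+1})$ (rather than $n^{2d}$ or some other polynomial) so that the $\log(n^d/\delta)$ expression in the stated bound is faithful; this is handled by the observation above that $\bX$ and $\bY$ are drawn from a common set of size at most $d$ and then partitioned.
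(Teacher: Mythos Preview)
Your proposal is correct and follows essentially the same approach as the paper: invoke \cref{cor:CMI_tester} on each test with per-test failure probability $\delta/M$ and union bound over the $M = \cO(n^{d+1})$ tests. The paper's own proof is a one-line version of exactly this argument, and your added detail on the enumeration (choosing $Z$, then a size-$\leq d$ subset, then a bipartition) correctly justifies the $\cO(n^{d+1})$ count.
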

\begin{proof}
Set $\eps'=\frac{\eps}{2 n \cdot (d + 1)}$, then use \cref{cor:CMI_tester} and apply union bound over $\cO(n^{d + 1})$ tests.
\end{proof}

In the remaining of our analysis, we will analyze under the assumption that all our $\cO(n^{d+1})$ tests are correct with the required tolerance level.
Full proofs are deferred to \cref{sec:appx-algo-analysis}.
\medskip

Recall that $\pi(v)$ is the set of true parents of $v$ in $G^*$.
Let $H$ be the forest induced by the remaining unoriented edges after Phase 2 and $\hat{G}$ be returned graph of \cref{alg:known-skeleton-and-max-in-degree}.
Let us denote the final $N^{\mathrm{in}}(v)$ as $\pi^{\mathrm{in}}(v)$ at the end of Phase 2, just before freely orienting, i.e.\ the vertices pointing into $v$ in $\hat{G} \setminus H$.
Then, $\pi^{\mathrm{un}}(v) = \pi(v) \setminus \pi^{\mathrm{in}}(v)$ is the set of ground truth parents that are not identified in both Phase 1 and Phase 2.
\cref{lem:oriented-arcs-in-Phase1-are-ground-truth-orientations} argues that  the algorithm does not make mistakes for orientations in $\hat{G} \setminus H$, so all edges in $\pi^{\mathrm{un}}(v)$ will be unoriented at the end of Phase 2.

\begin{restatable}{mylemma}{orientedarcsinPhaseonearegroundtruthorientations}
\label{lem:oriented-arcs-in-Phase1-are-ground-truth-orientations}
Any oriented arc in $\hat{G} \setminus H$ is a ground truth orientation.
That is, any vertex parent set in $\hat{G} \setminus H$ is a subset of $\pi(v)$, i.e.\ $\pi^{\mathrm{in}}(v) \subseteq \pi(v)$, and $N^{\mathrm{in}}(v)$ at any time during the algorithm will have $N^{\mathrm{in}}(v) \subseteq \pi^{\mathrm{in}}(v)$.
\end{restatable}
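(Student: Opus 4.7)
I proceed by induction on the total number of arcs oriented during Phases~1 and~2, maintaining the stronger invariant that $N^{\mathrm{in}}(v) \subseteq \pi(v)$ for every $v \in V$ at every step of those two phases. Since $\pi^{\mathrm{in}}(v)$ is defined as $N^{\mathrm{in}}(v)$ at the end of Phase~2 and $N^{\mathrm{in}}(v)$ grows monotonically (we never un-orient), the invariant immediately yields both $\pi^{\mathrm{in}}(v) \subseteq \pi(v)$ and $N^{\mathrm{in}}(v) \subseteq \pi^{\mathrm{in}}(v)$ throughout. The base case is trivial since $N^{\mathrm{in}}(v) = \emptyset$ initially; for the inductive step I case on the rule that fires, condition on the high-probability event of \cref{lem:whp} so that every CMI test obeys both conclusions of \cref{cor:CMI_tester}, and exploit the following d-separation fact for polytrees: for any two distinct neighbors $u, u'$ of $v$, the unique skeleton path between them is $u - v - u'$, so $u \perp u'$ unconditionally iff both of $u, u'$ are parents of $v$ (making $v$ an unconditioned collider), while $u \perp u' \mid v$ iff at least one of $u, u'$ is a child of $v$ (making $v$ a non-collider).

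\textbf{Phase~1 orientations.} Whenever Phase~1 orients $u \to v$ for every $u \in T$, the test condition combined with the contrapositive of \cref{cor:CMI_tester}(1) guarantees $I(u; T \setminus \{u\} \mid v) > 0$ for every $u \in T$. For contradiction, suppose some $u \in T$ is a child of $v$ in $G^*$. Then for each $u' \in T \setminus \{u\}$, the edge on the $u$-side of the skeleton path $u - v - u'$ is $v \to u$, so $v$ is a non-collider on that path regardless of how $v - u'$ is oriented, yielding $u \perp u' \mid v$. Because the skeleton is a forest, every path between $u$ and $T \setminus \{u\}$ traverses $v$, so these pairwise d-separations combine into $u \perp T \setminus \{u\} \mid v$, contradicting $I(u; T \setminus \{u\} \mid v) > 0$. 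Hence every $u \in T$ is truly a parent and the invariant is preserved.

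\textbf{Phase~2 orientations.} Meek $R1(d)$ fires only when $|N^{\mathrm{in}}(v)| = d$, which together with the invariant and $|\pi(v)| \leq d^* \leq d$ forces $N^{\mathrm{in}}(v) = \pi(v)$, so every remaining $u \in N^{\mathrm{un}}(v)$ is necessarily a child and $v \to u$ is correct. For the local search at $v$ with $1 \leq |N^{\mathrm{in}}(v)| < d$ and candidate $u \in N^{\mathrm{un}}(v)$: if $\hat{I}(u; N^{\mathrm{in}}(v) \mid v) > C \cdot \eps'$, then $I(u; N^{\mathrm{in}}(v) \mid v) > 0$, and were $u$ a child, the Phase~1 non-collider argument applied to $u$ and each $u' \in N^{\mathrm{in}}(v) \subseteq \pi(v)$ would give $u \perp N^{\mathrm{in}}(v) \mid v$, a contradiction; so $u$ is a parent and $u \to v$ is correct. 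If instead $\hat{I}(u; N^{\mathrm{in}}(v)) > C \cdot \eps'$ fires, then $I(u; N^{\mathrm{in}}(v)) > 0$, and were $u$ a parent, every $u' \in N^{\mathrm{in}}(v) \subseteq \pi(v)$ would be a fellow parent of $v$, making $v$ an unconditioned collider on each path $u \to v \gets u'$, whence $u \perp N^{\mathrm{in}}(v)$ unconditionally, a contradiction; so $u$ is a child and $v \to u$ is correct. In every subcase the invariant is preserved.

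\textbf{Main obstacle.} The one subtle step is the passage from pairwise d-separations to the joint statement $u \perp T \setminus \{u\} \mid v$ (and its unconditional analogue). This is where the polytree hypothesis is essential: every skeleton path between $u$ and any neighbor of $v$ must traverse $v$, so blocking at $v$ on each pairwise path suffices to d-separate $u$ from the entire set. Once this is in hand, the rest is bookkeeping: combining \cref{lem:whp} with the case analysis above closes the induction cleanly.
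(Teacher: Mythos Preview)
Your proposal is correct and follows essentially the same approach as the paper: an induction over orientations split into the three cases (Phase~1 strong v-structures, Phase~2 local checks, Phase~2 Meek $R1(d)$), using the contrapositive of \cref{cor:CMI_tester}(1) to convert $\hat I \geq C\eps'$ into $I > 0$ and then deriving a contradiction from the polytree d-separation structure. Your write-up is in fact slightly more explicit than the paper's, which simply asserts statements like ``this would imply $I(u_0; S\setminus\{u_0\}\mid v)=0$'' without spelling out the d-separation argument; your added justification that set-level d-separation follows because every skeleton path from $u$ to $T\setminus\{u\}$ traverses $v$ is correct and worth keeping.
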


Let $\hat{\pi}(v)$ be the proposed parents of $v$ output by \cref{alg:known-skeleton-and-max-in-degree}.
The KL divergence between the true distribution and our output distribution is $\sum_{v \in V} I(v; \pi(v)) - \sum_{v \in V} I(v; \hat{\pi}(v))$ as the structure independent terms will cancel out.
To get a bound on the KL divergence, we will upper bound $\sum_{v \in V} I(v; \pi(v))$ and lower bound $\sum_{v \in V} I(v; \hat{\pi}(v))$.
To upper bound $I(v; \pi(v))$ in terms of $\pi^{\mathrm{in}}(v) \subseteq \pi(v)$ and $I(v;u)$ for $u \in \pi^{\mathrm{un}}(v)$, we use \cref{lem:decomposition_sequence_exists_for_missing_v_structure} which relies on repeated applications of \cref{lem:decomposition_is_possible_when_v_structure_is_absent}.
To lower bound $\sum_{v \in V} I(v; \hat{\pi}(v))$, we use \cref{lemma:single-extra-incoming-is-okay}.

\begin{restatable}{mylemma}{decompositionispossiblewhenvstructureisabsent}
\label{lem:decomposition_is_possible_when_v_structure_is_absent}
Fix any vertex $v$, any $S \subseteq \pi^{\mathrm{un}}(v)$, and any $S' \subseteq \pi^{\mathrm{in}}(v)$.
If $S \neq \emptyset$, then there exists a vertex $u \in S \cup S'$ with
\begin{equation}
\label{eq:decomposition_exists_if_v_structure_absent}
I(v ; S \cup S') \leq I(v ; S \cup S' \setminus \{u\}) + I(v ; u) + \eps \;.
\end{equation}
\end{restatable}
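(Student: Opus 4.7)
My plan is to exhibit the witness $u$ as the vertex that caused the Phase~1 strong-v-structure test on $T := S \cup S'$ to fail, and then combine the tester guarantee with a standard chain-rule identity for mutual information.

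First I would dispose of the trivial case $|T|=1$, where $S=\{u\}$, $S'=\emptyset$, and the inequality reduces to $I(v;u) \leq 0 + I(v;u) + \eps$. Assuming $|T| \geq 2$, I would verify that Phase~1 does execute its $\gamma = |T|$ iteration on the subset $T \subseteq N(v)$ (this holds because $T \subseteq \pi(v) \subseteq N(v)$), and that the degree gate $|T \cup N^{\mathrm{in}}(v)| \leq d$ on Line~\ref{alg:line:orient_condition} passes at the moment of the test. The latter uses \cref{lem:oriented-arcs-in-Phase1-are-ground-truth-orientations}: it forces $N^{\mathrm{in}}(v) \subseteq \pi(v)$ throughout the algorithm, so $T \cup N^{\mathrm{in}}(v) \subseteq \pi(v)$ and hence has size at most $d^* \leq d$.

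Since the chosen vertex from $S \subseteq \pi^{\mathrm{un}}(v)$ is, by definition, not oriented into $v$ by the end of Phase~1, the ``orient-all-into-$v$'' branch was not triggered on $T$. With the degree gate satisfied, the only remaining way for that branch to have been skipped is that the conditional MI check failed for some $u^{\ast} \in T$, i.e., $\hat I(u^{\ast}; T \setminus \{u^{\ast}\} \mid v) < C \cdot \eps'$. Invoking \cref{lem:whp}(2) then yields $I(u^{\ast}; T \setminus \{u^{\ast}\} \mid v) < \eps' \leq \eps$.

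Finally I would finish with the standard identity
\[
I(v;u^{\ast} \mid T\setminus\{u^{\ast}\}) \;=\; I(v;u^{\ast}) + I(u^{\ast}; T\setminus\{u^{\ast}\}\mid v) - I(u^{\ast}; T\setminus\{u^{\ast}\}),
\]
which follows from equating the two chain-rule expansions of $I(v;\, u^{\ast},\, T\setminus\{u^{\ast}\})$. Combining this with the chain rule $I(v;T) = I(v; T\setminus\{u^{\ast}\}) + I(v; u^{\ast} \mid T\setminus\{u^{\ast}\})$, dropping the non-positive term $-I(u^{\ast}; T\setminus\{u^{\ast}\})$ by non-negativity of mutual information, and substituting the CMI bound from the previous paragraph gives $I(v;T) \leq I(v; T\setminus\{u^{\ast}\}) + I(v; u^{\ast}) + \eps$, so $u := u^{\ast} \in T = S \cup S'$ is the desired witness. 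The only real obstacle is the second-paragraph bookkeeping: I need to ensure that when Phase~1 tests this specific $T$, the current $N^{\mathrm{in}}(v)$ has not yet inflated beyond $\pi(v)$; otherwise the degree gate could fail for reasons unrelated to CMI and I would lose the witness. This is precisely what \cref{lem:oriented-arcs-in-Phase1-are-ground-truth-orientations} provides, so the argument hinges on having that lemma already in hand.
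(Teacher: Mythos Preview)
Your proposal is correct and follows essentially the same route as the paper: identify the failing $u$ from the Phase~1 test on $T=S\cup S'$, use the tester to bound $I(u;T\setminus\{u\}\mid v)$, and finish via the chain-rule identity $I(v;T)=I(v;T\setminus\{u\})+I(v;u)+I(u;T\setminus\{u\}\mid v)-I(u;T\setminus\{u\})$.

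Two small differences are worth noting. First, the paper does not merely drop $-I(u;T\setminus\{u\})$ by non-negativity; it observes that this term is \emph{exactly} zero because $T\subseteq\pi(v)$ and distinct parents of $v$ in a polytree are mutually independent. Your weaker step suffices for the inequality, but the paper's observation is cleaner. Second, your explicit verification of the degree gate $|T\cup N^{\mathrm{in}}(v)|\le d$ via \cref{lem:oriented-arcs-in-Phase1-are-ground-truth-orientations} is actually more careful than the paper, which asserts without elaboration that ``if a subset $T$ of $\pi(v)$ is not all oriented into $v$, then \ldots\ there exists some $u\in T$ with $\hat I(u;T\setminus\{u\}\mid v)<C\cdot\eps$,'' tacitly assuming the gate passes. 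Your bookkeeping here is a genuine (minor) improvement.
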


\begin{restatable}{mylemma}{decompositionsequenceexistsformissingvstructure}
\label{lem:decomposition_sequence_exists_for_missing_v_structure}
For any vertex $v$ with $\pi^{\mathrm{in}}(v)$, we can show that
\[
I(v ; \pi(v)) \leq \eps \cdot |\pi(v)| + I (v ; \pi^{\mathrm{in}}(v)) + \sum_{u \in \pi^{\mathrm{un}}(v)} I (v ; u)\;.
\]
\end{restatable}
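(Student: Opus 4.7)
The plan is to peel off vertices from $\pi^{\mathrm{un}}(v)$ one at a time via repeated invocation of \cref{lem:decomposition_is_possible_when_v_structure_is_absent}, in the style of a telescoping argument. Concretely, I would prove by induction on $|S|$ the slightly stronger claim that for every $S \subseteq \pi^{\mathrm{un}}(v)$,
\[
I(v;\, S \cup \pi^{\mathrm{in}}(v)) \;\leq\; \eps \cdot |S| \;+\; I(v;\, \pi^{\mathrm{in}}(v)) \;+\; \sum_{u \in S} I(v;\, u),
\]
and then instantiate this with $S = \pi^{\mathrm{un}}(v)$; the final bound then follows from $|\pi^{\mathrm{un}}(v)| \leq |\pi(v)|$.

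The base case $S = \emptyset$ is an equality, as both sides collapse to $I(v;\pi^{\mathrm{in}}(v))$ (the empty sum being zero). For the inductive step with $S \neq \emptyset$, I would apply \cref{lem:decomposition_is_possible_when_v_structure_is_absent} with the current $S$ and $S' = \pi^{\mathrm{in}}(v)$ to extract a vertex $u$ together with the inequality
\[
I(v;\, S \cup \pi^{\mathrm{in}}(v)) \;\leq\; I\!\left(v;\, \big(S \cup \pi^{\mathrm{in}}(v)\big) \setminus \{u\}\right) + I(v;\, u) + \eps,
\]
then invoke the inductive hypothesis on the smaller instance $S \setminus \{u\} \subseteq \pi^{\mathrm{un}}(v)$ and collect terms. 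This yields $\eps \cdot |S|$ (one $\eps$ per peel) plus $I(v;\,\pi^{\mathrm{in}}(v))$ plus the per-vertex contributions $I(v;u)$ accumulated along the way.

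The main obstacle is ensuring that the vertex $u$ extracted in each step can be chosen from $S$ rather than from $\pi^{\mathrm{in}}(v)$ --- this is precisely what keeps the right-hand sum indexed over $\pi^{\mathrm{un}}(v)$ (rather than all of $\pi(v)$) and caps the coefficient of $\eps$ at $|\pi^{\mathrm{un}}(v)|$. I would justify this by revisiting the proof of \cref{lem:decomposition_is_possible_when_v_structure_is_absent}: the witnessing $u$ is the ``weak'' node of a failed strong deg-$|S \cup S'|$ v-structure test at $v$, and any such $u$ lying entirely in $\pi^{\mathrm{in}}(v)$ would, together with the elements of $S$, have been oriented by Phase~1 or Phase~2, contradicting $S \subseteq \pi^{\mathrm{un}}(v)$. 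If only the weaker form $u \in S \cup S'$ were available, the same telescoping still delivers $I(v;\pi(v)) \leq \eps\cdot|\pi(v)| + I(v;\pi^{\mathrm{in}}(v)) + \sum_{u\in \pi(v)} I(v;u)$, which would suffice for some but not all of the downstream uses and so falls short of the stated bound.
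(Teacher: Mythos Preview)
Your induction scheme is sound until the ``main obstacle'' you flag, and that obstacle is real: you cannot in general force the extracted vertex $u$ to lie in $S$ rather than in $S'=\pi^{\mathrm{in}}(v)$. Your proposed justification is backwards. The witness $u$ in \cref{lem:decomposition_is_possible_when_v_structure_is_absent} is the vertex with $\hat{I}(u; T\setminus\{u\}\mid v) < C\eps$ for $T=S\cup S'$. Phase~1 orients all of $T$ into $v$ only when \emph{every} element of $T$ passes the test; so if the unique weak element happens to be some $u'\in\pi^{\mathrm{in}}(v)$ (which was oriented into $v$ via a \emph{different} subset $T'$), Phase~1 still does not orient the elements of $S$, and nothing is contradicted. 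Phase~2 only tests $\hat{I}(u;N^{\mathrm{in}}(v)\mid v)$, not $\hat{I}(u;T\setminus\{u\}\mid v)$, so it does not rescue the claim either. Hence the step ``invoke the inductive hypothesis on $S\setminus\{u\}$'' can fail: if $u\in S'$ you are left with $I(v;S\cup(\pi^{\mathrm{in}}(v)\setminus\{u\}))$, which does not match your induction statement.

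The paper's proof embraces this possibility rather than ruling it out. It repeatedly applies \cref{lem:decomposition_is_possible_when_v_structure_is_absent}, allowing the removed vertex to come from either $S$ or $S'$, until $S$ is empty; this yields
\[
I(v;\pi(v)) \;\leq\; I(v;\pi^{\mathrm{in}}(v)\setminus U) + \sum_{u\in U} I(v;u) + \eps\,|U|,
\]
where $U$ is the set of all removed vertices (so $\pi^{\mathrm{un}}(v)\subseteq U\subseteq\pi(v)$). The key extra ingredient you are missing is then super-additivity over parents (\cref{lem:parent-CMI-manipulation}):
\[
I(v;\pi^{\mathrm{in}}(v)) \;\geq\; I(v;\pi^{\mathrm{in}}(v)\setminus U) + \sum_{u\in \pi^{\mathrm{in}}(v)\cap U} I(v;u),
\]
which absorbs the unwanted $I(v;u)$ terms for $u\in\pi^{\mathrm{in}}(v)\cap U$ back into $I(v;\pi^{\mathrm{in}}(v))$, leaving exactly $\sum_{u\in\pi^{\mathrm{un}}(v)}I(v;u)$ and $\eps\,|U|\leq\eps\,|\pi(v)|$. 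Your fallback inequality with $\sum_{u\in\pi(v)}I(v;u)$ is indeed too weak for \cref{lem:final-output-is-good}; the fix is not to strengthen the choice of $u$ but to use \cref{lem:parent-CMI-manipulation} at the end.
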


In Phase 3, we increase the incoming edges to any vertex by at most one.
The following lemma tells us that we lose at most\footnote{Orienting ``freely'' could also increase the mutual information score and this is considering the worst case.} an additive $\eps$ error per vertex.

\begin{restatable}{mylemma}{singleextraincomingisokay}
\label{lemma:single-extra-incoming-is-okay}
Consider an arbitrary vertex $v$ with $\pi^{\mathrm{in}}(v)$ at the start of Phase 3.
If Phase 3 orients $u \to v$ for some $u - v \in H$, then
\[
I(v; \pi^{\mathrm{in}}(v) \cup \{u\}) \geq I(v; \pi^{\mathrm{in}}(v)) + I(v; u) - \eps.
\]
\end{restatable}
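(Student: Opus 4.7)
The plan is to reduce the desired inequality, via the chain rule, to bounding the quantity $I(v;u) - I(v;u \mid \pi^{\mathrm{in}}(v))$, and then exploit the fact that the edge $u - v$ was \emph{not} oriented during Phase 2 in order to argue that this gap is small.

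First, I would dispose of the trivial case $\pi^{\mathrm{in}}(v) = \emptyset$, where $I(v;u \mid \emptyset) = I(v;u)$ and the claim reduces to $I(v;u) \geq I(v;u) - \eps$. So I may assume $|\pi^{\mathrm{in}}(v)| \geq 1$ henceforth. The chain rule then gives
\[
I(v; \pi^{\mathrm{in}}(v) \cup \{u\}) = I(v; \pi^{\mathrm{in}}(v)) + I(v; u \mid \pi^{\mathrm{in}}(v)),
\]
so it suffices to prove $I(v; u \mid \pi^{\mathrm{in}}(v)) \geq I(v;u) - \eps$. Next, I would expand $I(u;\, v,\, \pi^{\mathrm{in}}(v))$ in two ways via the chain rule (once conditioning through $v$ and once through $\pi^{\mathrm{in}}(v)$) to obtain the standard identity
\[
I(u;v) - I(u;v \mid \pi^{\mathrm{in}}(v)) \;=\; I(u;\pi^{\mathrm{in}}(v)) - I(u;\pi^{\mathrm{in}}(v) \mid v) \;\leq\; I(u;\pi^{\mathrm{in}}(v)),
\]
and then invoke the symmetries $I(u;v) = I(v;u)$ and $I(u;v \mid \pi^{\mathrm{in}}(v)) = I(v;u \mid \pi^{\mathrm{in}}(v))$ to rewrite the left-hand side. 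This reduces the task to showing $I(u;\pi^{\mathrm{in}}(v)) \leq \eps$.

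Finally, I would control $I(u; \pi^{\mathrm{in}}(v))$ via the termination behavior of Phase 2. Since $u - v$ is still unoriented at the start of Phase 3 and $|\pi^{\mathrm{in}}(v)| \geq 1$, we must also have $|\pi^{\mathrm{in}}(v)| < d$ (otherwise Meek $R1(d)$ would already have oriented $v \to u$ in Phase 2). Therefore, the inner loop of \cref{alg:phase2} ran both tests on the pair $(v, u)$ during the final iteration of the do--while loop with $v$'s incoming set being exactly $\pi^{\mathrm{in}}(v)$ (as no further orientations occur past that iteration), and the fact that neither the ``if'' nor the ``\emph{else if}'' branch fired yields in particular $\hat{I}(u; \pi^{\mathrm{in}}(v)) \leq C \cdot \eps$. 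Invoking \cref{lem:whp} statement (2) on the high-probability good event then lifts this to $I(u; \pi^{\mathrm{in}}(v)) \leq \eps$, completing the argument.

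The main obstacle is really bookkeeping rather than conceptual content: one has to carefully justify that the Phase 2 tests were executed with precisely the final value of $\pi^{\mathrm{in}}(v)$ (so that their non-firing pins down the empirical CMI of interest), rule out the case $|\pi^{\mathrm{in}}(v)| = d$ via Meek $R1(d)$, and ensure the good event of \cref{lem:whp} is the same one under which all other algorithmic correctness statements are derived. Everything else is a routine manipulation of the chain rule and symmetry of mutual information.
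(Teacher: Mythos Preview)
Your proposal is correct and follows essentially the same route as the paper: both reduce to the identity $I(v;\pi^{\mathrm{in}}(v)\cup\{u\}) - I(v;\pi^{\mathrm{in}}(v)) - I(v;u) = I(u;\pi^{\mathrm{in}}(v)\mid v) - I(u;\pi^{\mathrm{in}}(v))$ (the paper via \cref{lem:useful-identity}, you via the double chain rule) and then use the non-firing of the Phase~2 tests plus \cref{cor:CMI_tester}/\cref{lem:whp} to bound the right-hand side. The only cosmetic difference is that the paper bounds \emph{both} $I(u;\pi^{\mathrm{in}}(v))$ and $I(u;\pi^{\mathrm{in}}(v)\mid v)$ and observes (via the polytree structure) that at most one is nonzero, obtaining a two-sided bound, whereas you simply drop the nonnegative term $I(u;\pi^{\mathrm{in}}(v)\mid v)$ and use only the bound on $I(u;\pi^{\mathrm{in}}(v))$, which is all the one-sided statement requires.
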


By using \cref{lem:decomposition_sequence_exists_for_missing_v_structure} and \cref{lemma:single-extra-incoming-is-okay}, we can show our desired KL divergence bound (\cref{lem:final-output-is-good}).

\begin{restatable}{mylemma}{finaloutputisgood}
\label{lem:final-output-is-good}
Let $\pi(v)$ be the true parents of $v$.
Let $\hat{\pi}(v)$ be the proposed parents of $v$ output by our algorithm.
Then,
\[
\sum_{v \in V} I(v; \pi(v)) - \sum_{v \in V} I(v; \hat{\pi}(v))
\leq n \cdot (d^* + 1) \cdot \eps \;.
\]
\end{restatable}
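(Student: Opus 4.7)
The plan is to upper bound $\sum_v I(v;\pi(v))$ using \cref{lem:decomposition_sequence_exists_for_missing_v_structure}, lower bound $\sum_v I(v;\hat\pi(v))$ using \cref{lemma:single-extra-incoming-is-okay}, and then argue that the pairwise mutual information terms appearing in both bounds cancel via the symmetry $I(u;v) = I(v;u)$ and the fact that Phase~3 orients every edge of the remaining forest $H$.

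First, I would apply \cref{lem:decomposition_sequence_exists_for_missing_v_structure} at each $v \in V$ and sum to obtain
\[
\sum_{v \in V} I(v;\pi(v)) \;\le\; \eps \sum_{v \in V} |\pi(v)| + \sum_{v \in V} I(v;\pi^{\mathrm{in}}(v)) + \sum_{v \in V} \sum_{u \in \pi^{\mathrm{un}}(v)} I(v;u),
\]
and bound the first term by $n d^* \eps$ using $|\pi(v)| \le d^*$. Next, Phase~3 freely orients the forest $H$ of remaining unoriented edges as a 1-polytree, so each non-root vertex $v$ (with respect to its tree component in $H$) acquires exactly one new parent $u_v$, while roots acquire none. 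For every non-root $v$, \cref{lemma:single-extra-incoming-is-okay} gives $I(v;\hat\pi(v)) \ge I(v;\pi^{\mathrm{in}}(v)) + I(v;u_v) - \eps$, so summing,
\[
\sum_{v \in V} I(v;\hat\pi(v)) \;\ge\; \sum_{v \in V} I(v;\pi^{\mathrm{in}}(v)) + \sum_{v:\, u_v \text{ exists}} I(v;u_v) - n\eps.
\]

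The key step, and the one I expect to require the most care, is showing that the two pairwise sums are equal. By \cref{lem:oriented-arcs-in-Phase1-are-ground-truth-orientations}, $\pi^{\mathrm{in}}(v) \subseteq \pi(v)$, so $\pi^{\mathrm{un}}(v) = \pi(v) \setminus \pi^{\mathrm{in}}(v)$ is exactly the set of ground-truth parents of $v$ whose incident edges remain unoriented after Phase~2, i.e.\ the set of neighbours of $v$ in $H$ that are ground-truth parents of $v$. Since every edge of $H$ has a unique ground-truth orientation, each edge $\{u,v\} \in H$ contributes exactly once to $\sum_v \sum_{u \in \pi^{\mathrm{un}}(v)} I(v;u)$. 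On the other hand, since $H$ is a forest, Phase~3 orients every edge of $H$, so each such edge also contributes exactly once to $\sum_{v:\, u_v \text{ exists}} I(v;u_v)$. Using $I(u;v) = I(v;u)$, the two sums are identical.

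Subtracting the lower bound from the upper bound then causes both $\sum_v I(v;\pi^{\mathrm{in}}(v))$ and the pairwise sums to cancel, yielding $\sum_v I(v;\pi(v)) - \sum_v I(v;\hat\pi(v)) \le n d^* \eps + n\eps = n(d^*+1)\eps$, as desired. The main subtlety is precisely that the bound holds for an \emph{arbitrary} Phase~3 orientation: we do not need Phase~3 to match the ground truth, because the contribution of each $H$-edge to both pairwise sums is direction-independent by symmetry of mutual information.
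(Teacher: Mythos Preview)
Your proposal is correct and follows essentially the same approach as the paper's proof: both upper-bound $\sum_v I(v;\pi(v))$ via \cref{lem:decomposition_sequence_exists_for_missing_v_structure}, lower-bound $\sum_v I(v;\hat\pi(v))$ via \cref{lemma:single-extra-incoming-is-okay}, and then cancel the $\sum_v I(v;\pi^{\mathrm{in}}(v))$ terms together with the two pairwise sums by observing that each is a sum of $I(u;v)$ over the edges of $H$ (the paper phrases this as ``two different ways to enumerate the edges of $H$''). Your explicit justification that $\pi^{\mathrm{un}}(v)$ consists precisely of the ground-truth parents of $v$ whose edges lie in $H$ (via \cref{lem:oriented-arcs-in-Phase1-are-ground-truth-orientations}) is a detail the paper leaves to a figure, so your write-up is, if anything, slightly more careful on that point.
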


Note that we have a bound with respect to the true max-degree $d^*$ despite only given an upper bound $d$ as input.
With these results in hand, we are ready to establish our main theorem.

\begin{proof}%
\textbf{of~\cref{theo:main}}\quad
We first combine \cref{lem:final-output-is-good} and \cref{lem:whp} with $\eps' = \frac{\eps}{2 n \cdot (d + 1)} \leq \frac{\eps}{2 n \cdot (d^* + 1)}$ in order to obtain an orientation $\hat{G}$ which is close to $G^*$.
Now, much similar to the proof of \citet[Theorem 1.4]{DBLP:journals/corr/abs-2011-04144}, we recall that there exist efficient algorithms for estimating the parameters of a Bayes net with in-degree-$d$ (note that this includes $d$-polytrees) $\pd$ once a close-enough graph $\hat{G}$ is recovered \citep{DBLP:journals/ml/Dasgupta97, BhattacharyyaGMV20}, with sample complexity $\tilde{\cO}(n \cdot |\Sigma|^d / \eps)$.
Denote the final output $\hat{\pd}_{\hat{G}}$, a distribution that is estimated using the conditional probabilities implied by $\hat{G}$.
One can bound the KL divergences as follows:
\[
\kl(\pd \;\|\; P_{\hat{G}}) - \kl(\pd \;\|\; P_{G^*})
\leq \eps / 2
\quad \text{and} \quad
\kl(\pd \;\|\; \hat{P}_{\hat{G}}) - \kl(\pd \;\|\; P_{\hat{G}}) \leq \eps / 2 \;.
\]
The first inequality follows from our graph learning guarantees on $\hat{G}$ while the second is due to performing parameter learning algorithms on $\hat{G}$.
Thus,
$
\kl(\pd \;\|\; \hat{P}_{\hat{G}})
\leq \eps + \kl(\pd \;\|\; P_{G^*})
= \eps
$.
\end{proof}

\begin{figure}[htb]
\centering
\begin{subfigure}[Ground truth $G^*$. $\pi(d) = \{a,b,c\}$]{
    \centering
    \resizebox{0.22\linewidth}{!}{
    \resizebox{0.7\linewidth}{!}{%
\begin{tikzpicture}
    \node[draw, circle, minimum size=15pt, inner sep=2pt] at (0,0) (d) {$d$};
    \node[draw, circle, minimum size=15pt, inner sep=2pt, left=of d] (b) {$b$};
    \node[draw, circle, minimum size=15pt, inner sep=2pt, above=of b] (a) {$a$};
    \node[draw, circle, minimum size=15pt, inner sep=2pt, below=of b] (c) {$c$};
    \node[draw, circle, minimum size=15pt, inner sep=2pt, right=of d] (f) {$f$};
    \node[draw, circle, minimum size=15pt, inner sep=2pt, above=of f] (e) {$e$};
    \node[draw, circle, minimum size=15pt, inner sep=2pt, below=of f] (g) {$g$};
    \node[draw, circle, minimum size=15pt, inner sep=2pt, right=of e] (h) {$h$};
    \node[draw, circle, minimum size=15pt, inner sep=2pt, right=of f] (i) {$i$};
    \node[draw, circle, minimum size=15pt, inner sep=2pt, right=of g] (j) {$j$};

    \draw[thick, -{Stealth[scale=1.5]}] (a) -- (d);
    \draw[thick, -{Stealth[scale=1.5]}] (b) -- (d);
    \draw[thick, -{Stealth[scale=1.5]}] (c) -- (d);
    \draw[thick, -{Stealth[scale=1.5]}] (d) -- (f);
    \draw[thick, -{Stealth[scale=1.5]}] (e) -- (f);
    \draw[thick, -{Stealth[scale=1.5]}] (f) -- (g);
    \draw[thick, -{Stealth[scale=1.5]}] (e) -- (h);
    \draw[thick, -{Stealth[scale=1.5]}] (g) -- (i);
    \draw[thick, -{Stealth[scale=1.5]}] (j) -- (g);
\end{tikzpicture}
}
    }
}
\end{subfigure}
\hfill
\begin{subfigure}[Midway of Phase 1. $N^{in}(d) = \{a,b\}$]{
    \centering
    \resizebox{0.22\linewidth}{!}{
    \resizebox{0.3\linewidth}{!}{%
\begin{tikzpicture}
    \node[draw, circle, minimum size=15pt, inner sep=2pt] at (0,0) (d) {$d$};
    \node[draw, circle, minimum size=15pt, inner sep=2pt, left=of d] (b) {$b$};
    \node[draw, circle, minimum size=15pt, inner sep=2pt, above=of b] (a) {$a$};
    \node[draw, circle, minimum size=15pt, inner sep=2pt, below=of b] (c) {$c$};
    \node[draw, circle, minimum size=15pt, inner sep=2pt, right=of d] (f) {$f$};
    \node[draw, circle, minimum size=15pt, inner sep=2pt, above=of f] (e) {$e$};
    \node[draw, circle, minimum size=15pt, inner sep=2pt, below=of f] (g) {$g$};
    \node[draw, circle, minimum size=15pt, inner sep=2pt, right=of e] (h) {$h$};
    \node[draw, circle, minimum size=15pt, inner sep=2pt, right=of f] (i) {$i$};
    \node[draw, circle, minimum size=15pt, inner sep=2pt, right=of g] (j) {$j$};

    \draw[thick, -{Stealth[scale=1.5]}] (a) -- (d);
    \draw[thick, -{Stealth[scale=1.5]}] (b) -- (d);
    \draw[thick] (c) -- (d);
    \draw[thick] (d) -- (f);
    \draw[thick] (e) -- (f);
    \draw[thick] (f) -- (g);
    \draw[thick] (e) -- (h);
    \draw[thick] (g) -- (i);
    \draw[thick] (j) -- (g);
\end{tikzpicture}
}
    }
}
\end{subfigure}
\hfill
\begin{subfigure}[Before final phase. $\pi^{in}(d) = \{a,b\}$]{
    \centering
    \resizebox{0.22\linewidth}{!}{
    \resizebox{0.7\linewidth}{!}{%
\begin{tikzpicture}
    \node[draw, circle, minimum size=15pt, inner sep=2pt] at (0,0) (d) {$d$};
    \node[draw, circle, minimum size=15pt, inner sep=2pt, left=of d] (b) {$b$};
    \node[draw, circle, minimum size=15pt, inner sep=2pt, above=of b] (a) {$a$};
    \node[draw, circle, minimum size=15pt, inner sep=2pt, below=of b] (c) {$c$};
    \node[draw, circle, minimum size=15pt, inner sep=2pt, right=of d] (f) {$f$};
    \node[draw, circle, minimum size=15pt, inner sep=2pt, above=of f] (e) {$e$};
    \node[draw, circle, minimum size=15pt, inner sep=2pt, below=of f] (g) {$g$};
    \node[draw, circle, minimum size=15pt, inner sep=2pt, right=of e] (h) {$h$};
    \node[draw, circle, minimum size=15pt, inner sep=2pt, right=of f] (i) {$i$};
    \node[draw, circle, minimum size=15pt, inner sep=2pt, right=of g] (j) {$j$};

    \draw[thick, -{Stealth[scale=1.5]}] (a) -- (d);
    \draw[thick, -{Stealth[scale=1.5]}] (b) -- (d);
    \draw[thick, red] (c) -- (d);
    \draw[thick, red] (d) -- node[above=10pt,midway] {\Large $H$} (f);
    \draw[thick, red] (e) -- (f);
    \draw[thick, -{Stealth[scale=1.5]}] (f) -- (g);
    \draw[thick, red] (e) -- (h);
    \draw[thick, -{Stealth[scale=1.5]}] (g) -- (i);
    \draw[thick, -{Stealth[scale=1.5]}] (j) -- (g);
\end{tikzpicture}
}
    }
}
\end{subfigure}
\hfill
\begin{subfigure}[Proposed graph $\hat{G}$. $\hat{\pi}(d) = \{a,b,f\}$]{
    \centering
    \resizebox{0.22\linewidth}{!}{\resizebox{0.7\linewidth}{!}{%
\begin{tikzpicture}
    \node[draw, circle, minimum size=15pt, inner sep=2pt] at (0,0) (d) {$d$};
    \node[draw, circle, minimum size=15pt, inner sep=2pt, left=of d] (b) {$b$};
    \node[draw, circle, minimum size=15pt, inner sep=2pt, above=of b] (a) {$a$};
    \node[draw, circle, minimum size=15pt, inner sep=2pt, below=of b] (c) {$c$};
    \node[draw, circle, minimum size=15pt, inner sep=2pt, right=of d] (f) {$f$};
    \node[draw, circle, minimum size=15pt, inner sep=2pt, above=of f] (e) {$e$};
    \node[draw, circle, minimum size=15pt, inner sep=2pt, below=of f] (g) {$g$};
    \node[draw, circle, minimum size=15pt, inner sep=2pt, right=of e] (h) {$h$};
    \node[draw, circle, minimum size=15pt, inner sep=2pt, right=of f] (i) {$i$};
    \node[draw, circle, minimum size=15pt, inner sep=2pt, right=of g] (j) {$j$};

    \draw[thick, -{Stealth[scale=1.5]}, blue] (a) -- (d);
    \draw[thick, -{Stealth[scale=1.5]}, blue] (b) -- (d);
    \draw[thick, -{Stealth[scale=1.5]}, red] (d) -- (c);
    \draw[thick, -{Stealth[scale=1.5]}, red] (f) -- (d);
    \draw[thick, -{Stealth[scale=1.5]}, red] (e) -- (f);
    \draw[thick, -{Stealth[scale=1.5]}, blue] (f) -- (g);
    \draw[thick, -{Stealth[scale=1.5]}, red] (h) -- (e);
    \draw[thick, -{Stealth[scale=1.5]}, blue] (g) -- (i);
    \draw[thick, -{Stealth[scale=1.5]}, blue] (j) -- (g);
\end{tikzpicture}
}}
}
\end{subfigure}
\caption{
An example run to illusrate notations.
In $G^*$, vertex $d$ has parents $\pi(d) = \{a,b,c\}$. While the algorithm executes, we track a tentative parent set $N^{in}(d)$ of $d$ and fix it to $\pi^{in}(d)$ right before the final phase.
Since $d = 3$, observe that $g \to i$ must have been oriented due to a local search step and \emph{not} due to Meek $R1(3)$ in Phase 2.
At the end, in $\hat{G}$, the proposed parent set of $d$ is $\hat{\pi}(d) = \{a,b,f\}$.
Note that $\hat{G}$ only shows one possible orientation of the red unoriented subgraph $H$ before the final phase; see \cref{fig:all-five-possible-H-orientations} for others.
}
\label{fig:lemma6-notation}
\end{figure}
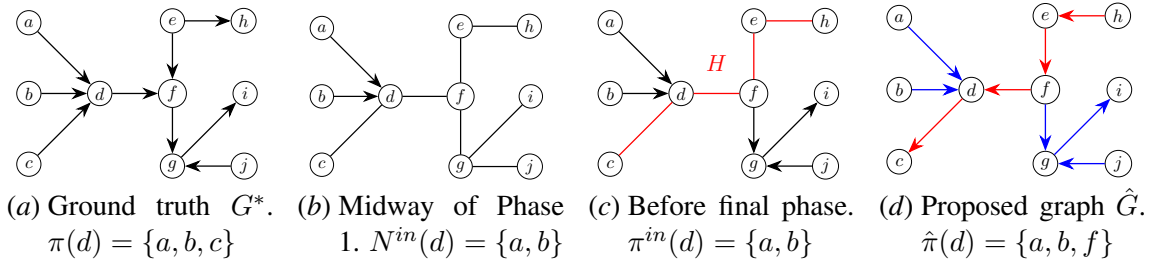

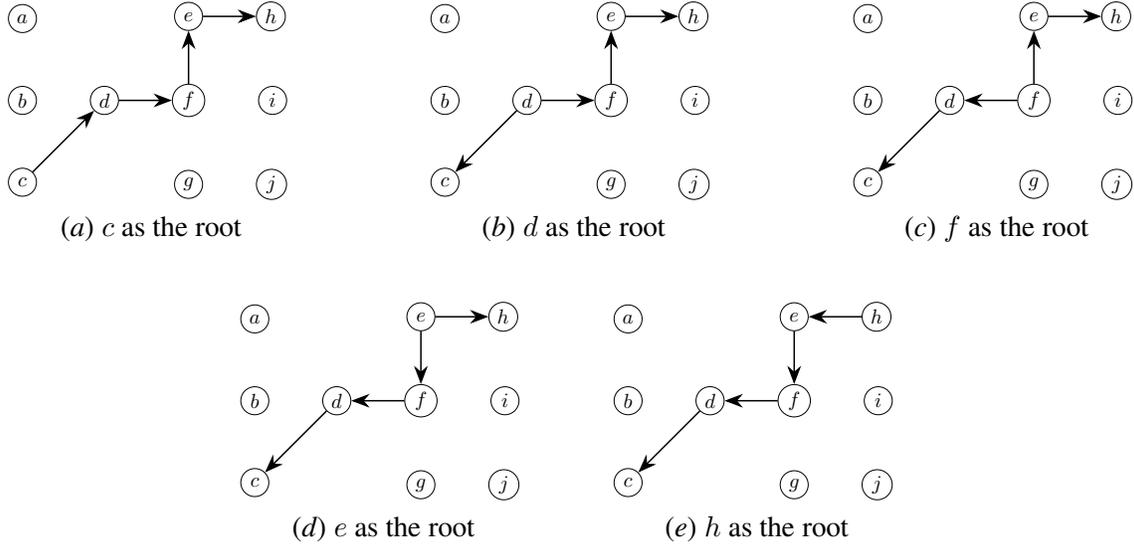
\begin{figure}[htb]
\centering
\begin{subfigure}[$c$ as the root][t]{
    \centering
    \resizebox{0.25\linewidth}{!}{\resizebox{0.7\linewidth}{!}{%
\begin{tikzpicture}
    \node[draw, circle, minimum size=15pt, inner sep=2pt] at (0,0) (d) {$d$};
    \node[draw, circle, minimum size=15pt, inner sep=2pt, left=of d] (b) {$b$};
    \node[draw, circle, minimum size=15pt, inner sep=2pt, above=of b] (a) {$a$};
    \node[draw, circle, minimum size=15pt, inner sep=2pt, below=of b] (c) {$c$};
    \node[draw, circle, minimum size=15pt, inner sep=2pt, right=of d] (f) {$f$};
    \node[draw, circle, minimum size=15pt, inner sep=2pt, above=of f] (e) {$e$};
    \node[draw, circle, minimum size=15pt, inner sep=2pt, below=of f] (g) {$g$};
    \node[draw, circle, minimum size=15pt, inner sep=2pt, right=of e] (h) {$h$};
    \node[draw, circle, minimum size=15pt, inner sep=2pt, right=of f] (i) {$i$};
    \node[draw, circle, minimum size=15pt, inner sep=2pt, right=of g] (j) {$j$};

    \draw[thick, -{Stealth[scale=1.5]}] (c) -- (d);
    \draw[thick, -{Stealth[scale=1.5]}] (d) -- (f);
    \draw[thick, -{Stealth[scale=1.5]}] (f) -- (e);
    \draw[thick, -{Stealth[scale=1.5]}] (e) -- (h);
\end{tikzpicture}
}}
}
\end{subfigure}
\hfill
\begin{subfigure}[$d$ as the root][t]{
    \centering
    \resizebox{0.25\linewidth}{!}{\resizebox{0.7\linewidth}{!}{%
\begin{tikzpicture}
    \node[draw, circle, minimum size=15pt, inner sep=2pt] at (0,0) (d) {$d$};
    \node[draw, circle, minimum size=15pt, inner sep=2pt, left=of d] (b) {$b$};
    \node[draw, circle, minimum size=15pt, inner sep=2pt, above=of b] (a) {$a$};
    \node[draw, circle, minimum size=15pt, inner sep=2pt, below=of b] (c) {$c$};
    \node[draw, circle, minimum size=15pt, inner sep=2pt, right=of d] (f) {$f$};
    \node[draw, circle, minimum size=15pt, inner sep=2pt, above=of f] (e) {$e$};
    \node[draw, circle, minimum size=15pt, inner sep=2pt, below=of f] (g) {$g$};
    \node[draw, circle, minimum size=15pt, inner sep=2pt, right=of e] (h) {$h$};
    \node[draw, circle, minimum size=15pt, inner sep=2pt, right=of f] (i) {$i$};
    \node[draw, circle, minimum size=15pt, inner sep=2pt, right=of g] (j) {$j$};

    \draw[thick, -{Stealth[scale=1.5]}] (d) -- (c);
    \draw[thick, -{Stealth[scale=1.5]}] (d) -- (f);
    \draw[thick, -{Stealth[scale=1.5]}] (f) -- (e);
    \draw[thick, -{Stealth[scale=1.5]}] (e) -- (h);
\end{tikzpicture}
}}
}
\end{subfigure}
\hfill
\begin{subfigure}[$f$ as the root][t]{
    \centering
    \resizebox{0.25\linewidth}{!}{\resizebox{0.7\linewidth}{!}{%
\begin{tikzpicture}
    \node[draw, circle, minimum size=15pt, inner sep=2pt] at (0,0) (d) {$d$};
    \node[draw, circle, minimum size=15pt, inner sep=2pt, left=of d] (b) {$b$};
    \node[draw, circle, minimum size=15pt, inner sep=2pt, above=of b] (a) {$a$};
    \node[draw, circle, minimum size=15pt, inner sep=2pt, below=of b] (c) {$c$};
    \node[draw, circle, minimum size=15pt, inner sep=2pt, right=of d] (f) {$f$};
    \node[draw, circle, minimum size=15pt, inner sep=2pt, above=of f] (e) {$e$};
    \node[draw, circle, minimum size=15pt, inner sep=2pt, below=of f] (g) {$g$};
    \node[draw, circle, minimum size=15pt, inner sep=2pt, right=of e] (h) {$h$};
    \node[draw, circle, minimum size=15pt, inner sep=2pt, right=of f] (i) {$i$};
    \node[draw, circle, minimum size=15pt, inner sep=2pt, right=of g] (j) {$j$};

    \draw[thick, -{Stealth[scale=1.5]}] (d) -- (c);
    \draw[thick, -{Stealth[scale=1.5]}] (f) -- (d);
    \draw[thick, -{Stealth[scale=1.5]}] (f) -- (e);
    \draw[thick, -{Stealth[scale=1.5]}] (e) -- (h);
\end{tikzpicture}
}}
}
\end{subfigure}
\\
\vspace{20pt}
\begin{subfigure}[$e$ as the root][t]{
    \centering
    \resizebox{0.25\linewidth}{!}{\resizebox{0.7\linewidth}{!}{%
\begin{tikzpicture}
    \node[draw, circle, minimum size=15pt, inner sep=2pt] at (0,0) (d) {$d$};
    \node[draw, circle, minimum size=15pt, inner sep=2pt, left=of d] (b) {$b$};
    \node[draw, circle, minimum size=15pt, inner sep=2pt, above=of b] (a) {$a$};
    \node[draw, circle, minimum size=15pt, inner sep=2pt, below=of b] (c) {$c$};
    \node[draw, circle, minimum size=15pt, inner sep=2pt, right=of d] (f) {$f$};
    \node[draw, circle, minimum size=15pt, inner sep=2pt, above=of f] (e) {$e$};
    \node[draw, circle, minimum size=15pt, inner sep=2pt, below=of f] (g) {$g$};
    \node[draw, circle, minimum size=15pt, inner sep=2pt, right=of e] (h) {$h$};
    \node[draw, circle, minimum size=15pt, inner sep=2pt, right=of f] (i) {$i$};
    \node[draw, circle, minimum size=15pt, inner sep=2pt, right=of g] (j) {$j$};

    \draw[thick, -{Stealth[scale=1.5]}] (d) -- (c);
    \draw[thick, -{Stealth[scale=1.5]}] (f) -- (d);
    \draw[thick, -{Stealth[scale=1.5]}] (e) -- (f);
    \draw[thick, -{Stealth[scale=1.5]}] (e) -- (h);
\end{tikzpicture}
}}
}
\end{subfigure}
\qquad
\begin{subfigure}[$h$ as the root][t]{
    \centering
    \resizebox{0.25\linewidth}{!}{\resizebox{0.7\linewidth}{!}{%
\begin{tikzpicture}
    \node[draw, circle, minimum size=15pt, inner sep=2pt] at (0,0) (d) {$d$};
    \node[draw, circle, minimum size=15pt, inner sep=2pt, left=of d] (b) {$b$};
    \node[draw, circle, minimum size=15pt, inner sep=2pt, above=of b] (a) {$a$};
    \node[draw, circle, minimum size=15pt, inner sep=2pt, below=of b] (c) {$c$};
    \node[draw, circle, minimum size=15pt, inner sep=2pt, right=of d] (f) {$f$};
    \node[draw, circle, minimum size=15pt, inner sep=2pt, above=of f] (e) {$e$};
    \node[draw, circle, minimum size=15pt, inner sep=2pt, below=of f] (g) {$g$};
    \node[draw, circle, minimum size=15pt, inner sep=2pt, right=of e] (h) {$h$};
    \node[draw, circle, minimum size=15pt, inner sep=2pt, right=of f] (i) {$i$};
    \node[draw, circle, minimum size=15pt, inner sep=2pt, right=of g] (j) {$j$};

    \draw[thick, -{Stealth[scale=1.5]}] (d) -- (c);
    \draw[thick, -{Stealth[scale=1.5]}] (f) -- (d);
    \draw[thick, -{Stealth[scale=1.5]}] (e) -- (f);
    \draw[thick, -{Stealth[scale=1.5]}] (h) -- (e);
\end{tikzpicture}
}}
}
\end{subfigure}
\caption{
The five different possible orientations of $H$.
Observe that the ground truth orientation of these edges is inconsistent with all five orientations shown here.
}
\label{fig:all-five-possible-H-orientations}
\end{figure}

\section{Skeleton assumption}
\label{sec:skeleton-assumption}

In this section, we present a set of \emph{sufficient} assumptions (\cref{assumption:chow-liu-recovers-correct-skeleton}) under which the Chow-Liu algorithm will recover the true skeleton even with finite samples.
We note that the conditions listed here are in spirit very similar to the assumptions made to recover exact graphical structures in other works \citep{gao2021efficient, DBLP:conf/nips/GhoshalH17, DBLP:conf/aistats/GaoTA22}, i.e., assuming a sufficiently detectable gap on an edge or from an alternate graph. Otherwise, it is not hard to find counter examples to thwart learners from recovering the correct network structure with finite sample access.\footnote{E.g., a distribution on $X \rightarrow Y$ with infinitely small $I(X; Y)$ and given finite sample access no algorithm can distinguish the actual graph from the empty graph.} As such, it is often necessary to make these assumptions for exact structure recovery. Aside from the ones presented here, \citet{DBLP:conf/isit/BankH20, DBLP:journals/corr/abs-2209-07028} study other sufficient conditions for recovering the skeleton of Polytrees (or Bayes nets).

Nevertheless, we would like to highlight that our paper has made progress in polytree PAC-learning in the following statisical sense: it suffices to have exact first order mutual information and approximate higher order mutual information to learn (most) bounded in-degree polytrees in polynomial time. For prior works, it is only known that one can recover polytrees efficiently with exact first and second order mutual information~\citep{DBLP:journals/corr/abs-1304-2736} or exponential time algorithm for approximating bounded in-degree Bayes nets~\citep{DBLP:journals/air/KitsonCG0C23}.

\begin{assumption}
\label{assumption:chow-liu-recovers-correct-skeleton}
For any given distribution $\pd$, there exists a constant $\eps_{\pd} > 0$ such that:\\
(1) For every pair of nodes $u$ and $v$, if there exists a path $u - \cdots - v$ of length greater than $2$ in $G^*$, then then $I(u ; v) + \eps_{\pd} \leq I(a ; b)$ for every pair of adjacent vertices $a - b$ in the path.\\
(2) For every pair of directly connected nodes $a - b$ in $G^*$, $I(a; b)
  \geq \eps_{\pd}$.
\end{assumption}

Suppose there is a large enough gap of $\eps_{\pd}$ between edges in $G^*$ and edges outside of $G^*$.
Then, with $\cO(1 / \eps_{\pd}^2)$ samples, each estimated mutual information $\hat{I}(a; b)$ will be sufficiently close to the true mutual information $I(a; b)$.
Thus, running the Chow-Liu algorithm (which is maximum spanning tree on the estimated mutual information on each pair of vertices) recovers $\skel(G^*)$.
See \cref{sec:appx-skeleton-assumption} for the full proof.

\begin{restatable}{mylemma}{chowliurecoverscorrectskeleton}
\label{lem:chow-liu-recovers-correct-skeleton}
Under \cref{assumption:chow-liu-recovers-correct-skeleton}, running the Chow-Liu algorithm on the $m$-sample empirical estimates $\{\hat{I}(u;v)\}_{u,v \in V}$ recovers a ground truth skeleton with high probability when $m \geq \Omega (\frac{\log n}{\eps_{\pd}^2})$.
\end{restatable}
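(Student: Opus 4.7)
The plan is to show that once the empirical pairwise mutual informations $\hat{I}(u;v)$ are sufficiently accurate, the maximum weight spanning forest returned by Chow--Liu must coincide with the unique forest that is optimal under the true weights $I(u;v)$, which by \cref{assumption:chow-liu-recovers-correct-skeleton} is exactly $\skel(G^*)$. The argument therefore decomposes into a concentration step, controlling all $\binom{n}{2}$ estimates simultaneously, and a combinatorial MST step that exploits the $\eps_\pd$ gap in the assumption.

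\textbf{Concentration step.}
First I would establish that, with $m = \Omega(\log(n)/\eps_\pd^2)$ samples, every one of the $\binom{n}{2}$ estimates $\hat{I}(u;v)$ lies within $\eps_\pd/2$ of $I(u;v)$ with probability at least $1 - 1/n^{c}$ for some constant $c>0$. Since each variable has a fixed finite alphabet $\Sigma$, the empirical joint pmf of any pair $(u,v)$ concentrates in total variation at rate $O(1/\sqrt{m})$, and the plug-in mutual information is a bounded-differences functional of the joint pmf. A standard McDiarmid (or Bernstein) argument therefore yields a tail bound of the form $\Pr[|\hat{I}(u;v) - I(u;v)| \geq \eps_\pd/2] \leq \exp(-\Omega(m \eps_\pd^2))$, and the $\log n$ overhead arises purely from a union bound over the $\binom{n}{2}$ pairs.

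\textbf{MST cycle argument.}
Conditional on the above high-probability event, the rest is purely combinatorial. Chow--Liu returns a maximum weight spanning forest with respect to $\{\hat{I}(u;v)\}$. For any candidate non-skeleton pair $(u,v)$, consider the unique $u$--$v$ path in $\skel(G^*)$; part~(1) of \cref{assumption:chow-liu-recovers-correct-skeleton} gives $I(a;b) \geq I(u;v) + \eps_\pd$ for every edge $(a,b)$ along that path, which after applying the $\eps_\pd/2$ concentration bound on both sides becomes the strict empirical inequality $\hat{I}(a;b) > \hat{I}(u;v)$. The MST cycle property then forbids $(u,v)$ from the output. Simultaneously, part~(2) of the assumption together with concentration ensures every skeleton edge has empirical weight bounded away from zero and is therefore preferred to any non-skeleton edge across its fundamental cut. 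The output of Chow--Liu thus coincides with $\skel(G^*)$.

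\textbf{Main obstacle.}
The chief technical point is producing a clean concentration inequality for the plug-in mutual information estimator with $\exp(-\Omega(m\eps_\pd^2))$ tails and only logarithmic dependence on $n$; a naive invocation of the tester in \cref{cor:CMI_tester} would add spurious alphabet and $\log(1/\delta)$ factors, so it is cleaner to apply bounded differences directly to $\hat{I}(u;v)$. Once that is in place, the MST cycle argument is entirely standard, and the sample complexity $m = \Omega(\log(n)/\eps_\pd^2)$ follows from combining the per-pair tail with a union bound over $\binom{n}{2}$ pairs.
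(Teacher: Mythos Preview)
Your proposal is correct and follows essentially the same approach as the paper: both establish uniform concentration of all $\binom{n}{2}$ pairwise empirical mutual informations (the paper uses tolerance $\eps_\pd/3$, you use $\eps_\pd/2$) and then combine the $\eps_\pd$ gap in \cref{assumption:chow-liu-recovers-correct-skeleton} with the MST cycle/cut characterization. The only cosmetic difference is that the paper explicitly splits non-skeleton pairs into a same-component case (your cycle argument via part~(1)) and a different-component case (where $I(u;v)=0$ and part~(2) is invoked directly), whereas you fold the latter into the cut property.
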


Combining \cref{lem:chow-liu-recovers-correct-skeleton} with our algorithm \cref{alg:known-skeleton-and-max-in-degree}, one can learn a polytree that is $\eps$-close in KL with $\tilde{\cO} \left( \max \left\{ \frac{\log(n)}{\eps_{\pd}^2}, \frac{2^{d} \cdot n}{\eps} \right\} \right)$ samples, where $\eps_{\pd}$ depends on the distribution $\pd$.

\section{Lower bound}
\label{sec:lower-bound}

In this section, we show that $\Omega(n/\eps)$ samples are necessary \emph{even when a known skeleton is provided}.
For constant in-degree $d$, this shows that our proposed algorithm in \cref{sec:known-skeleton-and-max-in-degree} is sample-optimal up to logarithmic factors.

We first begin by showing a lower bound of $\Omega(1/\eps)$ on a graph with three vertices, even when the skeleton is given.
Let $G_1$ be $X \to Z \to Y$ and $G_2$ be $X \to Z \gets Y$, such that $\skel(G_1) = \skel(G_2)$ is $X - Z - Y$.
Letting $Bern(1/2)$ denote the Bernoulli distribution with parameter 1/2, i.e.\ a fair coin flip, we define $\pd_1$ and $\pd_2$ as follows:

\begin{equation}
\label{eq:lower-bound-distributions}
\pd_1:
\begin{cases}
X \sim \Bern(1/2)\\
Z = \begin{cases}
X & \text{w.p. $1/2$}\\
\Bern(1/2) & \text{w.p. $1/2$}
\end{cases}\\
Y =
\begin{cases}
Z & \text{w.p. $\sqrt{\eps}$}\\
\Bern(1/2) & \text{w.p. $1-\sqrt{\eps}$}
\end{cases}
\end{cases}
\;\hfill\;
\pd_2:
\begin{cases}
X \sim \Bern(1/2)\\
Y \sim \Bern(1/2)\\
Z =
\begin{cases}
X & \text{w.p. $1/2$}\\
Y & \text{w.p. $\sqrt{\eps}$}\\
\Bern(1/2) & \text{w.p. $1/2 - \sqrt{\eps}$}
\end{cases}
\end{cases}
\end{equation}

The intuition is that we keep the edge $X \to Z$ ``roughly the same'' and tweak the edge $Y - Z$ between the distributions.
By defining $P_{i,G}$ as projecting $P_i$ onto $G$, one can show \cref{lem:lower-bound-three-variable-lemma}; see \cref{sec:lower-bound-lemma} for its proof.

\begin{restatable}[Key lower bound lemma]{mylemma}{lowerboundthreevariablelemma}
\label{lem:lower-bound-three-variable-lemma}
Let $G_1$ be $X \to Z \to Y$ and $G_2$ be $X \to Z \gets Y$, such that $\skel(G_1) = \skel(G_2)$ is $X - Z - Y$.
With respect to \cref{eq:lower-bound-distributions}, we have the following:
\begin{enumerate}
    \item $\sqhell(\pd_1, \pd_2) \in \cO(\eps)$
    \item $\kl(\pd_1 \;\|\; P_{1,G_1}) = 0$ and $\kl(\pd_1 \;\|\; P_{1,G_2}) \in \Omega(\eps)$
    \item $\kl(\pd_2 \;\|\; P_{2,G_2}) = 0$ and $\kl(\pd_2 \;\|\; P_{2,G_1}) \in \Omega(\eps)$
\end{enumerate}
\end{restatable}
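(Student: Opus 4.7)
The plan is to prove Parts~2 and~3 using the mutual-information decomposition of \cref{eq:mutual_information_decom_chou_liu} together with the conditional independences implied by each graph, and to handle the Hellinger bound in Part~1 by expanding both pmfs around their common $\eps = 0$ limit. The main obstacle is the finite-but-tedious bookkeeping in the Hellinger expansion; since the support has only eight atoms and the limiting $\pd_0$ is bounded away from zero, this step can alternatively be carried out by exact arithmetic.

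\textbf{KL parts.} By construction $\pd_1$ factorizes as $\pd_1(x)\pd_1(z \mid x)\pd_1(y \mid z)$ and $\pd_2$ as $\pd_2(x)\pd_2(y)\pd_2(z \mid x, y)$, so $\pd_1$ is Markov with respect to $G_1$ and $\pd_2$ with respect to $G_2$, immediately yielding $\kl(\pd_1 \;\|\; P_{1, G_1}) = \kl(\pd_2 \;\|\; P_{2, G_2}) = 0$. For the off-diagonal terms, applying \cref{eq:mutual_information_decom_chou_liu} both to the true graph (where the KL vanishes) and to the alternative graph and subtracting cancels the structure-independent entropies, leaving $\kl(\pd \;\|\; \pd_G) = \sum_v I_{\pd}(v; \pi_{G^*}(v)) - \sum_v I_{\pd}(v; \pi_G(v))$ whenever $\pd$ is Markov on $G^*$. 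For Part~2 this becomes $\kl(\pd_1 \;\|\; P_{1, G_2}) = I_{\pd_1}(Z; X) + I_{\pd_1}(Y; Z) - I_{\pd_1}(Z; X, Y)$, which simplifies to $I_{\pd_1}(X; Y)$ upon using the identity $I_{\pd_1}(Z; X, Y) = I_{\pd_1}(Z; X) + I_{\pd_1}(Z; Y) - I_{\pd_1}(X; Y)$ that follows from the Markov-chain independence $X \perp Y \mid Z$ under $\pd_1$. Symmetrically, Part~3 reduces to $\kl(\pd_2 \;\|\; P_{2, G_1}) = I_{\pd_2}(X; Y \mid Z)$ via the root-node independence $X \perp Y$ under $\pd_2$. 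I would then compute $\pd_1(X, Y)$ and $\pd_2(X, Y \mid Z)$ in closed form and observe that each differs from its independent counterpart by $\pm \sqrt{\eps}/8$ at every atom; the second-order Taylor identity $\sum_u p(u) \log(p(u)/q(u)) = \tfrac{1}{2} \sum_u (p(u) - q(u))^2/q(u) + O(\max_u |p(u) - q(u)|^3)$ then yields $I_{\pd_1}(X; Y), I_{\pd_2}(X; Y \mid Z) = \Theta(\eps)$, which is $\Omega(\eps)$ as required.

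\textbf{Hellinger part.} The key observation is that at $\eps = 0$ both distributions collapse to the same $\pd_0$: $X \sim \Bern(1/2)$, $Z$ equal to $X$ with probability $1/2$ and uniform otherwise, and $Y \sim \Bern(1/2)$ independent of $(X, Z)$. Writing $\pd_i = \pd_0 + \sqrt{\eps}\, \Delta_i + O(\eps)$ with perturbations satisfying $\sum_{x, y, z} \Delta_i(x, y, z) = 0$, a pointwise Taylor expansion gives
\begin{equation*}
\sqrt{\pd_1 \pd_2} = \pd_0 + \tfrac{\sqrt{\eps}}{2}(\Delta_1 + \Delta_2) - \tfrac{\eps}{8} \cdot \frac{(\Delta_1 - \Delta_2)^2}{\pd_0} + O(\eps^{3/2}).
\end{equation*}
Summing over the eight atoms, the $\pd_0$ terms contribute $1$, the linear-in-$\sqrt{\eps}$ term vanishes because $\sum \Delta_i = 0$, and one obtains $\sqhell(\pd_1, \pd_2) = \tfrac{\eps}{8} \sum_{x, y, z} (\Delta_1 - \Delta_2)^2 / \pd_0 + O(\eps^{3/2}) = O(\eps)$, using that $\pd_0$ is bounded below by $1/16$ on its support and the $\Delta_i$ are bounded.
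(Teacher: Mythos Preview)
Your proposal is correct and follows essentially the same approach as the paper: both reduce the off-diagonal KL terms to $I_{\pd_1}(X;Y)$ and $I_{\pd_2}(X;Y\mid Z)$ via the chain-rule identity $I(Z;X,Y)=I(Z;X)+I(Z;Y)+I(X;Y\mid Z)-I(X;Y)$ together with the relevant conditional independences, and both handle the Hellinger bound by Taylor-expanding around the common $\eps=0$ limit. The only minor difference is that the paper computes the final $\Theta(\eps)$ bounds entry-by-entry from explicit probability tables (\cref{tab:all-probability-tables}) using log inequalities, whereas you use the chi-square approximation of KL and a clean second-order expansion of $\sqrt{\pd_1\pd_2}$ in terms of the perturbations $\Delta_i$; both routes are valid and yield the same conclusions.
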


Our hardness result (\cref{lem:lower-bound-for-three-variables}) is obtained by reducing the problem of finding an $\eps$-close graph orientation of $X - Z - Y$ to the problem of \emph{testing} whether the samples are drawn from $\pd_1$ or $\pd_2$:
To ensure $\eps$-closeness in the graph orientation, one has to correctly determine whether the samples come from $\pd_1$ or $\pd_2$ and then pick $G_1$ or $G_2$ respectively.
Put differently, if one can solve the problem in \cref{lem:lower-bound-for-three-variables}, then one can use that algorithm to solve the problem in \cref{lem:lower-bound-three-variable-lemma}.
However, it is well-known that distinguishing two distributions whose squared Hellinger distance is $\eps$ requires $\Omega(1/\eps)$ samples (e.g.\ see \cite[Theorem 4.7]{BarYossef:02}).

\begin{restatable}{mylemma}{lowerboundforthreevariables}
\label{lem:lower-bound-for-three-variables}
Even when given $\skel(G^*)$, it takes $\Omega(1/\eps)$ samples to learn an $\eps$-close graph orientation of $G^*$ for distributions on $\{0,1\}^3$.
\end{restatable}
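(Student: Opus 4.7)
The plan is to reduce two-point hypothesis testing between $\pd_1$ and $\pd_2$ (from \cref{eq:lower-bound-distributions}) to the problem of outputting an $\eps$-close orientation of the known skeleton $X - Z - Y$, and then invoke the standard Hellinger-based sample complexity lower bound for testing. The observation that makes the reduction clean is that on the three-vertex skeleton there are only four orientations: the unique one containing a v-structure at $Z$ (namely $G_2 = X \to Z \gets Y$) and three others ($X \to Z \to Y$, $X \gets Z \to Y$, $X \gets Z \gets Y$) that are all Markov equivalent to $G_1 = X \to Z \to Y$ and therefore share the same KL projection of any target distribution.

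Suppose towards contradiction that some algorithm $\cA$ uses $m = o(1/\eps)$ samples and, with probability $\geq 2/3$, outputs an orientation $\hat{G}$ with $\kl(\pd \;\|\; \pd_{\hat{G}}) \leq \eps$. After a harmless rescaling of $\eps$ by a constant (chosen strictly below the hidden constant in the $\Omega(\eps)$ bounds of \cref{lem:lower-bound-three-variable-lemma}), parts (2)--(3) of that lemma together with Markov equivalence force the following dichotomy: if $\pd = \pd_1$ then $\hat{G}$ must be one of the three non-v-structure orientations, whereas if $\pd = \pd_2$ then $\hat{G}$ must be exactly $G_2$. Define the tester $T$ that runs $\cA$ and outputs ``$\pd_2$'' iff $\hat{G}$ contains a v-structure at $Z$; this tester correctly distinguishes $\pd_1^m$ from $\pd_2^m$ with probability $\geq 2/3$.

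Finally, I would apply the standard testing lower bound (e.g.\ \cite[Theorem 4.7]{BarYossef:02}): distinguishing two distributions $P, Q$ with constant advantage requires $m = \Omega(1/\sqhell(P, Q))$ samples. Combined with part (1) of \cref{lem:lower-bound-three-variable-lemma}, which gives $\sqhell(\pd_1, \pd_2) = \cO(\eps)$, this yields $m = \Omega(1/\eps)$, contradicting $m = o(1/\eps)$. All the substantive work---designing the three-variable gadget so that its two instantiations simultaneously have small Hellinger distance yet large KL gap when projected onto the ``wrong'' orientation---is encapsulated by \cref{lem:lower-bound-three-variable-lemma}, so the main obstacle here is merely the bookkeeping of constants and the observation that three of the four skeleton orientations lie in the same Markov equivalence class, so that ``$\eps$-close when $\pd = \pd_1$'' is equivalent to ``contains no v-structure at $Z$''.
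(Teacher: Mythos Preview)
Your proposal is correct and follows essentially the same approach as the paper: reduce learning an $\eps$-close orientation of $X - Z - Y$ to distinguishing $\pd_1$ from $\pd_2$, then invoke the Hellinger-based testing lower bound of \cite[Theorem~4.7]{BarYossef:02} together with \cref{lem:lower-bound-three-variable-lemma}. Your write-up is in fact slightly more careful than the paper's, in that you explicitly handle the other two orientations of the skeleton via Markov equivalence (so that ``$\eps$-close under $\pd_1$'' collapses to ``no v-structure at $Z$''), a point the paper's proof leaves implicit.
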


Using the above construction as a gadget, we can obtain a dependency on $n$ in our lower bound by constructing $n/3$ independent copies of the above gadget, \`{a} la proof strategy of \citet[Theorem 7.6]{DBLP:journals/corr/abs-2011-04144}. 
For some constant $c > 0$, we know that a constant $1/c$ fraction of the gadgets will incur an error or more than $\eps/n$ if less than $cn/\eps$ samples are used.
The desired result then follows from the tensorization of KL divergence, i.e., $\kl \left( \prod_i P_i \;\|\; \prod_i Q_i \right) = \sum_i \kl(P_i \;\|\;  Q_i)$.

\begin{restatable}{theorem}{lowerboundfornvariables}
\label{thm:lower-bound-for-n-variables}
Even when given $\skel(G^*)$, it takes $\Omega(n/\eps)$ samples to learn an $\eps$-close graph orientation of $G^*$ for distributions on $\{0,1\}^n$.
\end{restatable}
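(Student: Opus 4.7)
The plan is to lift the three-variable hard instance of Lemma~\ref{lem:lower-bound-for-three-variables} by taking $k = n/3$ vertex-disjoint copies of the gadget of Equation~\eqref{eq:lower-bound-distributions}, each calibrated with its own accuracy parameter $\eps_0 := c_0 \eps / k$ for a small absolute constant $c_0 > 0$. The given skeleton is then the disjoint union of $k$ paths $X_i - Z_i - Y_i$ (a forest, hence a valid polytree skeleton). For each $b \in \{0,1\}^k$, let $\pd_b := \bigotimes_{i=1}^{k} \pd_{b_i+1}^{(i)}$ and $G_b^* := \bigcup_{i=1}^{k} G_{b_i+1}^{(i)}$, where the superscript $(i)$ denotes the $i$-th copy of the gadget built with parameter $\eps_0$.

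First I would verify that the KL divergence tensorizes under projection onto any orientation of the given skeleton. Because the gadgets share no vertices, for any orientation $\hat{G}$ one has $\pd_{b, \hat{G}} = \bigotimes_i \pd_{b_i+1, \hat{G}_i}^{(i)}$, where $\hat{G}_i$ denotes $\hat{G}$ restricted to the $i$-th triple, so that
\[
\kl(\pd_b \;\|\; \pd_{b, \hat{G}}) = \sum_{i=1}^{k} \kl\!\left( \pd_{b_i+1}^{(i)} \;\|\; \pd_{b_i+1, \hat{G}_i}^{(i)} \right).
\]
By Lemma~\ref{lem:lower-bound-three-variable-lemma}, each summand is either $0$ (when $\hat{G}_i$ is Markov-equivalent to $G_{b_i+1}^{(i)}$) or at least $\Omega(\eps_0)$. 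Consequently, any algorithm whose output has KL $\leq \eps$ must correctly identify the Markov class on at least $(1 - c_1) k$ gadgets for some $c_1 < 1/2$, provided $c_0$ is chosen small enough.

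For the information-theoretic lower bound, I would draw $b$ uniformly from $\{0,1\}^k$ and exploit the independence across gadgets: the projection of $m$ samples of $\pd_b$ onto coordinates $\{X_i, Z_i, Y_i\}$ is exactly $m$ i.i.d.\ samples of $\pd_{b_i+1}^{(i)}$, and the $b_i$ are mutually independent under the prior. Combining $\sqhell(\pd_1^{(i)}, \pd_2^{(i)}) = O(\eps_0)$ from Lemma~\ref{lem:lower-bound-three-variable-lemma}, the tensorization $\sqhell(P^m, Q^m) \leq m \sqhell(P,Q)$, and Le Cam's two-point method, the probability of correctly guessing any fixed $b_i$ is at most $\tfrac{1}{2} + O(\sqrt{m \eps_0})$. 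Choosing $m \leq c_2 / \eps_0$ for a sufficiently small $c_2$ pushes this per-gadget success probability below a strict constant, so the expected number of misidentified gadgets is $\Omega(k)$, and by Markov's inequality at least (say) $k/8$ of them are wrong with constant probability, which strictly exceeds the $c_1 k$ budget from the previous paragraph.

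Combining the two halves forces $m = \Omega(1/\eps_0) = \Omega(k/\eps) = \Omega(n/\eps)$. The main delicate point is the joint calibration of $\eps_0 = \Theta(\eps/k)$ with the constants $c_0, c_1, c_2$: this is precisely what lets the per-gadget two-point hardness of order $1/\eps_0$ accumulate additively into an $\Omega(n/\eps)$ bound without the KL budget $\eps$ absorbing the hardness. Everything else (the reduction from learning to testing, the Le Cam bound, and the KL tensorization) is routine once the product structure of the hard instance is in place.
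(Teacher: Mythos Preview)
Your proposal is correct and follows the same route as the paper: $n/3$ independent three-variable gadgets calibrated at accuracy $\Theta(\eps/n)$, KL tensorization across gadgets, and a per-gadget two-point testing lower bound aggregated by a counting argument (you use Markov's inequality where the paper invokes a Chernoff bound---yours is the more careful choice, since the per-gadget failure events need not be independent for an arbitrary learner). One small slip: in your second paragraph the direction on $c_0$ is reversed---you need $c_0$ \emph{large} enough so that each misidentified gadget costs enough KL to force the $c_1 k$ cap; taking $c_0$ small makes wrong gadgets cheap and lets arbitrarily many fit in the $\eps$ budget, so the implication would go the wrong way (your closing remark about jointly calibrating $c_0,c_1,c_2$ is exactly where this gets fixed).
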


\section{Conclusion and discussion}
\label{sec:conclusion}

In this work, we studied the problem of estimating a distribution defined on a $d$-polytree $\pd$ with graph structure $G^*$ using finite observational samples.
We designed and analyzed an efficient algorithm that produces an estimate $\hat{\pd}$ such that $\kl(\pd \;\|\; \hat{\pd}) \leq \eps$ assuming access to $\skel(G^*)$ and $d$.
The skeleton $\skel(G^*)$ is recoverable under \cref{assumption:chow-liu-recovers-correct-skeleton} and we show that there is an inherent hardness in the learning problem even under the assumption that $\skel(G^*)$ is given.
For constant $d$, our hardness result shows that our proposed algorithm is sample-optimal up to logarithmic factors.

\medskip

It is natural to ask whether what we can do with access to a false skeleton that is approximately correct (i.e.\ has some orientation close in KL to the ground truth) produced by running the Chow-Liu algorithm on the sample statistics.
However, it is unclear to us why we can hope to design efficient algorithms with provable guarantees in this case for two reasons:
\begin{itemize}
    \item The Chow-Liu algorithm only uses order-1 mutual information while the KL divergence of Equation (1) requires information from order-$d$ mutual information. It is unclear why one can hope that this false skeleton would yield provable guarantees with respect to \cref{eq:mutual_information_decom_chou_liu}.
    \item An ``approximately correct'' skeleton may have potentially unknown number of edges in the skeleton being wrong and we do not see how to design efficient global orientation algorithms using only statistics from the ground truth samples.
\end{itemize}
Without the true skeleton, a ``local algorithm'' (such as ours) can be tricked into some ``local optima'' and it is hard to argue why the output would obtain ``global guarantees'' with respect to the parent sets of \cref{eq:mutual_information_decom_chou_liu}.

Another interesting open question is whether one can extend the hardness result to arbitrary $d \geq 1$, or design more efficient learning algorithms for $d$-polytrees.
In particular, we are unaware of any obstruction a lower bound for $|\Sigma| > 2$ and $d > 2$.
While we do not know an optimal construction, the following construction (emulating Appendix A.2 of \cite{CanonneDKS20}) yields $\Omega(\frac{n 2^d}{(d+1) \varepsilon^2})$, showing that the exponential dependence on $d$ is unavoidable. %
Consider $\frac{n}{d+1}$ stars with binary alphabets, where each star center has $d$ incoming parents.
Each parental node is set to be an independent uniform coin flip over the binary alphabet and so it takes $\Omega(2^d/\varepsilon^2)$ to learn each star to accuracy $\varepsilon$.
As KL is additive, one would require any constant fraction of the stars to incur less than $\frac{\varepsilon (d+1)}{n}$ error.
To do so, one would need $\Omega(\frac{n 2^d}{(d+1) \varepsilon^2})$ samples.
\acks{
This research/project is supported by the National Research Foundation, Singapore under its AI Singapore Programme (AISG Award No: AISG-PhD/2021-08-013).
CC was supported by an ARC DECRA (DE230101329) and an unrestricted gift from Google Research.
Joy is supported by the JD Technology Scholarship.
We would like to thank Vipul Arora for valuable feedback and discussions.
}

\bibliography{refs}

\newpage
\appendix

\crefalias{section}{appendix} %
\section{Deferred proofs}
\label{sec:appendix-proofs}

\subsection{Adapting the known tester result of \texorpdfstring{\cite{DBLP:journals/corr/abs-2011-04144}}{Bhattacharyya et al.\ (2021)}}
\label{sec:adapting-known-tester-result}

\cref{cor:CMI_tester} is adapted from Theorem 1.3 of \cite{DBLP:journals/corr/abs-2011-04144}.
See \cref{sec:appendix-explicit-constant} for a derivation of a constant $C$ that works.

\begin{restatable}[Conditional MI Tester, {\cite[Theorem 1.3]{DBLP:journals/corr/abs-2011-04144}}]{theorem}{CMItesterthm}
\label{thm:CMI_tester}
Fix any $\eps > 0$.
Let $(X, Y, Z)$ be three random variables over $\Sigma_X, \Sigma_Y, \Sigma_Z$ respectively.
Given the empirical distribution $(\hat{X}, \hat{Y}, \hat{Z})$ over a size $N$ sample of $(X, Y, Z)$, there exists a universal constant $0 < C < 1$ so that for any
\[
N
\geq \Theta \left( \frac{|\Sigma_X| \cdot |\Sigma_Y| \cdot |\Sigma_Z|}{\eps} \cdot \log \frac{|\Sigma_X| \cdot |\Sigma_Y| \cdot |\Sigma_Z|}{\delta} \cdot \log \frac{|\Sigma_X| \cdot |\Sigma_Y| \cdot |\Sigma_Z| \cdot \log(1 / \delta)}{\eps} \right),
\]
the following results hold with probability $1 - \delta$:
\begin{enumerate}
    \item\label{item:equal_case_CMI_tester} If $I(X; Y \mid Z) = 0$, then $I(\hat{X}; \hat{Y} \mid \hat{Z}) < \eps$.
    \item\label{item:farness_case_CMI_tester} If $I(X; Y \mid Z) \geq \eps$, then $I(\hat{X}; \hat{Y} \mid \hat{Z}) > C \cdot I(X; Y \mid Z)$.
\end{enumerate}
\end{restatable}

In our notation, we use $\hat{I}(X;Y \mid Z)$ to mean the MI of the empirical distribution $I(\hat{X}; \hat{Y} \mid \hat{Z})$.

\CMItester*
\begin{proof}
In the original proof of (\ref{item:equal_case_CMI_tester}) in {\cite[Theorem 1.3]{DBLP:journals/corr/abs-2011-04144}}, it is possible to change $\eps$ to $C \cdot \eps$ by paying a factor $1/C$ more in sample complexity, yielding our first statement.

Now, suppose $\hat{I}(X; Y \mid Z) \leq C \cdot \eps$.
Assume, for a contradiction, that $I(X; Y \mid Z) \geq \eps$.
Then, statement \ref{item:farness_case_CMI_tester} of \cref{thm:CMI_tester} tells us that $\hat{I}(X; Y \mid Z) > C \cdot I(X; Y \mid Z) \geq C \cdot \eps$.
This contradicts the assumption that $\hat{I}(X; Y \mid Z) \leq C \cdot \eps$.
Therefore, we must have $I(X; Y \mid Z) < \eps$.
\end{proof}

\subsection{Algorithm analysis}
\label{sec:appx-algo-analysis}

The following identity (\cref{lem:useful-identity}) of mutual information and two properties about (conditional) mutual information on a polytree (\cref{lem:parent-CMI-manipulation}) which will be helpful in our proofs later.

\begin{restatable}[A useful identity]{mylemma}{usefulidentity}
\label{lem:useful-identity}
For any variable $v$ and sets $A, B \subseteq V \setminus \{v\}$, we have
\[
I(v; A \cup B)
= I(v; A) + I(v; B) + I(A; B \mid v) - I(A;B).
\]
\end{restatable}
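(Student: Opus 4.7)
The plan is to derive the identity purely algebraically from the chain rule for mutual information, applied in two different orders. Define the shorthand $I(v; A, B) := I(v; A \cup B)$. The chain rule gives
\[
I(v; A, B) = I(v; A) + I(v; B \mid A),
\]
so it suffices to show that
\[
I(v; B \mid A) = I(v; B) + I(A; B \mid v) - I(A; B). \tag{$\ast$}
\]

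To prove $(\ast)$, I would apply the chain rule to the joint mutual information $I(B; v, A)$ in the two natural orders:
\[
I(B; v, A) = I(B; A) + I(B; v \mid A) = I(B; v) + I(B; A \mid v).
\]
Rearranging the two expressions for $I(B; v, A)$ yields exactly $(\ast)$ (after noting the symmetry $I(B;v) = I(v;B)$ and $I(B;A \mid v) = I(A;B \mid v)$). Substituting $(\ast)$ back into the original chain rule expansion of $I(v; A \cup B)$ gives the claimed identity.

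I do not anticipate any real obstacle: the entire proof is a two-line manipulation using only the chain rule and the symmetry of mutual information, both of which are standard. The only minor care needed is to treat $A$ and $B$ as jointly distributed random vectors (so that expressions like $I(A;B \mid v)$ and $I(v; A \cup B)$ are well-defined as mutual information between random vectors), but this is already implicit in the statement since $A, B \subseteq V \setminus \{v\}$. If desired, one can alternatively verify the identity by expanding each term as a signed sum of joint entropies $H(\cdot)$ and observing that all terms except $H(v) + H(A,B) - H(v,A,B) = I(v; A \cup B)$ cancel.
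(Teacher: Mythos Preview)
Your proposal is correct and follows essentially the same approach as the paper: both apply the chain rule to $I(v,A;B)$ (equivalently $I(B;v,A)$) in two different orders to extract $I(v;B\mid A)$, then substitute into the chain-rule expansion $I(v;A\cup B)=I(v;A)+I(v;B\mid A)$.
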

\begin{proof}
By the chain rule for mutual information, we can express $I(v, A; B)$ in the following two ways:
\begin{enumerate}
    \item $I(v, A; B) = I(v; B) + I(A; B \mid v)$;
    \item $I(v, A; B) = I(A; B) + I(v; B \mid A)$.
\end{enumerate}
So,
\begin{align*}
I(v; A \cup B)
&= I(v; A) + I(v; B \mid A)\\
&= I(v; A) + I(v, A; B) - I(A; B)\\
&= I(v; A) + I(v; B) + I(A; B \mid v) - I(A; B).
\end{align*}
\end{proof}

\begin{restatable}{mylemma}{parentCMImanipulation}
\label{lem:parent-CMI-manipulation}
Let $v$ be an arbitrary vertex in a Bayesian polytree with parents $\pi(v)$.
Then, we have
\begin{enumerate}
    \item For any disjoint subsets $A, B \subseteq \pi(v)$,
    \[
    I(v; A \cup B) = I(v; A) + I(v; B) + I(A; B \mid v)
    \]
    \item For any subset $A \subseteq \pi(v)$,
    \[
    I(v; A) \geq \sum_{u \in A} I(v; u)
    \]
\end{enumerate}
\end{restatable}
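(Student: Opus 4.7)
The plan is to derive both parts from Lemma \ref{lem:useful-identity} combined with the structural property that distinct parents of a vertex in a polytree are marginally independent.

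For Part 1, Lemma \ref{lem:useful-identity} already gives the decomposition
\[
I(v; A \cup B) = I(v;A) + I(v;B) + I(A;B \mid v) - I(A;B),
\]
so the entire content of Part 1 is to show that $I(A;B) = 0$ whenever $A, B \subseteq \pi(v)$ are disjoint. I will argue this via d-separation on the polytree DAG $G^*$: for any $a \in A$ and $b \in B$, the skeleton $\skel(G^*)$ is a forest, so the unique path between $a$ and $b$ in the skeleton (if one exists) must pass through $v$, since both are neighbors of $v$ and the forest contains no cycles. On this path, $v$ is a collider (arrows from $a$ and $b$ both point into $v$ because $a, b \in \pi(v)$). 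Without conditioning, a collider blocks the path, so $a$ and $b$ are d-separated by the empty set, and by the Markov property $A \perp B$. Hence $I(A;B) = 0$ and the identity reduces to the claimed form.

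For Part 2, I will induct on $|A|$. The base case $|A| = 1$ is the trivial equality $I(v;u) = I(v;u)$. For the inductive step, pick any $u \in A$, write $A = \{u\} \;\dot\cup\; (A \setminus \{u\})$, and apply Part 1 (which is valid since both pieces are disjoint subsets of $\pi(v)$):
\[
I(v; A) = I(v; u) + I(v; A \setminus \{u\}) + I(u ; A \setminus \{u\} \mid v).
\]
The last term is nonnegative by nonnegativity of conditional mutual information, so $I(v;A) \geq I(v;u) + I(v; A \setminus \{u\})$. Applying the inductive hypothesis to $A \setminus \{u\}$ then gives $I(v;A) \geq \sum_{u' \in A} I(v; u')$.

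The only substantive step is the marginal independence of disjoint parent subsets, and the potential subtlety there is making sure the d-separation argument handles subsets $A, B$ (not just single vertices) cleanly. This is fine: the standard global Markov property for Bayesian networks gives $A \perp B$ as long as every pair $(a,b)$ with $a \in A, b \in B$ is d-separated by the empty set, which is what the polytree structure ensures via the collider-at-$v$ argument above. Everything else is bookkeeping with Lemma \ref{lem:useful-identity} and a one-line induction.
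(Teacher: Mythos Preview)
Your proposal is correct and follows essentially the same approach as the paper: apply Lemma~\ref{lem:useful-identity} and use the polytree structure to argue $I(A;B)=0$ for Part~1, then peel off one parent at a time and drop the nonnegative conditional mutual information term for Part~2. The paper is terser about why $I(A;B)=0$, but your d-separation argument just makes that step explicit.
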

\begin{proof}
For the first equality, apply \cref{lem:useful-identity} by observing that $I(A; B) = 0$ since $A, B \subseteq \pi(v)$.

For the second inequality, apply the first equality $|A|$ times with the observation that conditional mutual information is non-negative.
Suppose $A = \{a_1, \ldots, a_k\}$.
Then,
\begin{align*}
I(v; A)
&= I(v; \{a_1\}) + I(v; A \setminus \{a_1\}) + I(\{a_1\}; A \setminus \{a_1\} \mid v)\\
&\geq I(v; \{a_1\}) + I(v; A \setminus \{a_1\})\\
& \ldots\\
&\geq \sum_{u \in A} I(v; u)
\end{align*}
\end{proof}

\orientedarcsinPhaseonearegroundtruthorientations*
\begin{proof}
We consider the three cases in which we orient edges within the while loop:
\begin{enumerate}
    \item Strong v-structures (in Phase 1)
    \item Forced orientation due to local checks (in Phase 2)
    \item Forced orientation due to Meek $R1(d)$ (in Phase 2)
\end{enumerate}

\paragraph{Case 1: Strong v-structures}

Consider an arbitrary strong deg-$d$ v-structure with center $v$. That is,
there is a set $S$ (all neighbors of $v$) with size $|S| = d$, such that $\hat{I}(u ; S \setminus \{u\} \mid v) \geq C \cdot \eps$ for any $u \in S$.
So, by \cref{cor:CMI_tester}, we know that $I(u; S \setminus \{u\} \mid v) > 0$ for all $u \in S$.

Consider an arbitrary vertex $u_0 \in S$.
Suppose, for a contradiction, that the ground truth orients \emph{some} edge outwards from $v$, say $v \to u_0$ for some $u_0 \in S$.
This would imply that $I(u_0; S \setminus \{u_0\} \mid v) = 0$.
This contradicts the fact that we had $I (u_0 ; S \setminus \{u_0\} \mid v) > 0$ for any $u \in S$.
Therefore, for all $u \in S$, orienting $u \to v$ is a ground truth orientation.

\paragraph{Case 2: Forced orientation due to local checks}

Consider an arbitrary vertex $v$. Suppose it currently has incoming oriented arcs $N^{in}(v)$ and we are checking for the orientation for an unoriented neighbor $u$ of $v$. By induction, the existing incoming arcs to $v$ are ground truth orientations.

If the ground truth orients $u \to v$, then $I(u; N^{in}(v)) = 0$ and we should have $\hat{I}(u; N^{in}(v)) < C \cdot \eps \leq \eps$ via \cref{cor:CMI_tester}.
Hence, if we detect $\hat{I}(N^{in} (v) ; u) > \eps$, it must be the case that the ground truth orientation is $u \leftarrow v$, which is what we also orient.

Meanwhile, if the ground truth orients $u \leftarrow v$, then $I(u; N^{in}(v) \mid v) = 0$ and we should have $\hat{I}(u; N^{in}(v) \mid v) \leq C \cdot \eps \leq \eps$ via \cref{cor:CMI_tester}.
Hence, if we detect $\hat{I}(u; N^{in}(v) \mid v) > \eps$, it must be the case that the ground truth orientation is $u \to v$, which is what we also orient.

See \cref{fig:running-example}(c) for an illustration.
Note that we may possibly detect both $\hat{I}(u; N^{in}(v)) \leq \eps$ and $\hat{I}(u; N^{in}(v) \mid v) \leq \eps$.
In that case, we leave the edge $u \sim v$ unoriented.

\paragraph{Case 3: Forced orientation due to Meek $R1(d)$}
Meek $R1(d)$ only triggers when there are $d$ incoming arcs to a particular vertex.
Since oriented arcs are inductively ground truth orientations and there are at most $d^* \leq d$ incoming arcs to any vertex, the forced orientations due to Meek $R1(d)$ will always be correct.

\end{proof}

\decompositionispossiblewhenvstructureisabsent*
\begin{proof}
Since $S \cup S' \subseteq \pi(v)$, we see that $I(u; S \cup S' \setminus \{u\}) = 0$.
Furthermore, since $S \neq \emptyset$, Phase 1 guarantees that there exists a vertex $u \in S \cup S'$ such that $\hat{I}(u; S \cup S' \setminus \{u\} \mid v) \leq C \cdot \eps$.
To see why, we need to look at \cref{alg:line:orient_condition} of \cref{alg:phase1} where we check all subsets $T$ of $\pi(v)$ (as well as some other sets) to see if \emph{every} $u \in T$ satisfies $\hat{I}(u; T \backslash \{u\} | v) \geq C \cdot \eps$.
From here, we can see that if a subset $T$ of $\pi(v)$ is not \emph{all} oriented into $v$, then we know that from \cref{alg:line:orient_condition} of \cref{alg:phase1} that there exists some $u \in T$ such that $\hat{I}(u; T \backslash \{u\} | v) < C \cdot \eps$.
Applying this to $T = S \cup S'$, where the set of unoriented neighboring nodes $S$ is non-empty, we have our claim.
As $\hat{I}(u; S \cup S' \setminus \{u\} \mid v) \leq C \cdot \eps$, \cref{cor:CMI_tester} tells us that $I(u; S \cup S' \setminus \{u\}) < \eps$, we get
\begin{align*}
  &\; I(v ; S \cup S')\\
  = &\; I(v ; S \cup S' \setminus \{u\}) + I(v ; u) + I(u ; S \cup S' \setminus \{u\} \mid v) - I (u ; S \cup S' \setminus \{u\})\\
  = &\; I(v ; S \cup S' \setminus \{u\}) + I(v ; u) + I(u ; S \cup S' \setminus \{u\} \mid v)\\
  \leq &\; I(v ; S \cup S' \setminus \{u\}) + I(v ; u) + \eps
\end{align*}
\end{proof}

\decompositionsequenceexistsformissingvstructure*
\begin{proof}
Initializing $S' = \pi^{in}(v)$ and $S = \pi(v) \setminus \pi^{in}(v) = \pi^{un}(v)$, we can repeatedly apply \cref{lem:decomposition_is_possible_when_v_structure_is_absent} to remove vertices one by one, until $S = \emptyset$.
Without loss of generality, by relabelling the vertices, we may assume that \cref{lem:decomposition_is_possible_when_v_structure_is_absent} removes $u_1$, then $u_2$, and so on.
Let us denote the set of all removed vertices by $U$ and note that some of the removed vertices may come from $S' = \pi^{in}(v)$.

\begin{align*}
I(v ; \pi(v))
&\leq I(v ; \pi(v) \setminus \{u_1\}) + I(v ; u_1) + \eps && \text{By \cref{lem:decomposition_is_possible_when_v_structure_is_absent}}\\
&\leq I(v ; \pi(v) \setminus \{u_1, u_2\}) + I(v ; u_1) + I(v ; u_2) + 2 \eps && \text{By \cref{lem:decomposition_is_possible_when_v_structure_is_absent}}\\
&\leq \ldots\\
&\leq I(v ; \pi(v) \setminus U) + \sum_{u \in U} I(v ; u) + \eps \cdot |U| && \text{By \cref{lem:decomposition_is_possible_when_v_structure_is_absent}}
\end{align*}

Since $I(A; B) = 0$ for any $A \;\dot\cup\; B \subseteq \pi^{in}(v)$, we have
\begin{align*}
&\; I(v; \pi^{in}(v))\\
= &\; I(v; \pi^{in}(v) \setminus U) + I(v; \pi^{in}(v) \cap U) + I(\pi^{in}(v) \cap U ; \pi^{in}(v) \setminus U \mid v) && \text{By \cref{lem:parent-CMI-manipulation}}\\
\geq &\; I(v; \pi^{in}(v) \setminus U) + I(v; \pi^{in}(v) \cap U)\\
\geq &\; I(v; \pi^{in}(v) \setminus U) + \sum_{u \in \pi^{in}(v) \cap U} I(v;u) && \text{By \cref{lem:parent-CMI-manipulation}}
\end{align*}
where the second last inequality is because $I(\pi^{in}(v) ; \pi^{in}(v) \cap U \mid v) \geq 0$.

\begin{align*}
I(v ; \pi(v))
&\leq I(v ; \pi^{in}(v) \setminus U) + \sum_{u \in U} I(v ; u) + \eps \cdot |U| && \text{From above}\\
&= I(v ; \pi^{in}(v) \setminus U) + \sum_{u \in \pi^{in}(v) \cap U} I(v ; u) + \sum_{u \in \pi^{un}(v)} I(v ; u) + \eps \cdot |U| && \text{Since $\pi^{un} \subseteq U$}\\
&\leq I(v; \pi^{in}(v)) + \sum_{u \in \pi^{un}(v)} I(v ; u) + \eps \cdot |U| && \text{From above}\\
&\leq I(v; \pi^{in}(v)) + \sum_{u \in \pi^{un}(v)} I(v ; u) + \eps \cdot |\pi(v)| && \text{Since $U \subseteq \pi(v)$}
\end{align*}
\end{proof}

\singleextraincomingisokay*
\begin{proof}
Since $u \sim v \in E(H)$ remained unoriented, Phase 2 guarantees that $\hat{I}(u; \pi^{in}(v) \mid v) \leq \eps$ and $\hat{I}(u; \pi^{in}(v)) \leq \eps$.
Since $0 < C < 1$, we also see that $\hat{I}(u; \pi^{in}(v) \mid v) \leq C \cdot \eps$ and $\hat{I}(u; \pi^{in}(v)) \leq C \cdot \eps$ and so \cref{cor:CMI_tester} tells us that $I(u; \pi^{in}(v) \mid v) \leq \eps$ and $I(u; \pi^{in}(v)) \leq \eps$.
So,
\begin{align*}
&\; |I(v ; \pi^{in}(v) \cup u) - I(v ; \pi^{in}(v)) - I(v ; u)|\\
= &\; | I(u; \pi^{in}(v) \mid v) - I(u; \pi^{in}(v)) | && \text{By \cref{lem:useful-identity}}\\
= &\; \max\{I(u; \pi^{in}(v) \mid v), I(u; \pi^{in}(v)) \} && \text{At most one of these term can be non-zero}\\
\leq &\; \eps
\end{align*}
\end{proof}

\finaloutputisgood*
\begin{proof}
We will argue that this summation is bounded by individually bounding each term in the summation.
The main argument of the proof is that once we identified all the strong v-structures (and thus cancel out the scores of every strong v-structures), the rest should be roughly the score of a tree (up to additive $\eps$ error).
Then, since we are guaranteed to be given $\skel(G^*)$, the tree scores will match.

Let $A \subseteq V$ be the set of vertices which receive an additional incoming neighbor in the final phase, which we denote by $a_v \in V$, i.e.\ $\hat{\pi}(v) = \pi^{in}(v) \cup \{a_v\}$.
Note that the set of edges $\{a_v \to v\}_{v \in A}$ is exactly the edges of the undirected graph $H$ in the final phase.
See \cref{fig:final-phase} for an illustration.

To lower bound $\sum_{v \in V} I(v ; \hat{\pi}(v))$, we can show
\begin{align*}
&\; \sum_{v \in V} I(v ; \hat{\pi}(v))\\
= &\; \sum_{v \in A} I(v ; \hat{\pi}(v)) + \sum_{v \in V \setminus A} I(v ; \hat{\pi}(v))\\
\geq &\; \sum_{v \in A} \Big( I(v; \hat{\pi}(v) \setminus \{a_v\}) + I(v; a_v) - \eps \Big) + \sum_{v \in V \setminus A} I(v ; \hat{\pi}(v)) && \text{By \cref{lemma:single-extra-incoming-is-okay}}\\
= &\; \sum_{v \in A} I(v; \pi^{in}(v)) + \sum_{v \in A} I(v; a_v) + \sum_{v \in V \setminus A} I(v ; \pi^{in}(v)) - |A| \cdot \eps\\
= &\; \sum_{v \in V} I(v; \pi^{in}(v)) + \sum_{v \in A} I(v; a_v) - |A| \cdot \eps\\
\geq &\; \sum_{v \in V} I(v; \pi^{in}(v)) + \sum_{v \in A} I(v; a_v) - n \eps && \text{Since $A \subseteq V$ and $|V| = n$}
\end{align*}

Meanwhile, to upper bound $\sum_{v \in V} I(v ; \pi(v))$, we can show
\begin{align*}
&\; \sum_{v \in V} I(v ; \pi(v))\\
= &\; \sum_{\substack{v \in V\\ \pi^{un}(v) \neq \emptyset}} I(v ; \pi(v)) + \sum_{\substack{v \in V\\ \pi^{un}(v) = \emptyset}} I(v ; \pi(v))\\
\leq &\; \sum_{\substack{v \in V\\ \pi^{un}(v) \neq \emptyset}} \left( \eps \cdot |\pi(v)| + I (v ; \pi^{in}(v)) + \sum_{u \in \pi^{un}(v)} I (v ; u) \right) + \sum_{\substack{v \in V\\ \pi^{un}(v) = \emptyset}} I(v ; \pi(v)) && \text{By \cref{lem:decomposition_sequence_exists_for_missing_v_structure}}\\
= &\; \sum_{v \in V} I(v ; \pi^{in}(v)) + \sum_{\substack{v \in V\\ \pi^{un}(v) \neq \emptyset}} \left( \eps \cdot |\pi(v)| + \sum_{u \in \pi^{un}(v)} I (v ; u) \right)
\end{align*}
where the final equality is because $\pi^{in}(v) = \pi(v)$ when $\pi^{un}(v) = \emptyset$.
Since $|\pi(v)| \leq d^*$ and $|V| = n$, we get
\[
\sum_{v \in V} I(v ; \pi(v))
\leq n d^* \eps + \sum_{v \in V} I(v ; \pi^{in}(v)) + \sum_{\substack{v \in V\\ \pi^{in}(v) \neq \emptyset}} \sum_{u \in \pi^{un}(v)} I (v ; u)
\]

Putting together, we get
\begin{align*}
&\; \sum_{v \in V} I(v; \pi (v)) - \sum_{v \in V} I(v ; \hat{\pi} (v))\\
\leq &\; \left( n d^* \eps + \sum_{v \in V} I(v ; \pi^{in}(v)) + \sum_{\substack{v \in V\\ \pi^{in}(v) \neq \emptyset}} \sum_{u \in \pi^{un}(v)} I (v ; u) \right) && \text{From above}\\
&\; - \left( \sum_{v \in V} I(v; \pi^{in} (v)) + \sum_{v \in A} I(v; a_v) - n \eps \right) & \\
= &\; n \cdot (d^* + 1) \cdot \eps + \sum_{v \in V} \sum_{u \in \pi^{un}(v)} I(v; u) - \sum_{v \in A} I (v ; a_v)\\
= &\; n \cdot (d^* + 1) \cdot \eps
\end{align*}
where the last equality is because the last two terms are two different ways to enumerate the edges of $H$, e.g.\ see \cref{fig:final-phase}.
\end{proof}

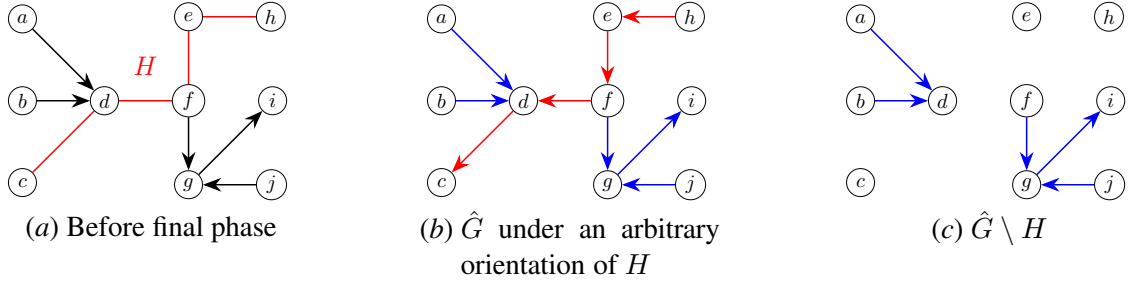
\begin{figure}[htb]
\centering
\begin{subfigure}[Before final phase][t]{
    \centering
    \resizebox{0.25\linewidth}{!}{\resizebox{0.7\linewidth}{!}{%
\begin{tikzpicture}
    \node[draw, circle, minimum size=15pt, inner sep=2pt] at (0,0) (d) {$d$};
    \node[draw, circle, minimum size=15pt, inner sep=2pt, left=of d] (b) {$b$};
    \node[draw, circle, minimum size=15pt, inner sep=2pt, above=of b] (a) {$a$};
    \node[draw, circle, minimum size=15pt, inner sep=2pt, below=of b] (c) {$c$};
    \node[draw, circle, minimum size=15pt, inner sep=2pt, right=of d] (f) {$f$};
    \node[draw, circle, minimum size=15pt, inner sep=2pt, above=of f] (e) {$e$};
    \node[draw, circle, minimum size=15pt, inner sep=2pt, below=of f] (g) {$g$};
    \node[draw, circle, minimum size=15pt, inner sep=2pt, right=of e] (h) {$h$};
    \node[draw, circle, minimum size=15pt, inner sep=2pt, right=of f] (i) {$i$};
    \node[draw, circle, minimum size=15pt, inner sep=2pt, right=of g] (j) {$j$};

    \draw[thick, -{Stealth[scale=1.5]}] (a) -- (d);
    \draw[thick, -{Stealth[scale=1.5]}] (b) -- (d);
    \draw[thick, red] (c) -- (d);
    \draw[thick, red] (d) -- node[above=10pt,midway] {\Large $H$} (f);
    \draw[thick, red] (e) -- (f);
    \draw[thick, -{Stealth[scale=1.5]}] (f) -- (g);
    \draw[thick, red] (e) -- (h);
    \draw[thick, -{Stealth[scale=1.5]}] (g) -- (i);
    \draw[thick, -{Stealth[scale=1.5]}] (j) -- (g);
\end{tikzpicture}
}}
}
\end{subfigure}
\hfill
\begin{subfigure}[$\hat{G}$ under an arbitrary orientation of $H$][t]{
    \centering
    \resizebox{0.25\linewidth}{!}{\resizebox{0.7\linewidth}{!}{%
\begin{tikzpicture}
    \node[draw, circle, minimum size=15pt, inner sep=2pt] at (0,0) (d) {$d$};
    \node[draw, circle, minimum size=15pt, inner sep=2pt, left=of d] (b) {$b$};
    \node[draw, circle, minimum size=15pt, inner sep=2pt, above=of b] (a) {$a$};
    \node[draw, circle, minimum size=15pt, inner sep=2pt, below=of b] (c) {$c$};
    \node[draw, circle, minimum size=15pt, inner sep=2pt, right=of d] (f) {$f$};
    \node[draw, circle, minimum size=15pt, inner sep=2pt, above=of f] (e) {$e$};
    \node[draw, circle, minimum size=15pt, inner sep=2pt, below=of f] (g) {$g$};
    \node[draw, circle, minimum size=15pt, inner sep=2pt, right=of e] (h) {$h$};
    \node[draw, circle, minimum size=15pt, inner sep=2pt, right=of f] (i) {$i$};
    \node[draw, circle, minimum size=15pt, inner sep=2pt, right=of g] (j) {$j$};

    \draw[thick, -{Stealth[scale=1.5]}, blue] (a) -- (d);
    \draw[thick, -{Stealth[scale=1.5]}, blue] (b) -- (d);
    \draw[thick, -{Stealth[scale=1.5]}, red] (d) -- (c);
    \draw[thick, -{Stealth[scale=1.5]}, red] (f) -- (d);
    \draw[thick, -{Stealth[scale=1.5]}, red] (e) -- (f);
    \draw[thick, -{Stealth[scale=1.5]}, blue] (f) -- (g);
    \draw[thick, -{Stealth[scale=1.5]}, red] (h) -- (e);
    \draw[thick, -{Stealth[scale=1.5]}, blue] (g) -- (i);
    \draw[thick, -{Stealth[scale=1.5]}, blue] (j) -- (g);
\end{tikzpicture}
}}
}
\end{subfigure}
\hfill
\begin{subfigure}[$\hat{G} \setminus H$][t]{
    \centering
    \resizebox{0.25\linewidth}{!}{\resizebox{0.7\linewidth}{!}{%
\begin{tikzpicture}
    \node[draw, circle, minimum size=15pt, inner sep=2pt] at (0,0) (d) {$d$};
    \node[draw, circle, minimum size=15pt, inner sep=2pt, left=of d] (b) {$b$};
    \node[draw, circle, minimum size=15pt, inner sep=2pt, above=of b] (a) {$a$};
    \node[draw, circle, minimum size=15pt, inner sep=2pt, below=of b] (c) {$c$};
    \node[draw, circle, minimum size=15pt, inner sep=2pt, right=of d] (f) {$f$};
    \node[draw, circle, minimum size=15pt, inner sep=2pt, above=of f] (e) {$e$};
    \node[draw, circle, minimum size=15pt, inner sep=2pt, below=of f] (g) {$g$};
    \node[draw, circle, minimum size=15pt, inner sep=2pt, right=of e] (h) {$h$};
    \node[draw, circle, minimum size=15pt, inner sep=2pt, right=of f] (i) {$i$};
    \node[draw, circle, minimum size=15pt, inner sep=2pt, right=of g] (j) {$j$};

    \draw[thick, -{Stealth[scale=1.5]}, blue] (a) -- (d);
    \draw[thick, -{Stealth[scale=1.5]}, blue] (b) -- (d);
    \draw[thick, -{Stealth[scale=1.5]}, blue] (f) -- (g);
    \draw[thick, -{Stealth[scale=1.5]}, blue] (g) -- (i);
    \draw[thick, -{Stealth[scale=1.5]}, blue] (j) -- (g);
\end{tikzpicture}
}}
}
\end{subfigure}
\caption{
Illustration of notation used in proof of \cref{lem:final-output-is-good}.
Suppose (a) is the partial orientation of \cref{fig:running-example} after Phase 2, with $H$ as the edge-induced subgraph on the unoriented edges in red.
Before the final phase, we have $\pi^{\mathrm{in}}(d) = \{a,b\}$, $\pi^{\mathrm{in}}(g) = \{f,j\}$, $\pi^{\mathrm{in}}(i) = \{g\}$, $\pi^{\mathrm{un}}(c) = \{d\}$, $\pi^{\mathrm{un}}(d) = \{c,f\}$, $\pi^{\mathrm{un}}(f) = \{d,e\}$, $\pi^{\mathrm{un}}(e) = \{h,f\}$, and $\pi^{\mathrm{un}}(h) = \{e\}$.
With respect to $H$'s orientation in (b), we have $A = \{c,d,f,e,h\}$, $a_c = d$, $a_d = f$, $a_f = e$, and $a_e = h$.
Observe that the $\pi^{\mathrm{un}}$s and $a_{\square}$s are two different ways to refer to the red edges and (b) only shows one possible orientation of $H$ (see \cref{fig:all-five-possible-H-orientations} for others).
}
\label{fig:final-phase}
\end{figure}

\subsection{Skeleton assumption}
\label{sec:appx-skeleton-assumption}

\chowliurecoverscorrectskeleton*
\begin{proof}
Fix a graph $G^*$.
Recall that the Chow-Liu algorithm can be thought of as running maximum spanning tree with the edge weights as the estimated mutual information between any pair of vertices.
With $m \geq \Omega(\log(n)/\eps_{\pd}^2)$ samples and \cref{assumption:chow-liu-recovers-correct-skeleton}, one can estimate $\hat{I}(u;v)$ up to $(\eps_{\pd})/3$-closeness with high probability in $n$, i.e.\ $| I(u;v) - \hat{I}(u;v) | \leq \eps_{\pd}/3$ for any pair of vertices $u,v \in V$.

Now, consider two arbitrary vertices $u$ and $v$ that are \emph{not} neighbors in $G^*$.

\paragraph{Case 1 ($u$ and $v$ belong in the same connected component in $G^*$):}
Let $P_{u,v} = z_0 - z_1 - \ldots - z_k - z_{k+1}$ be the unique path between $u = z_0$ and $v = z_{k+1}$, where $k \geq 1$.
Then,
\[
\hat{I}(u;v) - \eps_{\pd}/3 \leq I(u;v) \leq I(z_i, z_{i+1}) - \eps_{\pd} \leq \hat{I}(z_i, z_{i+1}) - 2 \cdot \eps_{\pd}/3
\]
for any $i \in \{1, \ldots, k\}$.
Since $\hat{I}(u;v) \leq \hat{I}(z_i, z_{i+1}) - \eps_{\pd}/3$ for each $i \in \{1, \ldots, k\}$, the Chow-Liu algorithm will \emph{not} add the edge $u \sim v$ in the output tree.

\paragraph{Case 2 ($u$ and $v$ belong in the different connected components in $G^*$):}
Since $u$ and $v$ belong in the different connected components in $G^*$, we have $I(u;v) = 0$.
With $m$ samples, for any two edge $a \sim b$ in $G^*$, we have
\[
\hat{I}(u;v)
\leq I(u;v) + \eps_{\pd}/3
= \eps_{\pd}/3
< 2 \cdot \eps_{\pd}/3
\leq I(a;b) - \eps_{\pd}/3
\leq \hat{I}(a;b)
\]
That is, the Chow-Liu algorithm will always consider edges crossing different components \emph{after} all true edges have been considered.
\end{proof}

\subsection{Proof of key lower bound lemma}
\label{sec:lower-bound-lemma}

We will use the following inequality in our proofs.

\begin{restatable}{fact}{loginequality}
\label{fact:log-inequality}
For $x > 0$, we have $\log_2(1+x) \geq \log_2(e) \cdot \left( x - \frac{x^2}{2} \right) = \log_2(e) \cdot x \cdot \left(1 - \frac{x}{2} \right)$.
\end{restatable}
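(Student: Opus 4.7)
The plan is to reduce the claim to a natural-log version and then verify it using elementary calculus. Dividing both sides by $\log_2(e) = 1/\ln 2$, the claim is equivalent to showing that
\[
\ln(1+x) \geq x - \tfrac{x^2}{2} \quad \text{for all } x > 0.
\]
I would define the auxiliary function $f(x) = \ln(1+x) - x + \tfrac{x^2}{2}$ on $[0, \infty)$ and prove $f(x) \geq 0$ on this domain.

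First, I would note the boundary value $f(0) = 0$. Next, I would compute the derivative and simplify on a common denominator:
\[
f'(x) = \frac{1}{1+x} - 1 + x = \frac{1 - (1+x) + x(1+x)}{1+x} = \frac{x^2}{1+x}.
\]
This is strictly positive for $x > 0$, so $f$ is strictly increasing on $(0, \infty)$. Combined with $f(0) = 0$, this gives $f(x) > 0$ for all $x > 0$, which is exactly the inequality $\ln(1+x) \geq x - x^2/2$.

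Finally, I would multiply through by $\log_2(e) > 0$ to recover the original statement in base-$2$ form, and note that the factored form $\log_2(e) \cdot x \cdot (1 - x/2)$ is just a rewriting of $\log_2(e) \cdot (x - x^2/2)$.

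There is no real obstacle here: the only subtlety is that the Taylor-series argument (where the alternating series for $\ln(1+x)$ makes the bound transparent) only applies for $x \in (0,1)$, whereas we need the inequality for all $x > 0$. The derivative-based monotonicity argument above avoids this issue and handles all $x > 0$ uniformly, which is why I prefer it over invoking Taylor's theorem.
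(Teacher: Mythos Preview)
Your proof is correct. The paper states this as a \texttt{fact} without proof, so there is no approach to compare against; your derivative argument showing $f(x)=\ln(1+x)-x+\tfrac{x^2}{2}$ satisfies $f(0)=0$ and $f'(x)=x^2/(1+x)>0$ is a clean and complete justification.
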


Recall the lower bound distributions from \cref{sec:lower-bound}, but we replace $\sqrt{\eps}$ with $\alpha$ for notational convenience:

\begin{equation*}
\pd_1:
\begin{cases}
X \sim \Bern(1/2)\\
Z = \begin{cases}
X & \text{w.p. 1/2}\\
\Bern(1/2) & \text{w.p. 1/2}
\end{cases}\\
Y =
\begin{cases}
Z & \text{w.p. $\alpha$}\\
\Bern(1/2) & \text{w.p. $1 - \alpha$}
\end{cases}
\end{cases}
\;\hfill\;
\pd_2:
\begin{cases}
X \sim \Bern(1/2)\\
Y \sim \Bern(1/2)\\
Z =
\begin{cases}
X & \text{w.p. 1/2}\\
Y & \text{w.p. $\alpha$}\\
\Bern(1/2) & \text{w.p. $1/2 - \alpha$}
\end{cases}
\end{cases}
\end{equation*}

By construction, we have
\[
\pd_1(x, y, z)
= \frac{1}{2} \cdot \left( \frac{1}{2} \cdot \frac{1}{2} + \frac{1}{2} \cdot \mathbbm{1}_{x = z} \right) \cdot \left( \alpha \cdot \mathbbm{1}_{y = z} + (1 - \alpha) \cdot \frac{1}{2} \right)
\]
and
\[
\pd_2(x, y, z)
= \frac{1}{2} \cdot \frac{1}{2} \cdot \left( \frac{1}{2} \cdot \mathbbm{1}_{x = z} + \alpha \cdot \mathbbm{1}_{y = z} + \left( \frac{1}{2} - \alpha \right) \cdot \frac{1}{2} \right)
\]
which corresponds to the probability tables given in \cref{tab:all-probability-tables}.

\begin{table}[htb]
\centering
\begin{tabular}{ccccc}
\toprule
$x$ & $y$ & $z$ & $\pd_1(x,y,z)$ & $\pd_2(x,y,z)$\\
\midrule
$0$ & $0$ & $0$ & $\frac{3}{16} \cdot (1 + \alpha)$ & $\frac{1}{16} \cdot (3 + 2 \alpha)$\\
$0$ & $0$ & $1$ & $\frac{1}{16} \cdot (1 - \alpha)$ & $\frac{1}{16} \cdot (1 - 2 \alpha)$\\
$0$ & $1$ & $0$ & $\frac{3}{16} \cdot (1 - \alpha)$ & $\frac{1}{16} \cdot (3 - 2 \alpha)$\\
$0$ & $1$ & $1$ & $\frac{1}{16} \cdot (1 + \alpha)$ & $\frac{1}{16} \cdot (1 + 2 \alpha)$\\
$1$ & $0$ & $0$ & $\frac{1}{16} \cdot (1 + \alpha)$ & $\frac{1}{16} \cdot (1 + 2 \alpha)$\\
$1$ & $0$ & $1$ & $\frac{3}{16} \cdot (1 - \alpha)$ & $\frac{1}{16} \cdot (3 - 2 \alpha)$\\
$1$ & $1$ & $0$ & $\frac{1}{16} \cdot (1 - \alpha)$ & $\frac{1}{16} \cdot (1 - 2 \alpha)$\\
$1$ & $1$ & $1$ & $\frac{3}{16} \cdot (1 + \alpha)$ & $\frac{1}{16} \cdot (3 + 2 \alpha)$\\
\bottomrule\\
\end{tabular}
\\
\begin{tabular}{ccc}
\toprule
$x$ & $y$ & $\pd_1(x,y)$\\
\midrule
$0$ & $0$ & $\frac{1}{8} \cdot (2 + \alpha)$\\
$0$ & $1$ & $\frac{1}{8} \cdot (2 - \alpha)$\\
$1$ & $0$ & $\frac{1}{8} \cdot (2 - \alpha)$\\
$1$ & $1$ & $\frac{1}{8} \cdot (2 + \alpha)$\\
\bottomrule\\
\end{tabular}
\quad
\begin{tabular}{ccccc}
\toprule
$x$ & $y$ & $\pd_2(x,y \mid z = 0)$ & $\pd_2(x,y \mid z = 1)$\\
\midrule
$0$ & $0$ & $\frac{1}{8} \cdot (3 + 2 \alpha)$ & $\frac{1}{8} \cdot (1 - 2 \alpha)$\\
$0$ & $1$ & $\frac{1}{8} \cdot (3 - 2 \alpha)$ & $\frac{1}{8} \cdot (1 + 2 \alpha)$\\
$1$ & $0$ & $\frac{1}{8} \cdot (1 + 2 \alpha)$ & $\frac{1}{8} \cdot (3 - 2 \alpha)$\\
$1$ & $1$ & $\frac{1}{8} \cdot (1 - 2 \alpha)$ & $\frac{1}{8} \cdot (3 + 2 \alpha)$\\
\bottomrule\\
\end{tabular}
\\
\begin{tabular}{ccc}
\toprule
$x$ & $\pd_2(x \mid z = 0)$ & $\pd_2(x \mid z = 1)$\\
\midrule
$0$ & $\frac{3}{4}$ & $\frac{1}{4}$\\
$1$ & $\frac{1}{4}$ & $\frac{3}{4}$\\
\bottomrule\\
\end{tabular}
\quad
\begin{tabular}{ccc}
\toprule
$y$ & $\pd_2(y \mid z = 0)$ & $\pd_2(y \mid z = 1)$\\
\midrule
$0$ & $\frac{1 + \alpha}{2}$ & $\frac{1 - \alpha}{2}$\\
$1$ & $\frac{1 - \alpha}{2}$ & $\frac{1 + \alpha}{2}$\\
\bottomrule\\
\end{tabular}
\caption{Explicit (conditional) probability tables for our lower bound construction.}
\label{tab:all-probability-tables}
\end{table}

\begin{lemma}
\label{lem:lower-bound-squared-hellinger}
$d^2_H(\pd_1, \pd_2) \leq \alpha^2$
\end{lemma}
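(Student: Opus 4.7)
I would start from the equivalent representation
\[
\sqhell(\pd_1, \pd_2) = \frac{1}{2}\sum_{x,y,z \in \{0,1\}} \left(\sqrt{\pd_1(x,y,z)} - \sqrt{\pd_2(x,y,z)}\right)^2
\]
and use the elementary identity $(\sqrt{a}-\sqrt{b})^2 = (a-b)^2/(\sqrt{a}+\sqrt{b})^2$ together with $(\sqrt{a}+\sqrt{b})^2 = a + b + 2\sqrt{ab} \geq a+b$ to reduce the claim to the Vincze--Le Cam (triangular discrimination) inequality
\[
\sqhell(\pd_1, \pd_2) \leq \frac{1}{2}\sum_{x,y,z} \frac{(\pd_1(x,y,z)-\pd_2(x,y,z))^2}{\pd_1(x,y,z)+\pd_2(x,y,z)}.
\]
This is a textbook reduction and avoids having to estimate the Bhattacharyya coefficient $\sum \sqrt{\pd_1 \pd_2}$ directly, which would lead to messy expressions of the form $\sqrt{(1 \pm \alpha)(3 \pm 2\alpha)}$.

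Next, from the joint probability table in \cref{tab:all-probability-tables} I would verify by direct cell-by-cell inspection that $|\pd_1(x,y,z) - \pd_2(x,y,z)| = \alpha/16$ for every one of the eight triples $(x,y,z) \in \{0,1\}^3$, so each numerator in the Vincze--Le Cam sum equals exactly $\alpha^2/256$. Reading off $16(\pd_1+\pd_2)$ analogously yields the four values $6 \pm 5\alpha$ and $2 \pm 3\alpha$, each appearing with multiplicity two (by the symmetry $(x,y,z) \leftrightarrow (1-x,1-y,1-z)$ visible in the table).

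Third, since the construction is only well defined on the range $\alpha \in (0, 1/2]$ (the validity condition $1/2 - \alpha \geq 0$ for $\pd_2$), the minimum of the eight denominator values is $2 - 3\alpha$, which is at least $1/2$ on this range. Hence $\pd_1(x,y,z) + \pd_2(x,y,z) \geq 1/32$ uniformly in $(x,y,z)$ and in $\alpha$. Plugging in,
\[
\sqhell(\pd_1, \pd_2) \leq \frac{1}{2} \cdot 8 \cdot \frac{\alpha^2/256}{1/32} = \frac{\alpha^2}{2} \leq \alpha^2,
\]
which establishes the lemma (in fact with a slightly better constant than claimed).

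The only mildly delicate step is the uniform lower bound on $\pd_1 + \pd_2$, and even that reduces to the single inequality $2 - 3\alpha \geq 1/2$ on the valid range of $\alpha$; everything else is routine arithmetic directly from the table. No serious obstacle is anticipated.
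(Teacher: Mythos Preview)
Your proof is correct and takes a genuinely different route from the paper. The paper computes the Bhattacharyya coefficient $\sum_{x,y,z}\sqrt{\pd_1\pd_2}$ directly, obtaining the four terms $\sqrt{3(1\pm\alpha)(3\pm2\alpha)}$ and $\sqrt{(1\pm\alpha)(1\pm2\alpha)}$, and then Taylor-expands each of them at $\alpha=0$ to conclude $\sqhell(\pd_1,\pd_2)\leq \alpha^2/24 + O(\alpha^4)$. Your approach via the Le Cam (triangular discrimination) inequality sidesteps the square roots entirely: the cell-by-cell differences $|\pd_1-\pd_2|=\alpha/16$ are exact, and the crude denominator bound $\pd_1+\pd_2\geq 1/32$ on the valid range $\alpha\in(0,1/2]$ already yields $\sqhell\leq\alpha^2/2$. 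The paper's calculation gives a sharper leading constant ($1/24$ versus $1/2$), but at the cost of an asymptotic $O(\alpha^4)$ remainder that, strictly speaking, only delivers the stated bound $\sqhell\leq\alpha^2$ for sufficiently small $\alpha$; your argument is fully non-asymptotic and holds uniformly on the whole admissible range, which is arguably cleaner for a lower-bound construction that will later be instantiated with $\alpha=\sqrt{\eps}$.
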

\begin{proof}
From \cref{tab:all-probability-tables}, we see that
\begin{align*}
&\; \sum_{(x,y,z) \in \{0,1\}^3} \sqrt{\pd_1(x,y,z) \cdot \pd_2(x,y,z)}\\
= &\; 
\frac{1}{8} \cdot \Big( \sqrt{3 \cdot (1 + \alpha) \cdot (3 + 2 \alpha)} + \sqrt{(1 - \alpha) \cdot (1 - 2 \alpha)}\\
&\; \qquad + \sqrt{3 \cdot (1 - \alpha) \cdot (3 - 2 \alpha)} + \sqrt{(1 + \alpha) \cdot (1 + 2 \alpha)} \Big)\\
\end{align*}

Considering the Taylor expansion of each of the four terms at $\alpha = 0$:
\begin{align*}
\sum_{(x,y,z) \in \{0,1\}^3} \sqrt{\pd_1(x,y,z) \cdot \pd_2(x,y,z)}
\geq
\frac{1}{8} \cdot \left( 8 - \frac{\alpha^2}{3} - \cO(\alpha^4) \right)
\geq
1 - \frac{\alpha^2}{24} - \cO(\alpha^4)
\end{align*}

Hence,
\[
d^2_H(\pd_1, \pd_2)
= 1 - \sum_{(x,y,z) \in \{0,1\}^3} \sqrt{\pd_1(x,y,z) \cdot \pd_2(x,y,z)}
\leq \frac{\alpha^2}{24} + \cO(\alpha^4)
\in \cO(\alpha^2)
\]
\end{proof}

\begin{lemma}
\label{lem:lower-bound-project-one-to-two}
$\kl(\pd_1 \;\|\; P_{1,G_1}) = 0$ and $\kl(\pd_1 \;\|\; P_{1,G_2}) \in \Omega(\alpha^2)$
\end{lemma}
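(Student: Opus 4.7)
The first equality is immediate from the definitions: by construction $\pd_1(x,y,z) = \pd_1(x)\cdot\pd_1(z\mid x)\cdot\pd_1(y\mid z)$, which is exactly the $G_1$-factorization, so $P_{1,G_1} = \pd_1$ and $\kl(\pd_1 \;\|\; P_{1,G_1}) = 0$.

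For the second claim the plan is to reduce the KL term to a single mutual information quantity under $\pd_1$ and then estimate that quantity directly from Table~\ref{tab:all-probability-tables}. Using the Chow--Liu identity (\cref{eq:mutual_information_decom_chou_liu}) applied to the two orientations $G_1, G_2$ of the skeleton $X - Z - Y$, and noting that the structure-independent entropy terms cancel, I get
\[
\kl(\pd_1 \;\|\; P_{1,G_2})
= \bigl[I(X;Z) + I(Y;Z)\bigr] - \bigl[I(Z; X, Y)\bigr] \;,
\]
where all mutual informations are under $\pd_1$. Since $\pd_1$ is Markov with respect to $G_1 = X \to Z \to Y$, we have $X \perp Y \mid Z$ under $\pd_1$, so $I(X;Y \mid Z) = 0$. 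Expanding $I(Z;X,Y)$ via the chain rule twice gives
\[
I(Z;X,Y) = I(X;Z) + I(Y;Z) + I(X;Y\mid Z) - I(X;Y) = I(X;Z) + I(Y;Z) - I(X;Y),
\]
so the whole expression collapses to $\kl(\pd_1 \;\|\; P_{1,G_2}) = I(X;Y)$ under $\pd_1$.

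All that remains is to lower bound $I(X;Y)$ under $\pd_1$ by $\Omega(\alpha^2)$. From the marginal table in \cref{tab:all-probability-tables} we have $\pd_1(X=x,Y=y) = (2+\alpha)/8$ when $x=y$ and $(2-\alpha)/8$ otherwise, while the marginals of $X$ and $Y$ are each uniform on $\{0,1\}$. Substituting into the definition of mutual information gives
\[
I(X;Y) = \tfrac{2+\alpha}{4}\log\bigl(1 + \tfrac{\alpha}{2}\bigr) + \tfrac{2-\alpha}{4}\log\bigl(1 - \tfrac{\alpha}{2}\bigr).
\]
Invoking \cref{fact:log-inequality} on the first term and the standard bound $\log(1-x) \geq -x - x^2$ (valid for small $x$) on the second, and simplifying, the linear-in-$\alpha$ contributions cancel and the quadratic term gives $I(X;Y) \geq c\alpha^2$ for an absolute constant $c > 0$ and all sufficiently small $\alpha$. (Alternatively, one can Taylor-expand both logarithms around $0$ and observe that the leading surviving term is $\alpha^2/8 + O(\alpha^3)$ in nats.) The main fiddly step is ensuring the cancellation of the $O(\alpha)$ terms is clean and that the constant in front of $\alpha^2$ is genuinely positive; once that is verified, we conclude $\kl(\pd_1 \;\|\; P_{1,G_2}) = I(X;Y) \in \Omega(\alpha^2)$, as claimed.
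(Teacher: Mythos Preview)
Your proof is correct and follows essentially the same route as the paper: both reduce $\kl(\pd_1 \;\|\; P_{1,G_2})$ to $I(X;Y)$ under $\pd_1$ via the chain-rule identity (the paper's \cref{lem:useful-identity}) together with $I(X;Y\mid Z)=0$, and then lower bound $I(X;Y)$ using the explicit marginal table to obtain $\Omega(\alpha^2)$. The only cosmetic difference is in the final inequality step---the paper applies \cref{fact:log-inequality} to both logarithmic terms simultaneously, whereas you use \cref{fact:log-inequality} on $\log(1+\alpha/2)$ and a separate bound (or Taylor expansion) on $\log(1-\alpha/2)$; either way the $O(\alpha)$ terms cancel and the leading $\alpha^2/8$ survives.
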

\begin{proof}
We have $\kl(\pd_1 \;\|\; P_{1,G_1}) = 0$ by definition of $\pd_1$: $Z$ depends on $X$ and $Y$ depends on $Z$.

Observe that

\begin{align*}
&\; \kl(\pd_1 \;\|\; P_{1,G_2})\\
= &\; I(X; Z) + I(Z; Y) - I(Z; X,Y)\\
= &\; I(X; Z) + I(Z; Y) - \Big( I(Z; X) + I(Z; Y) + I(X; Y \mid Z) - I(X; Y) \Big) && (\dag)\\
= &\; I(X;Y) - I(X; Y \mid Z)\\
= &\; I(X;Y) && (\ast)
\end{align*}

where $(\dag)$ is by applying \cref{lem:useful-identity} with $v = Z$, $A = \{X\}$, $B = \{Y\}$ and $(\ast)$ is because $I(X; Y \mid Z) = 0$ in $\pd_1$.

We will now show that $I(X;Y) \in \Omega(\alpha^2)$.
From \cref{tab:all-probability-tables}, one can verify that $\pd_1(x = 0) = \pd_1(x = 1) = \pd_1(y = 0) = \pd_1(y = 1) = 1/2$.
So,
\begin{align*}
I(X;Y)
&= \sum_{(x,y) \in \{0,1\}^2} \pd_1(x,y) \cdot \log \frac{\pd_1(x,y)}{\pd_1(x) \cdot \pd_1(y)}\\
&= \frac{1}{4} \cdot \left( (2 + \alpha) \cdot \log \left( 1 + \frac{\alpha}{2} \right) + (2 - \alpha) \cdot \log \left( 1 - \frac{\alpha}{2} \right) \right) && \text{From \cref{tab:all-probability-tables}}\\
&\geq \frac{1}{4} \cdot \log_2(e) \cdot \frac{\alpha}{2} \cdot \left( (2 + \alpha) \cdot \left( 1 - \frac{\alpha}{4} \right) - (2 - \alpha) \cdot \left( 1 + \frac{\alpha}{4} \right) \right) && \text{By \cref{fact:log-inequality}}\\
&\in \Omega(\alpha^2)
\end{align*}
\end{proof}

\begin{lemma}
\label{lem:lower-bound-project-two-to-one}
$\kl(\pd_2 \;\|\; P_{2,G_2}) = 0$ and $\kl(\pd_2 \;\|\; P_{2,G_1}) \in \Omega(\alpha^2)$
\end{lemma}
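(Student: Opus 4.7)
The first equality holds by construction: under $\pd_2$ the variables $X$ and $Y$ are independent $\Bern(1/2)$'s, and $Z$ is generated as a (randomized) function of $(X,Y)$, so the factorization of $\pd_2$ is exactly the one asserted by the v-structure $G_2: X \to Z \gets Y$. Hence $P_{2,G_2} = \pd_2$ and $\kl(\pd_2 \;\|\; P_{2,G_2}) = 0$.

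For the second claim I mirror the argument used for \cref{lem:lower-bound-project-one-to-two}. Using the decomposition in \cref{eq:mutual_information_decom_chou_liu}, the structure-independent terms cancel and, since the parent sets in $G_1$ are $\pi(X)=\emptyset$, $\pi(Z)=\{X\}$, $\pi(Y)=\{Z\}$ while the only nontrivial parent set in $G_2$ is $\pi(Z)=\{X,Y\}$, one obtains
\[
\kl(\pd_2 \;\|\; P_{2,G_1}) \;=\; I_{\pd_2}(Z;X,Y) \;-\; I_{\pd_2}(Z;X) \;-\; I_{\pd_2}(Y;Z).
\]
Applying \cref{lem:useful-identity} with $v=Z$, $A=\{X\}$, $B=\{Y\}$, and using that $X \perp Y$ in $\pd_2$ (so $I_{\pd_2}(X;Y)=0$), the right-hand side collapses to $I_{\pd_2}(X;Y \mid Z)$. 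So the task reduces to showing $I_{\pd_2}(X;Y \mid Z) \in \Omega(\alpha^2)$.

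To do this I write $I_{\pd_2}(X;Y\mid Z) = \sum_z \pd_2(z)\,\kl\!\bigl(\pd_2(X,Y\mid z)\,\|\,\pd_2(X\mid z)\pd_2(Y\mid z)\bigr)$ and fix $z=0$. Using the entries of \cref{tab:all-probability-tables}, a direct computation gives
\[
\pd_2(x,y\mid z{=}0) \;-\; \pd_2(x\mid z{=}0)\,\pd_2(y\mid z{=}0) \;=\; \pm\,\alpha/8 \quad \text{for each } (x,y)\in\{0,1\}^2,
\]
so the total variation distance between $\pd_2(X,Y\mid z{=}0)$ and the product of its marginals equals $\alpha/4$. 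Pinsker's inequality then yields $\kl\bigl(\pd_2(X,Y\mid z{=}0)\,\|\,\pd_2(X\mid z{=}0)\pd_2(Y\mid z{=}0)\bigr) \in \Omega(\alpha^2)$. By the bit-flip symmetry of the construction the same lower bound holds at $z=1$, and since $\pd_2(Z{=}0)=\pd_2(Z{=}1)=1/2$, we conclude $I_{\pd_2}(X;Y\mid Z) \in \Omega(\alpha^2)$, as desired.

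The only real obstacle is bookkeeping: the identity reduction, the table arithmetic, and Pinsker's inequality are all standard, so one merely has to verify the four $\pm\alpha/8$ differences and track the logarithm base so that the constant implicit in $\Omega(\alpha^2)$ is independent of $\alpha$. A second-order Taylor expansion of $I(X;Y\mid Z{=}0)$ around $\alpha=0$, in the same spirit as the expansion of $I(X;Y)$ in the previous lemma, is an equally viable substitute for the Pinsker step.
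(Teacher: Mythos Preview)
Your reduction to $I_{\pd_2}(X;Y\mid Z)$ is exactly what the paper does: the same application of \cref{lem:useful-identity} with $v=Z$, $A=\{X\}$, $B=\{Y\}$, followed by the observation $I_{\pd_2}(X;Y)=0$. The difference lies in how the lower bound $I_{\pd_2}(X;Y\mid Z)\in\Omega(\alpha^2)$ is established.

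The paper expands $I(X;Y\mid Z{=}0)$ directly as a sum of three logarithmic terms in the ratios $\frac{\pd_2(x,y\mid z)}{\pd_2(x\mid z)\pd_2(y\mid z)}$, then lower-bounds each log-argument by a Taylor polynomial in $\alpha$ and finishes with \cref{fact:log-inequality}. This is several lines of careful term-tracking. Your route---compute the four differences $\pd_2(x,y\mid z{=}0)-\pd_2(x\mid z{=}0)\pd_2(y\mid z{=}0)=\pm\alpha/8$, read off the total variation $\alpha/4$, and invoke Pinsker---is shorter and avoids all the series bookkeeping; the table arithmetic you state is correct (indeed $(3+2\alpha)/8-(3/4)(1+\alpha)/2=-\alpha/8$, and the other three entries follow similarly). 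The paper's calculation in principle gives an explicit leading constant, whereas Pinsker delivers a clean $\alpha^2/8$ (in nats) without the expansions; for an $\Omega(\alpha^2)$ statement both are equally valid, and your argument is the more economical one.
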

\begin{proof}
We have $\kl(\pd_2 \;\|\; P_{2,G_2}) = 0$ by definition of $\pd_2$: $Z$ depends on both $X$ and $Y$.

Observe that
\begin{align*}
&\; \kl(\pd_2 \;\|\; P_{2,G_1})\\
= &\; I(Z;X,Y) - I(X; Z) - I(Z;Y)\\
= &\; \left( I(Z; X) + I(Z; Y) + I(X; Y \mid Z) - I(X; Y) \right) - I(X; Z) - I(Z;Y) && (\dag)\\
= &\; I(X; Y \mid Z) - I(X; Y)\\
= &\; I(X; Y \mid Z) && (\ast)
\end{align*}
where $(\dag)$ is by applying \cref{lem:useful-identity} with $v = Z$, $A = \{X\}$, $B = \{Y\}$  and $(\ast)$ is because $I(X;Y) = 0$ in $\pd_2$.

We will now show that $I(X; Y \mid Z) \in \Omega(\alpha^2)$.
By definition,
\begin{multline*}
I(X; Y \mid Z)
= \sum_{(x,y,z) \in \{0,1\}^3} \pd_2(x,y \mid z) \cdot \log \left( \frac{\pd_2(x,y \mid z)}{\pd_2(x \mid z) \cdot \pd_2(y \mid z)} \right)\\
= I(X; Y \mid Z = 0) + I(X; Y \mid Z = 1)    
\end{multline*}
From \cref{tab:all-probability-tables}, one can verify that $\pd_2(z = 0) = \pd_2(z = 1) = 1/2$ and $I(X; Y \mid Z = 0) = I(X; Y \mid Z = 1)$.
So, it suffices to show that $I(X; Y \mid Z = 0) \in \Omega(\alpha^2)$.

\begin{align*}
&\; I(X; Y \mid Z = 0)\\
= &\; \sum_{(x,y) \in \{0,1\}^2} \pd_2(x,y \mid z = 0) \cdot \log \left( \frac{\pd_2(x,y \mid z = 0)}{\pd_2(x \mid z = 0) \cdot \pd_2(y \mid z = 0)} \right)\\
= &\; \frac{3}{8} \cdot \log \left( \frac{3 + 2 \alpha}{3 + 3 \alpha} \cdot \frac{3 - 2 \alpha}{3 - 3 \alpha} \right) + \frac{1}{8} \cdot \log \left( \frac{1 + 2 \alpha}{1 + \alpha} \cdot \frac{1 - 2 \alpha}{1 - \alpha} \right)\\
&\; \qquad + \frac{\alpha}{4} \cdot \log \left( \frac{3 + 2 \alpha}{3 + 3 \alpha} \cdot \frac{3 - 3 \alpha}{3 - 2 \alpha} \cdot \frac{1 + 2 \alpha}{1 + \alpha} \cdot \frac{1 - \alpha}{1 - 2 \alpha} \right)
\end{align*}

\ignore{
By Taylor series at $\alpha = 0$, one can verify that
\begin{itemize}
    \item By Taylor series\footnote{e.g.\ see \url{https://www.wolframalpha.com/input?i=taylor+series+of+\%5Cfrac\%7B3+\%2B+2+\%5Calpha\%7D\%7B3+\%2B+3+\%5Calpha\%7D+\%5Ccdot+\%5Cfrac\%7B3+-+2+\%5Calpha\%7D\%7B3+-+3+\%5Calpha\%7D+at+alpha+\%3D+0}} of $\frac{3 + 2 \alpha}{3 + 3 \alpha} \cdot \frac{3 - 2 \alpha}{3 - 3 \alpha} = 1 + \frac{5}{9} \alpha^2 + \frac{5}{9} \alpha^4 + \cO(\alpha^6)$
    \item By Taylor series\footnote{e.g.\ see \url{https://www.wolframalpha.com/input?i=taylor+series+of+\%5Cfrac\%7B1+\%2B+2+\%5Calpha\%7D\%7B1+\%2B+\%5Calpha\%7D+\%5Ccdot+\%5Cfrac\%7B1+-+2+\%5Calpha\%7D\%7B1+-+\%5Calpha\%7D+at+alpha+\%3D+0}} of $\frac{1 + 2 \alpha}{1 + \alpha} \cdot \frac{1 - 2 \alpha}{1 - \alpha} = 1 - 3 \alpha^2 - 3 \alpha^4 + \cO(\alpha^6)$
    \item By Taylor series\footnote{e.g.\ see \url{https://www.wolframalpha.com/input?i=taylor+series+of+\%5Cfrac\%7B3+\%2B+2+\%5Calpha\%7D\%7B3+\%2B+3+\%5Calpha\%7D+\%5Ccdot+\%5Cfrac\%7B3+-+3+\%5Calpha\%7D\%7B3+-+2+\%5Calpha\%7D+\%5Ccdot+\%5Cfrac\%7B1+\%2B+2+\%5Calpha\%7D\%7B1+\%2B+\%5Calpha\%7D+\%5Ccdot+\%5Cfrac\%7B1+-+\%5Calpha\%7D\%7B1+-+2+\%5Calpha\%7D+at+0}} of $\frac{3 + 2 \alpha}{3 + 3 \alpha} \cdot \frac{3 - 3 \alpha}{3 - 2 \alpha} \cdot \frac{1 + 2 \alpha}{1 + \alpha} \cdot \frac{1 - \alpha}{1 - 2 \alpha}$\\ $= 1 + \frac{4}{3} \alpha + \frac{8}{9} \alpha^2 + \frac{124}{27} \alpha^3 + \frac{464}{81} \alpha^4 + \frac{3844}{243} \alpha^5 + \cO(\alpha^6)$
\end{itemize}
}
Using Taylor series expansion, one can verify that for $0 \leq \alpha \leq 1/2$, 
\begin{itemize}
    \item $\frac{3 + 2 \alpha}{3 + 3 \alpha} \cdot \frac{3 - 2 \alpha}{3 - 3 \alpha} \geq 1 + \frac{5}{9} \alpha^2$
    \item $\frac{1 + 2 \alpha}{1 + \alpha} \cdot \frac{1 - 2 \alpha}{1 - \alpha} \geq 1 - 3 \alpha^2 - 4 \alpha^4$
    \item $\frac{3 + 2 \alpha}{3 + 3 \alpha} \cdot \frac{3 - 3 \alpha}{3 - 2 \alpha} \cdot \frac{1 + 2 \alpha}{1 + \alpha} \cdot \frac{1 - \alpha}{1 - 2 \alpha} \geq 1 + \frac{4}{3} \alpha$
\end{itemize}

Thus, using \cref{fact:log-inequality}, we get
\begin{align*}
&\; I(X; Y \mid Z = 0)\\
= &\; \frac{3}{8} \cdot \log \left( \frac{3 + 2 \alpha}{3 + 3 \alpha} \cdot \frac{3 - 2 \alpha}{3 - 3 \alpha} \right) + \frac{1}{8} \cdot \log \left( \frac{1 + 2 \alpha}{1 + \alpha} \cdot \frac{1 - 2 \alpha}{1 - \alpha} \right)\\
&\; \qquad + \frac{\alpha}{4} \cdot \log \left( \frac{3 + 2 \alpha}{3 + 3 \alpha} \cdot \frac{3 - 3 \alpha}{3 - 2 \alpha} \cdot \frac{1 + 2 \alpha}{1 + \alpha} \cdot \frac{1 - \alpha}{1 - 2 \alpha} \right)\\
\geq &\; \frac{3}{8} \cdot \log \left( 1 + \frac{5}{9} \alpha^2 \right) + \frac{1}{8} \cdot \log \left( 1 - 3 \alpha^2 - 4 \alpha^4 \right) + \frac{\alpha}{4} \cdot \log \left( 1 + \frac{4}{3} \alpha \right)\\
\geq &\; \log_2(e) \cdot \left( \frac{3}{8} \cdot \left( \frac{5}{9} \alpha^2 - \left(\frac{5}{9} \alpha^2 \right)^2 \right) - \frac{1}{8} \cdot \left( 3 \alpha^2 + 4 \alpha^4 + \left( 3 \alpha^2 + 4 \alpha^4 \right)^2 \right) + \frac{\alpha}{4} \cdot \left( \frac{4}{3} \alpha - (\frac{4}{3} \alpha)^2 \right) \right)\\
\in &\; \cO(\alpha^2)
\end{align*}
\end{proof}

\lowerboundthreevariablelemma*
\begin{proof}
Combine \cref{lem:lower-bound-squared-hellinger}, \cref{lem:lower-bound-project-one-to-two}, and \cref{lem:lower-bound-project-two-to-one} with $\alpha$ as $\sqrt{\eps}$.
\end{proof}

\lowerboundforthreevariables*
\begin{proof}
Our result is obtained by reducing the problem of finding an $\eps$-close graph orientation of $X - Z - Y$ to the problem of \emph{testing} whether the samples are drawn from $\pd_1$ or $\pd_2$ (sharing the same skeleton but having different orientations), with respect to the construction in \cref{lem:lower-bound-three-variable-lemma}.
To ensure $\eps$-closeness in the graph orientation, one has to correctly determine whether the samples come from $\pd_1$ or $\pd_2$ and then pick $G_1$ or $G_2$ respectively.
Put differently, if one can learn an $\eps$-close graph orientation, then one can use that algorithm to distinguish between $P_1$ and $P_2$ in \cref{lem:lower-bound-three-variable-lemma}.
However, it is well-known that distinguishing two distributions whose squared Hellinger distance is $\eps$ requires $\Omega(1/\eps)$ samples (e.g.\ see \cite[Theorem 4.7]{BarYossef:02}).
\end{proof}

\lowerboundfornvariables*
\begin{proof}
Consider a distribution $P$ on $n/3$ independent copies of the lower bound construction from \cref{lem:lower-bound-for-three-variables}, where each copy is indexed by $P_i$ for $i \in \{1, \ldots, n/3\}$.
Suppose, for a contradiction, that the algorithm draws $c n/\eps$ samples for sufficiently small $c > 0$, and manages to output $Q$ that is $\eps$-close to $P$ with probability at least $2/3$.
From \cref{lem:lower-bound-for-three-variables} with error tolerance $\Omega(\eps / n)$, we know that each copy is \emph{not} $\Omega(\eps / n)$-close with probability at least $1/5$.
By Chernoff bound, at least $\Omega(n)$ copies are \emph{not} $\Omega(\eps / n)$-close with probability at least $2/3$.
Then, by the tensorization of KL divergence, we see that $d_{\text{KL}} \left( \prod_{i=1}^{n/3} P_i \;\|\; \prod_{i=1}^{n/3} Q_i \right) = \sum_{i=1}^{n/3} d_{\text{KL}}(P_i \;\|\;  Q_i) > \Omega(\eps)$.
This contradicts the assumption that $Q$ is $\eps$-close to $P$ with probability at least $2/3$.
\end{proof}

\section{Constant calculation for \texorpdfstring{\citet[Theorem 1.3]{DBLP:journals/siamcomp/BhattacharyyaGPTV23}}{Bhattacharyya et al. (2023, Theorem 1.3)}}
\label{sec:appendix-explicit-constant}

In the original proof of the conditional mutual information tester in \citet[Theorem
1.3]{DBLP:journals/siamcomp/BhattacharyyaGPTV23}, the constants were implicit.
Here, we provide a reference proof of computing the constants.
Our proof approach mirrors that of \citet[Theorem
1.3]{DBLP:journals/siamcomp/BhattacharyyaGPTV23} except that we are explicit in the constants involved.
As the calculations are quite involved, we color coded some manipulation steps for easier equation matching and verification.
We will also use the following two results (\cref{lemma:property_f_asym} and \cref{cor:cor48BhattacharyyaGPTV23}) from \cite{DBLP:journals/siamcomp/BhattacharyyaGPTV23} without proof.

\begin{mylemma}[Lemma 4.2 of \cite{DBLP:journals/siamcomp/BhattacharyyaGPTV23}]
\label{lemma:property_f_asym}
For any $a \geq - b$ and $b \geq 0$,
\begin{equation}
\label{eq:f_with_constants}
\frac{1}{3} \min \left( \frac{a^2}{b}, | a | \log \left( 2 + \frac{|a|}{b} \right) \right) \leq f (a, b) \leq \min \left( \frac{a^2}{b}, | a | \log \left( 2 + \frac{| a |}{b} \right) \right) \;.
\end{equation}
\end{mylemma}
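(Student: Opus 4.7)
The plan is to prove the two-sided bound on $f(a,b)$---which, from the context of the referenced paper on Poissonization-based testers, is the Chernoff--Bennett exponent $f(a,b) = (a+b)\log(1+a/b) - a$ governing Poisson tails---by first reducing to a single-variable inequality. Setting $x := a/b$ with $x \geq -1$, the identity $f(a,b) = b \cdot g(x)$ with $g(x) = (1+x)\log(1+x) - x$ reduces the target to
\[
\tfrac{1}{3}\min\!\left(x^2,\; |x|\log(2+|x|)\right) \;\leq\; g(x) \;\leq\; \min\!\left(x^2,\; |x|\log(2+|x|)\right),
\]
which is a clean calculus statement on $[-1, \infty)$, and by homogeneity implies the lemma.

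For the upper bound, I would handle $x \geq 0$ and $-1 \leq x < 0$ separately. For $x \geq 0$, the inequality $\log(1+x) \leq x$ gives $g(x) \leq (1+x) x - x = x^2$, and the bound $g(x) \leq x\log(2+x)$ follows by verifying the derivative comparison $\log(1+x) - \log(2+x) + x/(2+x) \leq 0$ on $[0,\infty)$. The case $-1 \leq x < 0$ is symmetric: writing $y = -x \in [0,1)$ and using the identity $g(-y) = y - (1-y)\log(1-y)$, the quadratic bound follows from $-\log(1-y) \leq y/(1-y)$ and the logarithmic bound from monotonicity.

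For the lower bound with constant $1/3$, the plan is to split at $|x| = 1$. On $|x| \leq 1$, Taylor expansion gives $g(x) \geq x^2/2 - |x|^3/6 \geq x^2/3$ for $|x|$ bounded by a constant close to $1$; the worst case is $x = 1$ where $g(1) = 2\log 2 - 1 \approx 0.386 > 1/3$, and near $x = -1$ we have $g(-1) = 1$ while $\min(x^2, |x|\log(2+|x|)) = 1$, so the ratio is actually $1$ at the boundary. On $|x| \geq 1$, I would use that $g$ grows like $|x|\log|x|$: for $x \geq 1$ the bound $g(x) = (1+x)\log(1+x) - x \geq \tfrac{1}{3}\cdot x \log(2+x)$ reduces after algebra to $\log(1+x) \geq \tfrac{x \log(2+x) + 3x}{3(1+x)}$, which can be verified by comparing derivatives, and the $x < -1$ case is vacuous since $x \geq -b$ forces $x \geq -1$.

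The main obstacle will be pinning down the constant $1/3$ sharply at the transition $|x| \approx 1$, where neither the quadratic nor the linear-times-log bound dominates cleanly, and where both $g$ and its candidate lower envelope change curvature. In particular, a careful analysis of the ratio $g(x)/\min(x^2, |x|\log(2+|x|))$ on $[-1,\infty)$ is required to show that its infimum is at least $1/3$, most likely via identifying the critical point(s) by setting the derivative of the ratio to zero and bounding the resulting transcendental expression. Once this single-variable analysis is complete, rescaling by $b$ recovers the stated inequality in $f(a,b)$.
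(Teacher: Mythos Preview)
The paper does not prove this lemma at all: it explicitly states, just before the statement, that it will ``use the following two results \ldots\ from \cite{DBLP:journals/siamcomp/BhattacharyyaGPTV23} without proof.'' So there is no in-paper argument to compare your proposal against; the lemma is imported as a black box.

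Your plan is a sensible direct attack, and the homogeneity reduction $f(a,b)=b\,g(a/b)$ with $g(x)=(1+x)\log(1+x)-x$ is exactly the right normalization (this is indeed the function $f$ used in the cited source). Two small points. First, you have a sign slip: $g(-y)=(1-y)\log(1-y)+y$, not $y-(1-y)\log(1-y)$; the correct expression still gives $g(-y)\approx y^2/2$ for small $y$ and $g(-1)=1$, so your qualitative picture survives. Second, your sketch of the lower bound is the part that needs the most care: the Taylor bound $g(x)\ge x^2/2-|x|^3/6$ only gives $g(x)\ge x^2/3$ on $|x|\le 1$ (equality at $|x|=1$ in the crude bound), and on $x\ge 1$ the comparison $g(x)\ge \tfrac{1}{3}x\log(2+x)$ is tight enough that a clean derivative argument requires some attention near $x=1$. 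None of this is a genuine obstruction, but the constant $1/3$ is not slack, so the ``identify the critical point of the ratio'' step you flag will indeed carry real weight.
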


\begin{mycorollary}[Corollary 4.8 of \cite{DBLP:journals/siamcomp/BhattacharyyaGPTV23}]
\label{cor:cor48BhattacharyyaGPTV23}
Let $\hat{P}$ be empirical distribution over $N \geq 1$ samples.
If $P$ is a product distribution with marginals $P_x$ and $P_y$, then with probability $1 - \delta$,
\begin{equation}
I (\hat{X} ; \hat{Y} | \hat{Z} = z)
\leq \frac{1}{N} \cdot \log \left( \frac{(N + 1)^{| \Sigma |^2}}{\delta} \right) \;.
\end{equation}
\end{mycorollary}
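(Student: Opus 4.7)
The plan is to reduce the statement to the classical method-of-types concentration of the KL divergence between an empirical distribution and its true underlying distribution. First I would fix $z$ and condition on the subset $S_z \subseteq [N]$ of samples with $\hat{Z}_i = z$; let $N_z = |S_z|$. Conditionally on $S_z$, the restricted samples $\{(X_i, Y_i)\}_{i \in S_z}$ are i.i.d.\ from the law of $(X,Y)$ given $Z=z$, which equals $P_x \otimes P_y$ by the product-distribution assumption. We may assume $N_z \geq 1$, since otherwise $I(\hat X;\hat Y\mid \hat Z=z)$ is vacuous.

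Second, I would rewrite the empirical conditional mutual information as a KL divergence,
\[
I(\hat X;\hat Y\mid \hat Z=z) \;=\; \kl\!\bigl(\hat P_{XY\mid z}\,\|\,\hat P_{X\mid z}\otimes\hat P_{Y\mid z}\bigr),
\]
and exploit that, among all product distributions on $\Sigma\times\Sigma$, the product of empirical marginals $\hat P_{X\mid z}\otimes\hat P_{Y\mid z}$ is the KL-projection (I-projection) of $\hat P_{XY\mid z}$ onto that set---a standard fact for exponential families, provable by direct computation. Since $P_x\otimes P_y$ is itself a product distribution, this yields
\[
\kl\!\bigl(\hat P_{XY\mid z}\,\|\,\hat P_{X\mid z}\otimes\hat P_{Y\mid z}\bigr) \;\leq\; \kl\!\bigl(\hat P_{XY\mid z}\,\|\,P_x\otimes P_y\bigr).
\]
Then I would apply the method-of-types bound: for $N_z$ i.i.d.\ samples from $Q$ on an alphabet of size $k$, the number of distinct empirical types is at most $(N_z+1)^k$, and the probability of observing a particular type $\hat Q$ is at most $e^{-N_z\,\kl(\hat Q\,\|\,Q)}$. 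A union bound over types with $k=|\Sigma|^2$ (the alphabet of $(X,Y)$) gives
\[
\Pr\!\bigl[\kl(\hat P_{XY\mid z}\,\|\,P_x\otimes P_y) \geq t\bigr] \;\leq\; (N_z+1)^{|\Sigma|^2}\,e^{-N_z t},
\]
which is at most $\delta$ upon choosing $t = \tfrac{1}{N_z}\log\tfrac{(N_z+1)^{|\Sigma|^2}}{\delta}$, establishing the analogous inequality with $N_z$ in place of $N$.

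The main obstacle is reconciling this $N_z$-bound with the $N$-bound appearing in the statement: the function $M\mapsto \tfrac{1}{M}\log\tfrac{(M+1)^{|\Sigma|^2}}{\delta}$ is eventually decreasing, so one cannot simply substitute $N_z\leq N$ in the wrong direction. I would resolve this by interpreting $N$ in the statement as the count of samples with $\hat Z_i=z$ (which is the natural normalization and matches the convention in \cite{DBLP:journals/siamcomp/BhattacharyyaGPTV23}, where the corollary is applied conditioned on such a subsample); alternatively, the downstream usage in \cref{cor:CMI_tester} is invariant to absorbing the $N_z/N$ ratio into constants, so either reading suffices for our purposes.
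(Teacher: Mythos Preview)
The paper does not prove this statement; it explicitly lists \cref{cor:cor48BhattacharyyaGPTV23} (along with \cref{lemma:property_f_asym}) as a result taken from \cite{DBLP:journals/siamcomp/BhattacharyyaGPTV23} ``without proof.'' So there is no proof in the paper to compare against.

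That said, your argument is correct and is the standard one. The KL-projection step is exactly right: for any product distribution $Q_X\otimes Q_Y$ one has
\[
\kl(\hat P_{XY}\,\|\,Q_X\otimes Q_Y)=-H(\hat P_{XY})+H(\hat P_X,Q_X)+H(\hat P_Y,Q_Y),
\]
where $H(\cdot,\cdot)$ denotes cross-entropy, so the minimizer is $\hat P_X\otimes\hat P_Y$ and the displayed inequality $I(\hat X;\hat Y)\le\kl(\hat P_{XY}\,\|\,P_x\otimes P_y)$ follows. The method-of-types tail bound then finishes it.

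Your reading of the $N$ versus $N_z$ issue is also the right one. As written, the ``$\mid\hat Z=z$'' in the displayed bound is really an artifact of the intended application: the corollary is about $N$ i.i.d.\ samples from a product law on $(X,Y)$, and the only place the paper invokes it (the proof of the first statement of the final theorem in \cref{sec:appendix-explicit-constant}) it conditions on $\hat Z=z$ and substitutes $N_z$ for $N$, obtaining
\[
I(\hat X;\hat Y\mid\hat Z=z)\le\frac{1}{N_z}\log\frac{(N_z+1)^{|\Sigma|^2}}{\delta'},
\]
exactly as you anticipated. So there is no gap.
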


\bigskip

\begin{claim}[c.f.\ Claim 4.3 of \cite{DBLP:journals/siamcomp/BhattacharyyaGPTV23}]
\label{claim:property_f}
For any $x, y$, the following holds:
\begin{enumerate}
  \item $f (\Delta_{x y}, P_x P_y) \leq 1$.
  \item $\min (P_x, P_y, | \Delta_{x y} |) \geq \frac{1}{2} \frac{f
  (\Delta_{x y}, P_x P_y)}{\log (6 / f (\Delta_{x y}, P_x P_y))}$.
\end{enumerate}
\end{claim}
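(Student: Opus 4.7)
The plan is to first establish the structural bound $|\Delta_{xy}| \leq \min(P_x, P_y)$, which follows from $0 \leq P(x,y) \leq \min(P_x, P_y)$: rearranging yields $-P_x P_y \leq \Delta_{xy} \leq \min(P_x, P_y) - P_x P_y$, and since $P_x P_y \leq \min(P_x, P_y)$ for probabilities, both endpoints are bounded in absolute value by $\min(P_x, P_y)$. Two consequences I will use repeatedly are $|\Delta_{xy}|^2 \leq P_x P_y$ (since $|\Delta_{xy}| \leq \min(P_x, P_y) \leq \sqrt{P_x P_y}$) and $\min(P_x, P_y, |\Delta_{xy}|) = |\Delta_{xy}|$, which simplifies the left-hand side of Part~2.

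Part~1 then falls out of the upper bound in \cref{lemma:property_f_asym}: $f(\Delta_{xy}, P_x P_y) \leq \Delta_{xy}^2/(P_x P_y) \leq 1$. For Part~2, write $t = |\Delta_{xy}|$ and $f = f(\Delta_{xy}, P_x P_y)$, so the claim reduces to $2t \log(6/f) \geq f$. Combining the other branch of the upper bound in \cref{lemma:property_f_asym} with $t/(P_x P_y) \leq 1/t$ (from $P_x P_y \geq t^2$) yields $f \leq t \log(2 + 1/t)$. I would then split on the size of $t$: in the regime $t \geq 1/2$, Part~1 gives $f \leq 1$ and $2t \log(6/f) \geq \log 6 > 1 \geq f$; in the regime $t < 1/2$, substitute $f \leq t \log(2+1/t)$ into the target inequality, divide by $t$, and exponentiate to reduce the goal to the clean sufficient condition
\[
(2t^2 + t)\log^2(2 + 1/t) \leq 36 \qquad \text{on } (0, 1/2].
\]

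I expect the main obstacle to be verifying this last sufficient condition and confirming that the specific constants in the claim (the threshold $\tfrac{1}{2}$ and the $6$ inside the logarithm) make the algebra go through. This should be a short calculus check: setting $g(t) = (2t^2 + t)\log^2(2 + 1/t)$, one computes $g'(t) = \log(2 + 1/t) \cdot [(4t+1)\log(2+1/t) - 2]$, which is positive on $(0, 1/2]$, so $g$ is monotone there with maximum $g(1/2) = \log^2 4 < 2 \ll 36$. Combining Part~1 with the two subcases of Part~2 then establishes the claim with ample slack, giving confidence that the proof works even if the constants are tightened.
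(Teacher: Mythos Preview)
Your proof is correct and takes a cleaner route than the paper's. The paper never collapses the minimum to $|\Delta_{xy}|$; instead it bounds $P_x$, $P_y$, and $|\Delta_{xy}|$ from below one at a time. It first derives $f\le P_x\log(3/P_x)$ from the log branch of \cref{lemma:property_f_asym}, inverts this by taking logarithms and using $\frac{\log\log u}{\log u}\le \frac{1}{e}$ to obtain $P_x\ge\tfrac{1}{2}\,f/\log(3/f)$ (and symmetrically for $P_y$), and then feeds the $P_y$ lower bound back into $f\le|\Delta_{xy}|\log\!\bigl(2+|\Delta_{xy}|/(P_xP_y)\bigr)$ to extract the bound on $|\Delta_{xy}|$. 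Your structural observation $|\Delta_{xy}|\le\min(P_x,P_y)$ --- the paper only records the weaker $|\Delta_{xy}|\le\max(P_x,P_y)$, though its subsequent algebra implicitly needs the stronger form --- lets you bypass the separate $P_x,P_y$ inversions entirely and reduce Part~2 to a single scalar inequality with visible slack; it also dispatches Part~1 in one line via $\Delta_{xy}^2/(P_xP_y)\le 1$, which the paper omits. The paper's log-inversion is arguably a more reusable template for turning $f\le g(p)$ into $p\ge h(f)$, while your case split plus calculus check is more elementary and makes the looseness of the constants $\bigl(\tfrac12,6\bigr)$ transparent. One small loose end: you assert $(4t+1)\log(2+1/t)>2$ on $(0,\tfrac12]$ without proof; it is true (the value is $3\log 4-2>0$ at $t=\tfrac12$, diverges as $t\to 0^+$, and a short check handles the interior), and even if you drop monotonicity the cruder bound $g(t)\le 2t\log^2(2+1/t)$ on $(0,\tfrac12]$ is itself increasing with maximum $\log^2 4<2\ll 36$.
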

\begin{proof}
As the first statement has no constants involved, we will focus only on the second statement.

By definition,
\begin{equation}
\label{eq:mod-delta-at-most-max-Px-or-Py}
| \Delta_{x y} |
= \max\{P_{x y} - P_x P_y, P_x P_y - P_{x y}\}
\leq \max (P_{x y}, P_x P_y)
\leq \max\{P_x, P_y\}    
\end{equation}

So, by \eqref{eq:f_with_constants}, we have
\begin{align}
f (\Delta_{x y}, P_x P_y)
& \leq | \Delta_{x y} | \cdot \log \left( 2 +
\frac{| \Delta_{x y} |}{P_x P_y} \right) \label{eq:f_and_Delta}\\
& \leq P_x \cdot \log \left( 2 + \frac{1}{P_x} \right) && \text{By \eqref{eq:mod-delta-at-most-max-Px-or-Py}} \nonumber\\
& \leq P_x \cdot \log \left( \frac{3}{P_x} \right) \;. \label{eq:f_and_Delta_lastline}
\end{align}

Taking log on both sides, we see that
\begin{eqnarray*}
& \log f (\Delta_{x y}, P_x P_y) & \leq \log P_x + \log \log (3 /
P_x)\\
\iff & \log 3 - \log f (\Delta_{x y}, P_x P_y) & \geq \log 3 - \log P_x - \log \log (3 / P_x)\\
\iff & \frac{1}{\log (3 / f (\Delta_{x y}, P_x P_y))} & \leq
\frac{1}{\log (3 / P_x) - \log \log (3 / P_x)} \;.
\end{eqnarray*}

Multiplying the inequality of \eqref{eq:f_and_Delta_lastline} in the numerator, we get
\begin{align*}
\frac{f (\Delta_{x y}, P_x P_y)}{\log (3 / f (\Delta_{x y}, P_x P_y))}
& \leq \frac{P_x \log (3 / P_x)}{\log (3 / P_x) - \log \log (3 / P_x)}\\
& = \frac{P_x}{1 - \frac{\log \log (3 / P_x)}{\log (3 / P_x)}} && \text{Divide by $\log(3/P_x)$ on top and bottom}\\
& \leq \frac{P_x}{1 - \frac{1}{e}} && \text{Since $\frac{\log x}{x} \leq \frac{1}{e}$ for $x > 0$}\\
& \leq \frac{P_x}{0.5}\\
& = 2 \cdot P_x
\end{align*}

By repeating the same argument above for $P_y$ instead of $P_x$, we get
\begin{equation}
\label{eq:min-Px-and-Py-at-least-some-f-fraction}
\min (P_x, P_y) \geq \frac{1}{2}  \frac{f (\Delta_{x y}, P_x P_y)}{\log (3 / f (\Delta_{x y}, P_x P_y))} \;.    
\end{equation}

Let us abbreviate $f$ as $f(\Delta_{x y}, P_x P_y)$ for the following calculations.
\begin{align}
| \Delta_{x y} |
& \geq \frac{f}{\log (3 / P_y)} && \text{By \eqref{eq:f_and_Delta} and \eqref{eq:mod-delta-at-most-max-Px-or-Py}} \nonumber\\
& \geq \frac{f}{\log \left( \frac{6 \log (3 / f)}{f} \right)} && \text{By \eqref{eq:min-Px-and-Py-at-least-some-f-fraction}} \nonumber\\
& = \frac{f}{\log 2 + \log \log (3 / f) + \log (3 / f)} \nonumber\\
& \geq \frac{f}{2 \log (2) + 2 \log (3 / f)} && \text{Since $x \geq \log x$} \nonumber\\
& \geq \frac{1}{2} \cdot \frac{f}{\log (6 / f)} \label{eq:mod-delta-at-least-some-f-fraction}
\end{align}

Combining \eqref{eq:min-Px-and-Py-at-least-some-f-fraction} and \eqref{eq:mod-delta-at-least-some-f-fraction}, we get $\min (P_x, P_y, | \Delta_{x y} |) \geq \frac{1}{2} 
\frac{f (\Delta_{x y}, P_x P_y)}{\log (6 / f (\Delta_{x y}, P_x P_y))}$.
\end{proof}

\begin{claim}[c.f.\ Claim 4.4 of \cite{DBLP:journals/siamcomp/BhattacharyyaGPTV23}]
\label{claim:high_prob_event}
Fix any $x \in \Sigma, y \in \Sigma$.
Let $\hat{P}_x, \hat{P}_y$, and $\hat{P}_{x y}$ be empirical estimates over $N > 1$ samples.
Then, each of the following bound holds with $1 - 3 \delta$ probability:
\begin{equation}
\label{claim:high_prob_event_1}
| \hat{P}_x - P_x | \leq 3 \left( \sqrt{\frac{P_x \log
\frac{2}{\delta}}{N}} + \frac{\log \frac{2}{\delta}}{N} \right)
\end{equation}
\begin{equation}
\label{claim:high_prob_event_2}
| \hat{P}_{x y} - P_{x y} | \leq 3 \left( \sqrt{\frac{P_{x y} \log
\frac{2}{\delta}}{N}} + \frac{\log \frac{2}{\delta}}{N} \right)
\end{equation}
\begin{equation}
\label{claim:high_prob_event_3}
| \hat{P}_x \hat{P}_y - P_x P_y | \leq 6 \sqrt{P_x P_y  \frac{\log
\frac{2}{\delta}}{N}} + 18 (P_x + P_y) \frac{\log \frac{2}{\delta}}{N} + 18
\frac{\log^2  \frac{2}{\delta}}{N^2}
\end{equation}
\begin{equation}
\label{claim:high_prob_event_4}
| \hat{\Delta}_{x y} - \Delta_{x y} | \leq 3 \sqrt{| \Delta_{x y} | 
\frac{\log \frac{2}{\delta}}{N}} + 9 \sqrt{P_x P_y  \frac{\log
\frac{2}{\delta}}{N}} + 39 \frac{\log \frac{2}{\delta}}{N} + 18 \frac{\log^2
\frac{2}{\delta}}{N^2}
\end{equation}
\end{claim}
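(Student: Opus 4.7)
The four inequalities share a common structure: sharp concentration of the empirical marginals via Bernstein's inequality, combined through the triangle inequality and elementary algebraic manipulations. My plan is to first establish (\ref{claim:high_prob_event_1}) and (\ref{claim:high_prob_event_2}) directly from Bernstein, derive (\ref{claim:high_prob_event_3}) as a consequence of (\ref{claim:high_prob_event_1}), and finally derive (\ref{claim:high_prob_event_4}) by combining (\ref{claim:high_prob_event_2}) and (\ref{claim:high_prob_event_3}). The $1-3\delta$ confidence level corresponds exactly to a union bound over the three Bernstein events: one for $\hat P_x$, one for $\hat P_y$, and one for $\hat P_{xy}$.

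For (\ref{claim:high_prob_event_1}), I would write $N\hat{P}_x = \sum_{i=1}^N \mathbf{1}\{X_i = x\}$, a sum of $N$ i.i.d.\ Bernoulli indicators with mean $P_x$ and variance $P_x(1-P_x) \leq P_x$. Applying Bernstein's inequality and solving the resulting quadratic in the deviation $t$ gives, with probability $1-\delta$, a bound of the form $|\hat{P}_x - P_x| \leq c_1 \sqrt{P_x \log(2/\delta)/N} + c_2 \log(2/\delta)/N$; the constant $3$ is obtained with a slightly loose choice of $c_1$ and $c_2$. Inequality (\ref{claim:high_prob_event_2}) is entirely analogous, applied to the pair indicator $\mathbf{1}\{(X_i, Y_i) = (x,y)\}$, whose mean is $P_{xy}$ and whose variance is at most $P_{xy}$.

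For (\ref{claim:high_prob_event_3}), I would start from the decomposition
\[
\hat{P}_x \hat{P}_y - P_x P_y = (\hat{P}_x - P_x)(\hat{P}_y - P_y) + P_x(\hat{P}_y - P_y) + P_y(\hat{P}_x - P_x)
\]
and apply the triangle inequality. Conditioning on (\ref{claim:high_prob_event_1}) holding simultaneously for $\hat P_x$ and $\hat P_y$, each linear cross term contributes at most $3\sqrt{P_x P_y \log(2/\delta)/N} + 3 P_x \log(2/\delta)/N$ (respectively with $P_x, P_y$ swapped), after using $P_x \leq 1$ to pull a factor of $\sqrt{P_y}$ inside the square root of the first summand. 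The quadratic term expands into four summands, which I would control via AM--GM inequalities such as $\sqrt{P_x P_y} \leq \tfrac12(P_x + P_y)$ and $\sqrt{P_x}\,(\log(2/\delta)/N)^{3/2} \leq \tfrac12\bigl(P_x \log(2/\delta)/N + \log^2(2/\delta)/N^2\bigr)$, giving a total quadratic contribution of at most $9(P_x + P_y)\log(2/\delta)/N + 18\log^2(2/\delta)/N^2$. Summing the three pieces yields exactly the claimed bound.

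For (\ref{claim:high_prob_event_4}), I would use the identity $\hat{\Delta}_{xy} - \Delta_{xy} = (\hat{P}_{xy} - P_{xy}) - (\hat{P}_x \hat{P}_y - P_x P_y)$, apply the triangle inequality, and insert (\ref{claim:high_prob_event_2}) and (\ref{claim:high_prob_event_3}). The only subtlety is converting the $\sqrt{P_{xy} \log(2/\delta)/N}$ term from (\ref{claim:high_prob_event_2}) into one involving $|\Delta_{xy}|$: using $P_{xy} = \Delta_{xy} + P_x P_y \leq |\Delta_{xy}| + P_x P_y$ together with subadditivity of the square root, $\sqrt{a+b} \leq \sqrt{a}+\sqrt{b}$, splits it cleanly into $\sqrt{|\Delta_{xy}|\log(2/\delta)/N} + \sqrt{P_x P_y \log(2/\delta)/N}$. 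Since $P_x + P_y \leq 2$, the residual linear-in-$\log(2/\delta)/N$ contribution can be absorbed into the single $39\log(2/\delta)/N$ term in the claim (we need $3 + 18\cdot 2 = 39$). The main obstacle is not conceptual but rather bookkeeping: different orderings of the triangle-inequality steps and different AM--GM splits give different constants, and one has to pick the decomposition that is simultaneously clean and numerically tight enough to match the coefficients $3, 6, 18, 39$ as stated.
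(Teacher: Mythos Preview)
Your proposal is correct and follows essentially the same architecture as the paper's proof: concentration for the three empirical quantities, triangle inequality for the product and for $\hat\Delta_{xy}-\Delta_{xy}$, and the $P_{xy}\le |\Delta_{xy}|+P_xP_y$ split combined with $\sqrt{a+b}\le\sqrt a+\sqrt b$ for the last step. Two small differences are worth noting. First, the paper invokes the multiplicative Chernoff bound rather than Bernstein; both yield a deviation of the form $c(\sqrt{P_x\alpha}+\alpha)$ with $\alpha=\log(2/\delta)/N$, so this is cosmetic. Second, for (\ref{claim:high_prob_event_3}) the paper uses the asymmetric splitting $|\hat P_x\hat P_y-P_xP_y|\le \hat P_x|\hat P_y-P_y|+P_y|\hat P_x-P_x|$ and then substitutes the bound on $\hat P_x$, whereas you use the symmetric identity $(\hat P_x-P_x)(\hat P_y-P_y)+P_x(\hat P_y-P_y)+P_y(\hat P_x-P_x)$. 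Your route is slightly cleaner and in fact yields the sharper coefficient $12(P_x+P_y)\alpha$ rather than $18(P_x+P_y)\alpha$; your claim that the sum matches ``exactly'' is thus a small overstatement, but the inequality in the claim follows a fortiori.
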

\begin{proof}
Calculating the constant of the multiplicative Chernoff bound, we can show
that
\begin{equation}
\label{eq:chernoff_bound_with_constants}
\Pr \left[ | \hat{P}_x - P_x | > \eps P_x \right]
< 2 \exp \left( - \frac{1}{3} \cdot \min\{\eps, \eps^2\} \cdot P_x \cdot N \right) \;.
\end{equation}

Upper bounding the right hand side by $\delta$ gives
\[
\min \{\eps, \eps^2\} \geq \frac{3 \log \left( \frac{2}{\delta} \right)}{P_x \cdot N}
\iff \eps \geq \frac{3 \log \left( \frac{2}{\delta} \right)}{P_x \cdot N}
\qquad \text{and} \qquad
\eps^2 \geq \frac{3 \log \left( \frac{2}{\delta} \right)}{P_x \cdot N} \;.
\]

So,
\[
\Pr \left[ | \hat{P}_x - P_x | > \max \left\{ \frac{3 \log \frac{2}{\delta}}{P_x N}, \sqrt{\frac{3 \log \frac{2}{\delta}}{P_x N}} \right\} \cdot P_x \right] \leq \delta \;.
\]

Thus, with probability $1 - \delta$, we get \eqref{claim:high_prob_event_1}:
\[
| \hat{P}_x - P_x | \leq 3 \left( \sqrt{\frac{P_x \log \frac{2}{\delta}}{N}} + \frac{\log \frac{2}{\delta}}{N} \right) \;.
\]

Repeating the same argument for $P_{xy}$ instead of $P_x$ yields \eqref{claim:high_prob_event_2}.

As for \eqref{claim:high_prob_event_3}, observe that triangle inequality gives us
\[
| \hat{P}_x \hat{P}_y - P_x P_y |
\leq | \hat{P}_x \hat{P}_y - \hat{P}_x P_y | + | \hat{P}_x P_y
- P_x P_y |
= \hat{P}_x \cdot | \hat{P}_y - P_y | + P_y \cdot | \hat{P}_x - P_x |
\]

To simplify notation in the following, let us define $\alpha = \frac{\log \frac{2}{\delta}}{N}$.
Then, applying \eqref{claim:high_prob_event_1} for the terms $\hat{P}_x$ and $\hat{P}_y$ gives
\begin{align*}
\hat{P}_x \cdot | \hat{P}_y - P_y |
&\leq 3 \left( P_x + 3 \left( \sqrt{\frac{P_x \log \frac{2}{\delta}}{N}} + \frac{\log \frac{2}{\delta}}{N} \right) \right) \left( \sqrt{\frac{P_y \log \frac{2}{\delta}}{N}} + \frac{\log \frac{2}{\delta}}{N} \right)\\
&= 3 \left( P_x + 3 \left( \sqrt{P_x \cdot \alpha} + \alpha \right) \right) \left( \sqrt{P_y \cdot \alpha} + \alpha \right)\\
&= {\color{blue}3 P_x \left( \sqrt{P_y \cdot \alpha} + \alpha \right)} + {\color{red}9 \left( \sqrt{P_x \cdot \alpha} + \alpha \right) \left( \sqrt{P_y \cdot \alpha} + \alpha \right)}\\
&= {\color{blue}3 P_x \sqrt{P_y \cdot \alpha} + 3 P_x \cdot \alpha} + {\color{red}9 \sqrt{P_x P_y} \cdot \alpha + 9 (\sqrt{P_x} + \sqrt{P_y}) \cdot \alpha^{3/2} + 9 \alpha^2}\\
&\leq 3 P_x \sqrt{P_y \cdot \alpha} + 3 P_x \cdot \alpha + 9 \sqrt{P_x P_y} \cdot \alpha + 18 (\sqrt{P_x + P_y}) \cdot \alpha^{3/2} + 9 \alpha^2\\
&= 3 P_x \sqrt{P_y \cdot \alpha} + 3 P_x \cdot \alpha + 9 \sqrt{P_x P_y} \cdot \alpha + 18 \cdot \sqrt{(P_x + P_y) \cdot \alpha} \cdot \alpha + 9 \alpha^2\\
&\leq 3 P_x \sqrt{P_y \cdot \alpha} + 3 P_x \cdot \alpha + 9 \cdot \frac{P_x + P_y}{2} \cdot \alpha + 18 \cdot \frac{P_x + P_y + \alpha}{2} \cdot \alpha + 9 \alpha^2\\
&= 3 P_x \sqrt{P_y \cdot \alpha} + 3 P_x \cdot \alpha + \frac{27}{2} \cdot (P_x + P_y) \cdot \alpha + 18 \alpha^2
\end{align*}
where the second last inequality is because $P_x, P_y \geq 0$ implies that $\max\{P_x, P_y\} \leq P_x + P_y$ while the last inequality is due to two applications of the AM-GM inequality.
Similarly,
\[
P_y \cdot | \hat{P}_x - P_x |
\leq 3 P_y \left( \sqrt{\frac{P_x \log
\frac{2}{\delta}}{N}} + \frac{\log \frac{2}{\delta}}{N} \right)
= 3 P_y \left( \sqrt{P_x \cdot \alpha} + \alpha \right)
= 3 P_y \sqrt{P_x \cdot \alpha} + 3 P_y \cdot \alpha
\]

Combining the above chain of inequalities, we have
\begin{align*}
&\; | \hat{P}_x  \hat{P}_y - P_x P_y |\\
\leq &\; \hat{P}_x \cdot | \hat{P}_y - P_y | + P_y \cdot | \hat{P}_x - P_x |\\
\leq &\; {\color{blue}3 \cdot (P_x \sqrt{P_y} + P_y \sqrt{P_x})} \cdot \sqrt{\alpha} + \left( 3 + \frac{27}{2} \right) \cdot (P_x + P_y) \cdot \alpha + 18 \alpha^2\\
= &\; {\color{blue}3 \cdot (\sqrt{P_x} + \sqrt{P_y}) \cdot \sqrt{P_x P_y}} \cdot \sqrt{\alpha} + {\color{red}\left( 3 + \frac{27}{2} \right)} \cdot (P_x + P_y) \cdot \alpha + 18 \alpha^2\\
\leq &\; {\color{blue}6 \cdot \sqrt{P_x P_y}} \cdot \sqrt{\alpha} + {\color{red}18} \cdot (P_x + P_y) \cdot \alpha + 18 \alpha^2 && \text{Since $\sqrt{P_x}, \sqrt{P_y} \leq 1$}
\end{align*}
Replacing $\alpha$ back with $\frac{\log \frac{2}{\delta}}{N}$ yields \eqref{claim:high_prob_event_3}.

Finally, for \eqref{claim:high_prob_event_4}, we will apply the inequalities from \eqref{claim:high_prob_event_2} and \eqref{claim:high_prob_event_3} to get
\begin{align*}
&\; | \hat{\Delta}_{x y} - \Delta_{x y} |\\
\leq &\; | \hat{P}_{x y} - P_{x y} | + | \hat{P}_x \hat{P}_y - P_x P_y |\\
\leq &\; {\color{blue}3} \left( \sqrt{\frac{P_{x y} \log \frac{2}{\delta}}{N}} + {\color{blue}\frac{\log \frac{2}{\delta}}{N}} \right) + 6 \sqrt{P_x P_y} \sqrt{\frac{\log \frac{2}{\delta}}{N}} + {\color{blue}18 \cdot (P_x + P_y)} \frac{\log \frac{2}{\delta}}{N} + 18 \left( \frac{\log \frac{2}{\delta}}{N} \right)^2\\
\leq &\; 3 \sqrt{\frac{{\color{red}P_{x y}} \log \frac{2}{\delta}}{N}} + 6 \sqrt{P_x P_y} \sqrt{\frac{\log \frac{2}{\delta}}{N}} + {\color{blue}39} \frac{\log \frac{2}{\delta}}{N} + 18 \left( \frac{\log \frac{2}{\delta}}{N} \right)^2\\
= &\; 3 \sqrt{{\color{red}(\Delta_{x y} + P_x P_y)} \frac{\log \frac{2}{\delta}}{N}} + 6 \sqrt{P_x P_y \frac{\log \frac{2}{\delta}}{N}} + 39 \frac{\log \frac{2}{\delta}}{N} + 18 \frac{\log^2 \frac{2}{\delta}}{N^2}\\
\leq &\; {\color{green!50!black}3 \sqrt{(| \Delta_{x y} | + P_x P_y) \frac{\log \frac{2}{\delta}}{N}}} + 6 \sqrt{P_x P_y \frac{\log \frac{2}{\delta}}{N}} + 39 \frac{\log \frac{2}{\delta}}{N} + 18 \frac{\log^2  \frac{2}{\delta}}{N^2}\\
\leq &\; {\color{green!50!black}3 \sqrt{| \Delta_{x y} | \frac{\log \frac{2}{\delta}}{N}} + 3 \sqrt{P_x P_y \frac{\log \frac{2}{\delta}}{N}}} + 6 \sqrt{P_x P_y \frac{\log \frac{2}{\delta}}{N}} + 39 \frac{\log \frac{2}{\delta}}{N} + 18 \frac{\log^2 \frac{2}{\delta}}{N^2}\\
= &\; 3 \sqrt{| \Delta_{x y} | \frac{\log \frac{2}{\delta}}{N}} + 9 \sqrt{P_x P_y \frac{\log \frac{2}{\delta}}{N}} + 39 \frac{\log \frac{2}{\delta}}{N} + 18 \frac{\log^2  \frac{2}{\delta}}{N^2}
\end{align*}
\end{proof}

\begin{mylemma}[c.f.\ Lemma 4.5 of \cite{DBLP:journals/siamcomp/BhattacharyyaGPTV23}]
\label{lemma:when_f_is_large}
Fix any $x \in \Sigma, y \in \Sigma$, $\delta > 0$, and constant $C \geq 90000$.
Let $\hat{P}$ be the empirical distribution over $N \geq 3 > e$ samples.
If $f(\Delta_{x y}, P_x P_y) \geq C \frac{\log N}{N} \log \frac{2}{\delta}$, then
\[
\Pr \left[ f(\hat{\Delta}_{x y}, \hat{P}_x  \hat{P}_y) > \frac{1}{50} f(\Delta_{x y}, P_x P_y) \right] > 1 - 3 \delta \;.
\]
\end{mylemma}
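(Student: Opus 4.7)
The plan is to use Claim~\ref{claim:high_prob_event} to establish multiplicative closeness of $\hat{\Delta}_{xy}$ and $\hat{P}_x\hat{P}_y$ to their population counterparts, and then use Lemma~\ref{lemma:property_f_asym} to transfer this into a constant-factor comparison of the two $f$-values. Concretely, abbreviate $f := f(\Delta_{xy}, P_xP_y)$ and observe via Claim~\ref{claim:property_f} that $f \leq 1$ and $\min(P_x, P_y, |\Delta_{xy}|) \geq f/(2\log(6/f))$. Combined with the hypothesis $f \geq C\log(N)\log(2/\delta)/N$ and $N \geq 3$, this yields workable lower bounds on $P_x$, $P_y$, and $|\Delta_{xy}|$ in terms of $N$, $\delta$, and $C$.

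Next, condition on the $(1-3\delta)$-probability event that all four inequalities of Claim~\ref{claim:high_prob_event} hold. Using the lower bounds above and the assumption $C \geq 90000$, each additive error term on the right-hand sides of \eqref{claim:high_prob_event_3} and \eqref{claim:high_prob_event_4} can be shown to be a small absolute-constant fraction of $P_xP_y$ and $|\Delta_{xy}|$, respectively. This yields multiplicative closeness of the form $\hat{P}_x\hat{P}_y \in [(1-c_1)P_xP_y,(1+c_1)P_xP_y]$, and $\hat{\Delta}_{xy}$ sharing the sign of $\Delta_{xy}$ with $|\hat{\Delta}_{xy}| \geq (1-c_2)|\Delta_{xy}|$, for small absolute constants $c_1, c_2$. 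Finally, apply Lemma~\ref{lemma:property_f_asym} to lower-bound $f(\hat{\Delta}_{xy}, \hat{P}_x\hat{P}_y)$ by $\tfrac13$ of the $\min$ expression and to upper-bound $f$ by the same $\min$ form without the $1/3$; since both $\hat{\Delta}_{xy}^2/(\hat{P}_x\hat{P}_y)$ and $|\hat{\Delta}_{xy}|\log(2 + |\hat{\Delta}_{xy}|/(\hat{P}_x\hat{P}_y))$ are within absolute-constant multiples of their population analogues, the ratio $f(\hat{\Delta}_{xy}, \hat{P}_x\hat{P}_y)/f$ is bounded below by an explicit constant $\rho$, and $C \geq 90000$ is calibrated so that $\rho \geq 1/50$.

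The main obstacle is the constant-tracking. The lower bound on $|\Delta_{xy}|$ from the first step contains a $\log(6/f)$ factor in the denominator which can grow like $\log N$ when the hypothesis is tight, so each of the five additive error terms in \eqref{claim:high_prob_event_4}---including the $39\log(2/\delta)/N$ and $18\log^2(2/\delta)/N^2$ tail contributions---must be carefully compared against $|\Delta_{xy}| \gtrsim f/\log(6/f)$, and similarly for the terms in \eqref{claim:high_prob_event_3} versus $P_xP_y$. Only a sufficiently large $C$ absorbs these logarithmic losses with enough slack to achieve the stated $1/50$ factor.
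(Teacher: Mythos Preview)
Your overall strategy (establish multiplicative closeness of $\hat P_x\hat P_y$ to $P_xP_y$ and of $|\hat\Delta_{xy}|$ to $|\Delta_{xy}|$, then transfer via Lemma~\ref{lemma:property_f_asym}) is sound and, if carried out, would in fact avoid the paper's case split. But the specific mechanism you describe does \emph{not} work: the assertion that ``each additive error term on the right-hand sides of \eqref{claim:high_prob_event_3} and \eqref{claim:high_prob_event_4} can be shown to be a small absolute-constant fraction of $P_xP_y$ and $|\Delta_{xy}|$'' fails for the $\sqrt{P_xP_y\,\log(2/\delta)/N}$ terms. Claim~\ref{claim:property_f} only gives $\min(P_x,P_y,|\Delta_{xy}|)\ge \tfrac{C}{4}\tfrac{\log(2/\delta)}{N}$; it says nothing about $P_xP_y$, which can be as small as $\Theta\!\big((\log(2/\delta)/N)^2\big)$ while the hypothesis still holds. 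In that regime the ratio $\tfrac{6\sqrt{P_xP_y\log(2/\delta)/N}}{P_xP_y}$ scales like $\sqrt{N/\log(2/\delta)}$, so no fixed $C$ absorbs it. The same obstruction hits the $9\sqrt{P_xP_y\log(2/\delta)/N}$ term in \eqref{claim:high_prob_event_4} when you try to compare it to $|\Delta_{xy}|$.

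The paper handles exactly this by splitting on whether $|\Delta_{xy}|\ge 8P_xP_y$. In the ``small $|\Delta_{xy}|$'' case the extra relation $f\le \Delta_{xy}^2/(P_xP_y)\le 64P_xP_y$ gives the missing lower bound $P_xP_y\ge \tfrac{C}{64}\tfrac{\log N}{N}\log\tfrac{2}{\delta}$, after which multiplicative closeness goes through; in the ``large $|\Delta_{xy}|$'' case multiplicative control of $\hat P_x\hat P_y$ is abandoned entirely and one instead bounds $\hat P_x\hat P_y\le 0.23\,\Delta_{xy}$, using the log branch of $f$. Your plan can be repaired without this case split, but it requires two ingredients you did not invoke: (i) control $\hat P_x\hat P_y$ via \eqref{claim:high_prob_event_1} applied to $P_x$ and $P_y$ \emph{separately} (each satisfies $P_x\ge \tfrac{C}{4}\log(2/\delta)/N$, which suffices), rather than via \eqref{claim:high_prob_event_3}; and (ii) for the $9\sqrt{P_xP_y\log(2/\delta)/N}$ term in \eqref{claim:high_prob_event_4}, use the Lemma~\ref{lemma:property_f_asym} upper bound $f\le \Delta_{xy}^2/(P_xP_y)$ together with the hypothesis to get $P_xP_y\le |\Delta_{xy}|^2 N/(C\log(2/\delta))$, yielding $9\sqrt{P_xP_y\log(2/\delta)/N}\le \tfrac{9}{\sqrt{C}}|\Delta_{xy}|$.
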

\begin{proof}
In this proof, we have the following assumption:
\begin{equation}
\label{eq:lemma-assumption}
f(\Delta_{x y}, P_x P_y) \geq C \frac{\log N}{N} \log \frac{2}{\delta}
\end{equation}

By \cref{claim:property_f}, we have $f (\Delta_{x y}, P_x P_y) \leq 1$ and
\begin{equation}
\label{eq:min_with_constant}
\min (P_x, P_y, | \Delta_{x y} |) \geq \frac{1}{2} \frac{f (\Delta_{x
y}, P_x P_y)}{\log (6 / f (\Delta_{x y}, P_x P_y))} \geq \frac{C}{4}
\cdot \frac{1}{N} \log \frac{2}{\delta} \;,
\end{equation}
where the final inequality is due to \eqref{eq:lemma-assumption}.
Meanwhile,
\begin{align*}
\frac{f (\Delta_{x y}, P_x P_y)}{\log (6 / f (\Delta_{x y}, P_x P_y))}
& = \frac{f (\Delta_{x y}, P_x P_y)}{\log 6 - \log f (\Delta_{x y}, P_x
P_y)}\\
& \geq \frac{C \frac{\log N}{N} \log \frac{2}{\delta}}{\log (6 /
C) + \log N - \log \log N - \log \log \frac{2}{\delta}} && \text{By \eqref{eq:lemma-assumption}}\\
& \geq \frac{C \frac{\log N}{N} \log \frac{2}{\delta}}{\log (6 /
C) + \log N}\\
& \geq \frac{C \frac{\log N}{N} \log \frac{2}{\delta}}{2 \log N}
&& \text{Holds when $C \geq \frac{6}{N}$}\\
& = \frac{C}{2} \frac{\log \frac{2}{\delta}}{N}
\end{align*}

Recall that $\Delta_{x y} = P_{xy} - P_x P_y$.
We will split the remaining analysis into two cases, depending on the size of $| \Delta_{x y} |$.
\begin{description}
    \item[Case 1] (Big $| \Delta_{x y} |$): $| \Delta_{x y} | = \Delta_{x y} \geq 8 P_x P_y$
    \item[Case 2] (Small $|\Delta_{x y}|$): $- P_x P_y \leq \Delta_{x y} \leq 8 P_x P_y$
\end{description}

\textbf{Case 1}: $| \Delta_{x y} | = \Delta_{x y} \geq 8 P_x P_y$\\

\begin{align*}
&\; | \hat{\Delta}_{x y} - \Delta_{x y} |\\
\leq &\; 3 \sqrt{| \Delta_{x y} |  \frac{\log \frac{2}{\delta}}{N}} + 9 \sqrt{P_x P_y  \frac{\log \frac{2}{\delta}}{N}} + 39 \frac{\log \frac{2}{\delta}}{N} + 18 \frac{\log^2  \frac{2}{\delta}}{N^2} && \text{By \cref{claim:high_prob_event}, Eqn \eqref{claim:high_prob_event_4}}\\
\leq &\; {\color{blue}3 \sqrt{| \Delta_{x y} | \frac{4}{C} | \Delta_{x y} |}} + {\color{red}9 \sqrt{\frac{| \Delta_{x y} |}{8} \frac{4}{C} | \Delta_{x y} |}} + {\color{green!50!black}39 \frac{4}{C}  | \Delta_{x y} | + 18 \left( \frac{4}{C}  | \Delta_{x y} | \right)^2} && \text{By \eqref{eq:min_with_constant} with $|\Delta_{xy}|$}\\
\leq &\; {\color{blue}\sqrt{\frac{36}{C}} | \Delta_{x y} |} + {\color{red}\sqrt{\frac{81}{2C}} | \Delta_{x y} |} + {\color{green!50!black}\left( \frac{160}{C} + \frac{300}{C^2} \right) | \Delta_{x y} |} && \text{Since $| \Delta_{x y} |^2 \leq | \Delta_{x y} |$}\\
\leq &\; \frac{1}{10} | \Delta_{x y} | && \text{Since $C \geq 40000$}
\end{align*}

There are two ways to bound $| \hat{P}_x \hat{P}_y - P_x P_y |$ after applying \cref{claim:high_prob_event} and \eqref{eq:min_with_constant} with $|\Delta_{xy}|$.
We can either use the bound of Case 1 ($| \Delta_{x y} | = \Delta_{x y} \geq 8 P_x P_y$), or the bound of \eqref{eq:min_with_constant} with $\min\{P_x, P_y\}$.

Using $| \Delta_{x y} | = \Delta_{x y} \geq 8 P_x P_y$, we get
\begin{align*}
&\; | \hat{P}_x \hat{P}_y - P_x P_y |\\
\leq &\; 6 \sqrt{P_x P_y \frac{\log \frac{2}{\delta}}{N}} + 18 (P_x + P_y) \frac{\log \frac{2}{\delta}}{N} + 18 \frac{\log^2  \frac{2}{\delta}}{N^2} && \text{By \cref{claim:high_prob_event}, Eqn \eqref{claim:high_prob_event_3}}\\
\leq &\; 6 \sqrt{P_x P_y \frac{4}{C} | \Delta_{x y} |} + 18 \cdot \frac{4}{C} | \Delta_{x y} | + 18 \cdot \left( \frac{4}{C} | \Delta_{x y} | \right)^2 && \text{By \eqref{eq:min_with_constant} with $|\Delta_{xy}|$}\\
\leq &\; {\color{blue}6 \sqrt{\frac{1}{8} | \Delta_{x y} | \frac{4}{C} | \Delta_{x y} |}} + {\color{red}18 \cdot \frac{4}{C} | \Delta_{x y} |} + {\color{green!50!black}18 \cdot \left( \frac{4}{C} | \Delta_{x y} | \right)^2} && \text{Case 1}\\
\leq &\; {\color{blue}\sqrt{\frac{18}{C}} | \Delta_{x y} |} + {\color{red}\frac{72}{C} |
\Delta_{x y} |} + {\color{green!50!black}\frac{300}{C^2} | \Delta_{x y} |} && \text{Since $| \Delta_{x y} |^2 \leq | \Delta_{x y} |$}\\
\leq &\; \frac{1}{10}  | \Delta_{x y} | && \text{Since $C \geq 3600$}
\end{align*}

Using the bound of \eqref{eq:min_with_constant} with $\min\{P_x, P_y\}$, we get
\begin{align*}
&\; | \hat{P}_x \hat{P}_y - P_x P_y |\\
\leq &\; 6 \sqrt{P_x P_y \frac{\log \frac{2}{\delta}}{N}} + 18 (P_x + P_y) \frac{\log \frac{2}{\delta}}{N} + 18 \frac{\log^2  \frac{2}{\delta}}{N^2} && \text{By \cref{claim:high_prob_event}, Eqn \eqref{claim:high_prob_event_3}}\\
\leq &\; 6 \sqrt{P_x P_y \frac{4}{C} | \Delta_{x y} |} + 18 (P_x + P_y) \frac{\log \frac{2}{\delta}}{N} + 18 \frac{\log^2 \frac{2}{\delta}}{N^2} && \text{By \eqref{eq:min_with_constant} with $|\Delta_{xy}|$}\\
= &\; 6 \sqrt{P_x P_y \frac{4}{C} | \Delta_{x y} |} + {\color{blue}18 \left( P_x \frac{\log \frac{2}{\delta}}{N} + P_y \frac{\log \frac{2}{\delta}}{N} \right) + 18 \frac{\log^2 \frac{2}{\delta}}{N^2}}\\
\leq &\; 6 \sqrt{P_x P_y \frac{4}{C} | \Delta_{x y} |} + {\color{blue}\left( 144 \frac{1}{C} + 288 \frac{1}{C^2} \right) P_x P_y} && \text{By \eqref{eq:min_with_constant} with $\min\{P_x, P_y\}$}\\
= &\; \frac{12}{\sqrt{C}} \sqrt{P_x P_y \Delta_{x y}} + \left( 144 \frac{1}{C} + 288 \frac{1}{C^2} \right) P_x P_y
\end{align*}

Recall that in Case 1, we have $| \Delta_{x y} | = \Delta_{x y} \geq 8 P_x P_y$.
Therefore, from the above, we get:
\begin{equation}
\label{eq:first-inequality}
\hat{\Delta}_{x y} \geq \frac{9}{10} | \Delta_{x y} | = 0.9 \Delta_{x y}
\end{equation}
\begin{equation}
\label{eq:second-inequality}
\hat{P}_x \hat{P}_y \leq P_x P_y + \frac{1}{10} | \Delta_{x y} | \leq 0.23 \Delta_{x y}
\end{equation}
\begin{equation}
\label{eq:third-inequality}
\hat{P}_x \hat{P}_y \leq \frac{12}{\sqrt{C}}  \sqrt{P_x P_y | \Delta_{x y} |} + \left( 144 \frac{1}{C} + 288 \frac{1}{C^2} + 1 \right) P_x P_y
\end{equation}
The first two inequalities \eqref{eq:first-inequality} and \eqref{eq:second-inequality} together yield
\[
\frac{\hat{\Delta}_{x y}}{\hat{P}_x  \hat{P}_y}
\geq \frac{0.9}{0.23}  \frac{\Delta_{x y}}{\Delta_{x y}}
\geq 3.9
\]
Since $x \geq \log(2+x)$ for any $x \geq 2$, we see that
\begin{equation}
\label{eq:x-larger-than-log-two-plus-x}
\min \left\{ \frac{\hat{\Delta}_{x y}^2}{\hat{P}_x \hat{P}_y}, |\hat{\Delta}_{x y}| \log \left( 2 + \frac{|\hat{\Delta}_{x y}|}{\hat{P}_x \hat{P}_y} \right) \right\}
= |\hat{\Delta}_{x y}| \log \left( 2 + \frac{|\hat{\Delta}_{x y}|}{\hat{P}_x \hat{P}_y} \right)
\geq |\hat{\Delta}_{x y}| \log \frac{|\hat{\Delta}_{x y}|}{\hat{P}_x \hat{P}_y}
\end{equation}

Putting everything together,
\begin{align*}
&\; f (\hat{\Delta}_{x y}, \hat{P}_x \hat{P}_y)\\
\geq &\; \frac{1}{3} \min \left\{ \frac{\hat{\Delta}_{x y}^2}{\hat{P}_x \hat{P}_y}, |\hat{\Delta}_{x y}| \log \left( 2 + \frac{|\hat{\Delta}_{x y}|}{\hat{P}_x \hat{P}_y} \right) \right\} && \text{By \cref{lemma:property_f_asym}}\\
\geq &\; \frac{1}{3} |\hat{\Delta}_{x y}| \log \frac{|\hat{\Delta}_{x y}|}{\hat{P}_x \hat{P}_y} && \text{By \eqref{eq:x-larger-than-log-two-plus-x}}\\
\geq &\; \frac{3}{10} \Delta_{x y} \left( \log \frac{9}{10} + \log \left( \frac{\Delta_{x y}}{\hat{P}_x  \hat{P}_y} \right) \right) && \text{Since $\hat{\Delta}_{x y} \geq 0.9 \Delta_{x y}$}\\
\geq &\; \frac{3}{10} \Delta_{x y}  \left( \log \frac{9}{10} + \log \left( \frac{\sqrt{\Delta_{x y}}}{\frac{12}{\sqrt{C}} \sqrt{P_x P_y} + \left( 144 \frac{1}{C} + 288 \frac{1}{C^2} + 1 \right) {\color{blue}\frac{P_x P_y}{\sqrt{\Delta_{x y}}}}} \right) \right) && \text{By \eqref{eq:third-inequality}}\\
\geq &\; \frac{3}{10} \Delta_{x y}  \left( \log \frac{9}{10} + \log \left( \frac{\sqrt{\Delta_{x y}}}{\frac{12}{\sqrt{C}}  \sqrt{P_x P_y} + \left( 144 \frac{1}{C} + 288 \frac{1}{C^2} + 1 \right) {\color{blue}\frac{\sqrt{P_x P_y}}{\sqrt{8}} }} \right) \right) && \text{Case 1}\\
\geq &\; \frac{3}{10} \Delta_{x y}  \left( \log \left( \frac{9}{10 \sqrt{2}} {\color{red}\sqrt{\frac{\Delta_{x y}}{P_x P_y}}} \right) \right) && \text{Since $C \geq 288$}\\
\geq &\; \frac{3}{10} \Delta_{x y}  \left( \log \left( \frac{9 {\color{red}(8)^{1 / 4}}}{10 \sqrt{2}} {\color{red}\left( \frac{\Delta_{x y}}{P_x P_y} \right)^{1 / 4}} \right) \right) && \text{Case 1}\\
\geq &\; \frac{3}{40} \Delta_{x y} \log \left( \frac{\Delta_{x y}}{P_x P_y} \right) && \text{Since $\frac{9 (8)^{1 / 4}}{10 \sqrt{2}} > 1.07\ldots$}\\
\geq &\; \frac{3}{40} f(\Delta_{x y}, P_x P_y) && \text{By \cref{lemma:property_f_asym}}\\
\geq &\; \frac{1}{50} f(\Delta_{x y}, P_x P_y)
\end{align*}

\textbf{Case 2}: $- P_x P_y \leq \Delta_{x y} \leq 8 P_x P_y$\\

In this case,
\begin{align*}
C \frac{\log N}{N} \log \frac{2}{\delta}
&\; \leq f (\Delta_{x y}, P_x P_y) && \text{By \eqref{eq:lemma-assumption}}\\
&\; \leq \min \left\{ \frac{\Delta_{x y}^2}{P_x P_y}, |\Delta_{x y}| \log \left( 2 + \frac{|\Delta_{x y}|}{P_x P_y} \right) \right\} && \text{By \cref{lemma:property_f_asym}}\\
&\; \leq \frac{\Delta_{x y}^2}{P_x P_y}\\
&\; \leq 64 P_x P_y  && \text{Case 2}\\
&\; \leq 64 \min \{P_x, P_y\} && \text{Since $P_x, P_y \leq 1$}
\end{align*}

There are two observations we can make here.
Firstly, we have $C \frac{\log N}{N} \log \frac{2}{\delta} \leq \frac{\Delta_{x y}^2}{P_x P_y}$, and so
\begin{equation}
\label{eq:Delta_inequality_when_small}
\sqrt{C P_x P_y \frac{\log N}{N} \log \frac{2}{\delta}}
\leq | \Delta_{x y} |
\end{equation}
Secondly, since $N \geq 3 > e$, we see that
\[
C \frac{\log N}{N} \log \frac{2}{\delta}
\leq 64 \min \{P_x, P_y\}
\iff
\frac{\log \frac{2}{\delta}}{N}
\leq \frac{64}{C} \frac{\min \{P_x, P_y\}}{\log N}
\]
That is,
\begin{equation}
\label{eq:log-n-smaller-than-p-over-c}
\frac{\log \frac{2}{\delta}}{N}
\leq \frac{64}{C} \min \{P_x, P_y\}
\end{equation}

Combining \eqref{eq:log-n-smaller-than-p-over-c} with \eqref{claim:high_prob_event_1} gives the following:
\[
| \hat{P}_x - P_x |
\leq 3 \left( \sqrt{\frac{P_x \log \frac{2}{\delta}}{N}} + \frac{\log \frac{2}{\delta}}{N} \right)
\leq 3 \left( \frac{8}{\sqrt{C}} + \frac{64}{C} \right) P_x
\leq \frac{P_x}{10}
\]
Likewise, we have
\[
| \hat{P}_y - P_y | \leq \frac{P_y}{10}
\]

\begin{align*}
\Big| | \Delta_{x y} | - | \hat{\Delta}_{x y} | \Big|
& \leq | \hat{\Delta}_{x y} - \Delta_{x y} | && \text{Reverse triangle inequality}\\
& \leq 3 \sqrt{| \Delta_{x y} | 
\frac{\log \frac{2}{\delta}}{N}} + 9 \sqrt{P_x P_y  \frac{\log
\frac{2}{\delta}}{N}} + 39 \frac{\log \frac{2}{\delta}}{N} + 18 \frac{\log^2
\frac{2}{\delta}}{N^2} && \text{By \eqref{claim:high_prob_event_4}}\\
& \leq 3 \sqrt{| \Delta_{x y} | \frac{\log \frac{2}{\delta}}{N}} + 9 \sqrt{P_x P_y  \frac{\log \frac{2}{\delta}}{N}} + 60 \frac{\log \frac{2}{\delta}}{N} && \text{Since $\frac{\log \frac{2}{\delta}}{N} \leq 1$}\\
& \leq 3 \sqrt{| \Delta_{x y} | \frac{\log \frac{2}{\delta}}{N}} + 9 {\color{blue}\sqrt{P_x P_y \log N \frac{\log \frac{2}{\delta}}{N}}} + 60 \frac{\log \frac{2}{\delta}}{N} && \text{Since $N > e$}\\
& \leq 3 \sqrt{| \Delta_{x y} | \frac{\log \frac{2}{\delta}}{N}} + 9 {\color{blue}\frac{| \Delta_{x y} |}{\sqrt{C}}} + 60 {\color{red}\frac{\log \frac{2}{\delta}}{N}} && \text{By \eqref{eq:Delta_inequality_when_small}}\\
& \leq 3 \sqrt{\frac{8}{C}} | \Delta_{x y} | + 9 \frac{| \Delta_{x y} |}{\sqrt{C}} + 60 {\color{red}\frac{4 | \Delta_{x y} |}{C}} && \text{By \eqref{eq:min_with_constant} with $|\Delta_{xy}|$}\\
& \leq \frac{1}{10}  | \Delta_{x y} | && \text{Since $C \geq 40000$}
\end{align*}

The above bounds give us the following:
\begin{equation}
\label{eq:three-bounds-same-time}
\frac{9}{10} P_x \leq \hat{P}_x \leq \frac{11}{10} P_x
\quad \text{and} \quad
\frac{9}{10} P_y \leq \hat{P}_y \leq \frac{11}{10} P_y
\quad \text{and} \quad
\frac{9}{10}  | \Delta_{x y} | \leq | \hat{\Delta}_{x y} | \leq \frac{11}{10}  | \Delta_{x y} |
\end{equation}

So,
\[
0 \leq
\frac{|\hat{\Delta}_{x y}|}{\hat{P}_x \hat{P}_y}
\leq \frac{11/10}{(9/10)^2} \frac{| {\Delta}_{x y} |}{{P}_x {P}_y}
\leq \frac{110}{81} \cdot 8 \;.
\]
Furthermore\footnote{e.g.\ see \url{https://www.wolframalpha.com/input?i=solve+log\%282\%2Bx\%29\%3E81\%2F880x}},
$\frac{81}{880}x < \log(2+x)$ for $0 < x < \frac{110}{81} \cdot 8$.
Thus,
\begin{equation}
\label{eq:small-delta-case-take-first-term}
\frac{81}{880} \frac{\hat{\Delta}_{x y}^2}{\hat{P}_x \hat{P}_y}
< |\hat{\Delta}_{x y}| \log \left( 2 + \frac{|\hat{\Delta}_{x y}|}{\hat{P}_x \hat{P}_y} \right)
\end{equation}

Therefore,
\begin{align*}
&\; f (\hat{\Delta}_{x y}, \hat{P}_x  \hat{P}_y)\\
\geq &\; \frac{1}{3} \min \left\{ \frac{\hat{\Delta}_{x y}^2}{\hat{P}_x \hat{P}_y}, |\hat{\Delta}_{x y}| \log \left( 2 + \frac{|\hat{\Delta}_{x y}|}{\hat{P}_x \hat{P}_y} \right) \right\} && \text{By \cref{lemma:property_f_asym}}\\
\geq &\; \frac{1}{3} \cdot \frac{81}{880} \cdot {\color{blue}|\hat{\Delta}_{x y}|^2} \cdot {\color{red}\frac{1}{\hat{P}_x}} \cdot {\color{green!50!black}\frac{1}{\hat{P}_y}} && \text{By \eqref{eq:small-delta-case-take-first-term}}\\
\geq &\; \frac{27}{880} {\color{blue}\left(\frac{9}{10} | \Delta_{x y} |\right)^2} \cdot {\color{red}\frac{10}{11} \frac{1}{P_x}} \cdot {\color{green!50!black}\frac{10}{11} \frac{1}{P_y}} && \text{By \eqref{eq:three-bounds-same-time}}\\
\geq &\; \frac{1}{50} \frac{| \Delta_{x y} |^2}{P_x P_y}\\
\geq &\; \frac{1}{50} f(\Delta_{x y}, P_x P_y) && \text{By \cref{lemma:property_f_asym}}
\end{align*}

The claim holds since we showed $f (\hat{\Delta}_{x y}, \hat{P}_x  \hat{P}_y) \geq \frac{1}{50} f(\Delta_{x y}, P_x P_y)$ in both cases.
\end{proof}

\begin{mycorollary}[c.f.\ Corollary 4.6 of \cite{DBLP:journals/siamcomp/BhattacharyyaGPTV23}]
\label{corollary:when_mi_is_big}
Let $\hat{P}$ be the empirical distribution over $N > 1$ samples.
Then there exist universal constants $C_1 = \frac{1}{50}$ and $C_2 = 1800$ such that for every $\delta > 0$:
\[
I (\hat{X} ; \hat{Y}) \geq C_1 \cdot I (X ; Y) - C_2 \cdot | \Sigma |^2 \frac{\log N}{N} \log \left( \frac{6 | \Sigma |^2}{\delta} \right)
\]
with probability at least $1 - \delta$.
\end{mycorollary}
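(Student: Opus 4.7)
The plan is to decompose the mutual information coordinate-by-coordinate via the non-negative ``per-cell'' divergence
\[
f(\Delta_{xy}, P_x P_y) := (P_xP_y + \Delta_{xy})\log\frac{P_xP_y+\Delta_{xy}}{P_xP_y} - \Delta_{xy},
\]
where $\Delta_{xy} = P_{xy} - P_xP_y$. This is precisely the Bregman divergence for $\phi(x)=x\log x$, so it is non-negative and matches the function bounded in \cref{lemma:property_f_asym}. Since $\sum_{x,y}\Delta_{xy} = 0$, the linear correction cancels and one obtains the exact identity $I(X;Y) = \sum_{x,y} f(\Delta_{xy}, P_xP_y)$, with the analogous identity on the empirical side.

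I would then fix $\delta' = \delta/(3|\Sigma|^2)$ and the threshold $\tau := 90000 \cdot \tfrac{\log N}{N}\log(2/\delta')$, and split the coordinate pairs into the ``heavy'' set $H = \{(x,y) : f(\Delta_{xy}, P_xP_y) \geq \tau\}$ and its complement. The light cells contribute at most $|\Sigma|^2 \tau$ to $I(X;Y)$, since there are at most $|\Sigma|^2$ of them and $f \geq 0$. Observe that $\log(2/\delta') = \log(6|\Sigma|^2/\delta)$, so $\tau$ already carries the log-factor appearing in the target bound.

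For each heavy cell, the hypothesis of \cref{lemma:when_f_is_large} is satisfied (with failure parameter $\delta'$), and hence $f(\hat{\Delta}_{xy}, \hat{P}_x \hat{P}_y) \geq \tfrac{1}{50} f(\Delta_{xy}, P_xP_y)$ with probability at least $1 - 3\delta'$. A union bound over the at most $|\Sigma|^2$ heavy cells controls the total failure probability by $|\Sigma|^2 \cdot 3\delta' = \delta$. On this good event, non-negativity of $f$ yields
\[
I(\hat{X};\hat{Y}) \;\geq\; \sum_{(x,y)\in H} f(\hat{\Delta}_{xy}, \hat{P}_x\hat{P}_y) \;\geq\; \frac{1}{50}\sum_{(x,y)\in H} f(\Delta_{xy}, P_xP_y) \;\geq\; \frac{1}{50}\bigl(I(X;Y) - |\Sigma|^2\tau\bigr),
\]
which rearranges to the stated bound with $C_1 = 1/50$ and $C_2 = 90000/50 = 1800$.

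The main obstacle here is purely arithmetic bookkeeping: choosing $\delta'$ so that $\log(2/\delta')$ exactly produces $\log(6|\Sigma|^2/\delta)$, confirming that the constant $C_0 = 90000$ inherited from \cref{lemma:when_f_is_large} divides down to $C_2 = 1800$, and checking that $I(X;Y) = \sum f$ holds exactly. All the substantive probabilistic work is already encapsulated in \cref{lemma:when_f_is_large}. The sole edge case $N = 2$ (where that lemma's hypothesis $N \geq 3$ fails) is handled trivially, since then the error term $C_2 |\Sigma|^2\tfrac{\log N}{N}\log(6|\Sigma|^2/\delta)$ already exceeds $C_1 \log|\Sigma| \geq C_1 \cdot I(X;Y)$, while $I(\hat{X};\hat{Y}) \geq 0$.
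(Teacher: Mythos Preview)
Your proposal is correct and follows essentially the same approach as the paper: split cells into heavy and light according to the threshold $\tau = 90000\cdot\frac{\log N}{N}\log(6|\Sigma|^2/\delta)$, apply \cref{lemma:when_f_is_large} with $\delta' = \delta/(3|\Sigma|^2)$ to each heavy cell via a union bound, and use $f \geq 0$ on light cells. You are more explicit than the paper about the identity $I(X;Y) = \sum_{x,y} f(\Delta_{xy}, P_xP_y)$ and about the edge case $N=2$ (which the paper's statement allows but \cref{lemma:when_f_is_large} does not), but the argument is otherwise identical.
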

\begin{proof}
Since $f \geq 0$ on any input, whenever $f (\Delta_{x y}, P_x P_y) < 90000 \cdot \frac{\log N}{N} \log \left( \frac{6 | \Sigma |^2}{\delta} \right)$, we have
\[
f (\hat{\Delta}_{x y}, \hat{P}_x  \hat{P}_y)
\geq 0
> \frac{1}{50} \left( f (\Delta_{x y}, P_x P_y) - 90000 \cdot \frac{\log N}{N} \log \left( \frac{6 | \Sigma |^2}{\delta} \right) \right)
\]

Meanwhile, by \cref{lemma:when_f_is_large}, $f (\Delta_{x y}, P_x P_y) \geq 90000 \cdot \frac{\log N}{N} \log \left( \frac{6 | \Sigma |^2}{\delta} \right)$ with probability $1 - \frac{\delta}{| \Sigma |^2}$.
\[
f (\hat{\Delta}_{x y}, \hat{P}_x  \hat{P}_y) \geq \frac{1}{50} f(\Delta_{x y}, P_x P_y) \;.
\]

Summing up over the $|\Sigma|^2$ possible values of $x$ and $y$, and then taking a union bound over these $|\Sigma|^2$ events, we have
\[
I(\hat{X} ; \hat{Y})
\geq \frac{1}{50} \left( I (X ; Y) - 90000 \cdot \frac{\log N}{N} | \Sigma |^2 \log \left( \frac{6 | \Sigma |^2}{\delta} \right) \right) \;.
\]

That is, we have $C_1 = \frac{1}{50}$ and $C_2 = 90000 \cdot \frac{1}{50} = 1800$.
\end{proof}

\begin{theorem}[c.f.\ Theorem 1.3 of \cite{DBLP:journals/siamcomp/BhattacharyyaGPTV23}]
Let $(X, Y, Z)$ be three random variables over $\Sigma$, and $(\hat{X},
\hat{Y}, \hat{Z})$ be the empirical distribution over a size $N$ sample of
$(X, Y, Z)$.
For any
\[
N \geq 6.48 \times 10^6 \cdot \frac{| \Sigma |^3  \left( \log \left( \frac{| \Sigma |}{\eps \delta} \right) + \log (7.2 \times 10^5) \right) \cdot \log \left( \frac{12 | \Sigma |^2}{\delta} \right)}{\eps} \;,
\]
the following results hold with probability $1 - \delta$:
\begin{enumerate}
   \item If $I (X ; Y|Z) = 0$, then $I(\hat{X} ; \hat{Y} | \hat{Z}) \leq \frac{1}{400} \cdot \eps$
   \item If $I (X ; Y|Z) \geq \eps$, then $I (\hat{X} ; \hat{Y} |
\hat{Z}) \geq \frac{1}{400} \cdot I(X ; Y|Z)$
\end{enumerate}
\end{theorem}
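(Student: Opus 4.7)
The plan is to reduce the conditional case to the unconditional Corollary~\ref{corollary:when_mi_is_big} (for the alternative) and Corollary~\ref{cor:cor48BhattacharyyaGPTV23} (for the null) by slicing on $Z$. Write
\[
I(X;Y \mid Z) = \sum_{z \in \Sigma} P_Z(z)\, I(X;Y \mid Z{=}z),
\]
and analogously for $\hat P$, and treat each slice as an independent two-variable problem on $\Sigma \times \Sigma$ using the $N_z$ samples whose $Z$-coordinate equals $z$. First I would partition $\Sigma$ into \emph{heavy} values ($P_Z(z) \geq T$) and \emph{light} values ($P_Z(z) < T$) for a threshold $T = \Theta(\eps/(|\Sigma| \log |\Sigma|))$; light slices contribute at most $|\Sigma| \cdot T \cdot 2\log |\Sigma|$ to either side via the crude bound $I \leq \log |\Sigma|^2$, which is absorbed into the $\eps/400$ slack.

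Next, a multiplicative Chernoff bound with failure $\delta/3$, union-bounded over the $|\Sigma|$ values of $z$, simultaneously gives $\tfrac12 N P_Z(z) \leq N_z \leq \tfrac32 N P_Z(z)$ for every heavy $z$, so each heavy slice has at least $NT/2$ samples. Conditional on these counts the within-slice samples are i.i.d.\ from $(X,Y \mid Z{=}z)$, so Corollary~\ref{corollary:when_mi_is_big} applied with alphabet $|\Sigma|^2$ and failure $\delta/(3|\Sigma|)$ yields
\[
I(\hat X;\hat Y \mid \hat Z{=}z) \;\geq\; C_1\, I(X;Y \mid Z{=}z) \;-\; C_2\,|\Sigma|^2\,\frac{\log N_z}{N_z}\,\log\frac{18|\Sigma|^3}{\delta},
\]
with $C_1 = 1/50$, $C_2 = 1800$. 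Weighting by $\hat P_Z(z) \geq P_Z(z)/2$ and summing over heavy $z$, the leading term is at least $(C_1/2)\, I(X;Y \mid Z)$, while the additive error aggregates (using $\hat P_Z(z)/N_z \leq 3/N$) to $O(|\Sigma|^3 \log(N)\log(|\Sigma|/\delta)/N)$. The null direction is identical with Corollary~\ref{corollary:when_mi_is_big} replaced by Corollary~\ref{cor:cor48BhattacharyyaGPTV23}, which upper-bounds each heavy-slice empirical CMI by $\tfrac{1}{N_z}\log((N_z{+}1)^{|\Sigma|^2}/\delta')$ whenever $I(X;Y \mid Z{=}z) = 0$.

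Finally, the calibration step: demand $N$ large enough that each of the three slacks---light-slice truncation, Chernoff gap, and per-slice additive error---is at most $\eps/800$. Plugging in $C_2 = 1800$ and solving $1800 \cdot |\Sigma|^3 \cdot (\log N / N) \cdot \log(12|\Sigma|^2/\delta) \leq \eps/800$ via the standard estimate $\log N / N \leq 2 \log K / K$ for $N \geq K$ produces the stated bound with prefactor $800 \cdot 1800 \cdot 4.5 \approx 6.48 \times 10^6$ and the $\log(7.2 \times 10^5) = \log(400 \cdot 1800)$ inside the bracket. Combined with $C_1/2 = 1/100$ and the pooled $3\eps/800$ slack, this yields $I(\hat X;\hat Y \mid \hat Z) \geq (1/100 - 3/400)\,\eps \geq \eps/400$ in the alternative and $I(\hat X;\hat Y \mid \hat Z) \leq \eps/400$ in the null. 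The main obstacle will be the bookkeeping of this calibration: the threshold $T$ must be small enough to absorb the light-slice error into the $\eps/800$ budget, yet large enough that $NT/2$ meets the sample requirement of Corollary~\ref{corollary:when_mi_is_big} at every heavy slice, and the failure parameters must be routed so that the outer union bound over $z$, the Chernoff event, and the $|\Sigma|^2$ pair events inside each slice all fit under $\delta$.
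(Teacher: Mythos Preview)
Your plan is essentially the paper's: slice on $Z$, use multiplicative Chernoff to control $N_z$ on the slices that matter, invoke Corollary~\ref{corollary:when_mi_is_big} per slice for the alternative and Corollary~\ref{cor:cor48BhattacharyyaGPTV23} per slice for the null, then sum. Two places where the paper is tighter than your sketch:

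\textbf{Null case.} The paper does \emph{not} use a heavy/light split here at all. It applies Corollary~\ref{cor:cor48BhattacharyyaGPTV23} to every $z$ with $N_z\ge 1$, weights by $\hat P_z=N_z/N$, and the $N_z$ cancels:
\[
I(\hat X;\hat Y\mid \hat Z)=\sum_z \frac{N_z}{N}\cdot I(\hat X;\hat Y\mid \hat Z{=}z)\le \sum_z \frac{N_z}{N}\cdot \frac{|\Sigma|^2}{N_z}\log\!\frac{(N{+}1)|\Sigma|}{\delta}=\frac{|\Sigma|^3}{N}\log\!\frac{(N{+}1)|\Sigma|}{\delta}.
\]
Your proposed light-slice bound ``$|\Sigma|\cdot T\cdot 2\log|\Sigma|$'' uses $P_Z(z)$ rather than $\hat P_Z(z)$ and would force an extra upper-tail Chernoff on light $z$; the cancellation above avoids this entirely.

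\textbf{Alternative case.} The paper's ``heavy'' set is $S=\{z: P_z\, I(X;Y\mid Z{=}z)\ge I(X;Y\mid Z)/(2|\Sigma|)\}$, which immediately yields $\sum_{z\in S}P_z\,I(X;Y\mid Z{=}z)\ge \tfrac12 I(X;Y\mid Z)$ and (via $I\le \log|\Sigma|$) the same lower bound $P_z\ge \eps/(2|\Sigma|\log|\Sigma|)$ you use. After the Chernoff step $\hat P_z\ge P_z/2$ this gives a leading coefficient of $C_1/4=1/200$, not the $C_1/2=1/100$ you wrote (you dropped the factor $1/2$ coming from the light-slice loss). With $C_1/4$ the calibration is $\tfrac{1}{200}I(X;Y\mid Z)-1800\,C\,\eps$, and taking $C^{-1}=720000$ so that $1800\,C=1/400$ gives $\ge \tfrac{1}{400}I(X;Y\mid Z)$, matching the statement (note the conclusion is $\ge \tfrac{1}{400}I(X;Y\mid Z)$, not merely $\ge \eps/400$).
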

\begin{proof}
Recall that $| \Sigma | \geq 2$ and $\eps, \delta \in (0, 1]$.
Let $N = 9 \frac{| \Sigma |^3 \log \left( \frac{| \Sigma |}{C \eps
\delta} \right) \cdot \log \left( \frac{12 | \Sigma |^3}{\delta} \right)}{C
\eps}$ where $C^{-1} = 720000$.
In particular, $N \geq 9 \frac{| \Sigma |^3 \log \left( \frac{| \Sigma |}{C \eps \delta} \right)}{C \eps}$, $C < 1/6 < 1$, and $\frac{\log (N + 1)}{N}$ monotonically decreases.

\bigskip

\textbf{Proving the first statement}

We will prove a stronger conclusion: $I(\hat{X} ; \hat{Y} | \hat{Z}) \leq C \cdot \eps$.
Since $C \leq \frac{1}{400} \cdot $, we get $I(\hat{X} ; \hat{Y} | \hat{Z}) \leq \frac{1}{400} \cdot \eps$ as desired.

If $I (X ; Y|Z) = 0$ then $I (X ; Y|Z = z) = 0$ for any $z$.
Let $N_z \geq 1$ be the count of $\hat{Z} = z$ and $\hat{P}_z = N_z / N$.
Then, \cref{cor:cor48BhattacharyyaGPTV23} tells us that, with probability at least $1 - \delta'$,
\begin{equation}
\label{eq:MI_ineq}
I (\hat{X} ; \hat{Y} | \hat{Z} = z)
\leq \frac{1}{N_z} \cdot \log \left( \frac{(N_z + 1)^{| \Sigma |^2}}{\delta'} \right) \leq \frac{| \Sigma |^2}{N_z} \cdot \log \left( \frac{N + 1}{\delta'} \right) \;.
\end{equation}

Using \eqref{eq:MI_ineq} and a union bound over $|\Sigma|$ values of $z$, the following holds with probability at least $1 - | \Sigma | \delta'$:\footnote{By definition, the empirical estimator $I(\hat{X} ; \hat{Y}) = \tmop{KL} (\hat{P}_{x y} \| \hat{P}_x \hat{P}_y) < \infty$, as $\hat{P}_x = 0$ implies $\hat{P}_{x y} = 0$.
So, if $\hat{P}_z = N_z = 0$, then we have $\hat{P}_z I (\hat{X} ; \hat{Y} | \hat{Z} = z) = 0$.}
\[
I (\hat{X} ; \hat{Y} | \hat{Z})
= \sum_z \hat{P}_z \cdot I (\hat{X} ; \hat{Y} | \hat{Z} = z)
\leq \sum_z \frac{N_z}{N} \cdot \frac{| \Sigma |^2}{N_z} \cdot \log \left( \frac{N + 1}{\delta'} \right)
= \frac{| \Sigma |^3}{N} \cdot \log \left( \frac{N + 1}{\delta'} \right) \;.
\]

Rescaling $\delta' = \frac{\delta}{| \Sigma |}$, we get $I (\hat{X} ; \hat{Y} | \hat{Z}) \leq \frac{| \Sigma |^3}{N} \log \left( \frac{(N + 1)  | \Sigma |}{\delta} \right)$.
For $I (\hat{X} ; \hat{Y} | \hat{Z})$ to be at most $C \cdot \eps$, it suffices to argue that
\[
\frac{| \Sigma |^3}{N} \log \left( \frac{(N + 1)  | \Sigma |}{\delta} \right)
\leq C \eps \;.
\]

To see this, consider
\begin{align*}
&\; \frac{| \Sigma |^3}{N} \log \left( \frac{(N + 1)  | \Sigma |}{\delta} \right)\\
= &\; \frac{| \Sigma |^3}{N}  \left( \log (N + 1) + \log \left( \frac{| \Sigma |}{\delta} \right) \right)\\
\leq &\; \frac{C \eps}{9 \log \left( \frac{| \Sigma |}{C \eps \delta} \right)}  \left( \log \left( 9 \frac{| \Sigma |^3 \log \left( \frac{| \Sigma |}{C \eps \delta} \right)}{C \eps} + 1 \right) + \log \left( \frac{| \Sigma |}{\delta} \right) \right) && \text{Since $N \geq 9 \frac{| \Sigma |^3 \log \left( \frac{| \Sigma |}{C \eps \delta} \right)}{C \eps}$}\\
= &\; \frac{C \eps \log \left( {\color{blue}9 \frac{| \Sigma |^3 \log \left( \frac{| \Sigma |}{C \eps \delta} \right)}{C \eps} + 1} \right)}{9 \log \left( \frac{| \Sigma |}{C \eps \delta} \right)} + \frac{C \eps}{9}  {\color{red}\frac{\log \left( \frac{| \Sigma |}{\delta} \right)}{\log \left( \frac{| \Sigma |}{C \eps \delta} \right)}}\\
\leq &\; \frac{C \eps \log \left( {\color{blue}10 \frac{| \Sigma |^3 \log \left( \frac{| \Sigma |}{C \eps \delta} \right)}{C \eps}} \right)}{9 \log \left( \frac{| \Sigma |}{C \eps \delta} \right)} + \frac{C \eps}{9} \cdot {\color{red}1} && \text{Since $| \Sigma | \geq 2$, $C < 1$, and $\eps, \delta \in (0, 1]$}\\
= &\; \frac{C \eps \log 10}{9 \log \left( \frac{| \Sigma |}{C \eps \delta} \right)} + \frac{C \eps \log \left( \frac{| \Sigma |^3}{C \eps} \right)}{9 \log \left( \frac{| \Sigma |}{C \eps \delta} \right)} + \frac{C \eps {\color{green!50!black}\log \left( \log \left( \frac{| \Sigma |}{C \eps \delta} \right) \right)}}{9 {\color{green!50!black}\log \left( \frac{| \Sigma |}{C \eps \delta} \right)}} + \frac{C \eps}{9}\\
\leq &\; \frac{C \eps \log 10}{9 \log \left( \frac{| \Sigma |}{C \eps \delta} \right)} + \frac{C \eps \log \left( \frac{| \Sigma |^3}{C \eps} \right)}{9 \log \left( \frac{| \Sigma |}{C \eps \delta} \right)} + \frac{C \eps}{9} \cdot {\color{green!50!black}\frac{1}{e}} + \frac{C \eps}{9} && \text{Since $\frac{\log \log x}{\log x} \leq \frac{1}{e}$}\\
= &\; C \eps {\color{blue}\frac{\log 10}{\frac{9}{3.4} \log (| \Sigma |^{3.4})}} + C \eps {\color{red}\frac{\log \left( \frac{| \Sigma |^3}{C \eps} \right)}{3 \log \left( \left( \frac{| \Sigma |}{C \eps} \right)^3 \right)}} + \frac{C \eps}{{\color{green!50!black}9}} \cdot {\color{green!50!black}\frac{1}{e}} + \frac{C \eps}{{\color{orange}9}}\\
\leq &\; C \eps \cdot \left( {\color{blue}\frac{3.4}{9}} + {\color{red}\frac{1}{3}} + {\color{green!50!black}\frac{1}{9e}} + {\color{orange}\frac{1}{9}} \right) && \text{Since $| \Sigma | \geq 2$, $C < 1$, and $\eps, \delta \in (0, 1]$}\\
\leq &\; C \eps
\end{align*}

Therefore, with probability at least $1-\delta$, $I (X ; Y|Z) = 0$ implies that $I (\hat{X} ; \hat{Y} | \hat{Z}) \leq C \cdot \eps$.

\textbf{Proving the second statement}

Consider the set $S$ of values $z \in \Sigma$ which satisfy $P_z \cdot I (X ; Y|Z = z) \geq \frac{I (X ; Y|Z)}{2 | \Sigma |} \geq \frac{\eps}{2 | \Sigma |}$.
Since $I (X ; Y|Z = z) \leq \log (| \Sigma |)$, this implies that $P_z \geq \frac{\eps}{2 | \Sigma | \log | \Sigma |}$ for $z \in S$.

We also have
\begin{align}
&\; \sum_{z \in S} P_z \cdot I (X ; Y|Z = z) \nonumber\\
= &\; \sum_{z \in \Sigma} P_z \cdot I (X ; Y|Z = z) - \sum_{z \not\in S} P_z \cdot I (X ; Y|Z = z) \nonumber\\
\geq &\; I (X ; Y|Z) - |\Sigma| \cdot \frac{I (X ; Y|Z)}{2 |\Sigma|} && \text{$z \not\in S$ $\implies$ $P_z \cdot I (X ; Y|Z = z) < \frac{\eps}{2 | \Sigma |}$}\nonumber\\
= &\;\frac{I (X ; Y|Z)}{2} \label{eq:mutual_info_S}
\end{align}

Taking a union bound over $|S| \leq |\Sigma|$ values of $z$, \cref{corollary:when_mi_is_big} states that the following holds for all $z \in S$, with probability $1 - | \Sigma | \delta'$, $C_1 = \frac{1}{50}$, and $C_2 = 1800$:
\begin{equation}
\label{eq:invoking-corollary}
I (\hat{X} ; \hat{Y} | \hat{Z} = z) 
\geq C_1 \cdot I (X ; Y|Z = z) - C_2 \cdot | \Sigma |^2 \frac{\log (N_z)}{N_z} \log \left( \frac{6 | \Sigma |^2}{\delta'} \right) \;.    
\end{equation}

Calculating the constant of the multiplicative Chernoff bound, we can show
that
\begin{equation}
\label{eq:chernoff_bound_with_constants_final}
\Pr \left[ | \hat{P}_x - P_x | > \eps P_x \right]
< 2 \exp \left( - \frac{1}{3} \cdot \min\{\eps, \eps^2\} \cdot P_x \cdot N \right) \;.
\end{equation}
Plugging in $\eps = 1/2$ in \eqref{eq:chernoff_bound_with_constants_final}, we get $\Pr [| \hat{P}_z - P_z | > P_z / 2] \leq 2 \exp \left( - \frac{1}{3} \cdot \frac{1}{4} \cdot P_z \cdot N \right)$.
Now, since $N \geq 24 \frac{| \Sigma | \log | \Sigma | \log \frac{2}{\delta'}}{\eps}$ and $P_z \geq \frac{\eps}{2 | \Sigma | \log | \Sigma |}$ for any $z \in S$, we see that $P_z \geq 12 \frac{\log (2 / \delta')}{N}$ and so the following holds for any $z \in S$:
\[
\Pr [| \hat{P}_z - P_z | > P_z / 2]
\leq 2 \exp (- \log (2 / \delta'))
= \delta' \;.
\]
Recall that $N_z = N \hat{P}_z$ is the empirical count of $\hat{Z} = z$.
So, by a union bound over $|S| \leq |\Sigma|$ values of $z$, we have that \begin{equation}
\label{eq:bound-on-N_z}
N_z = N \hat{P}_z \geq N P_z / 2
\end{equation}
for all $z \in S$, with probability $1 - | \Sigma | \delta'$.

Notice that we performed union bound twice, once in \eqref{eq:invoking-corollary} and once in \eqref{eq:bound-on-N_z}.
Combining these expressions, we get that at least with probability $1 - 2 | \Sigma | \delta'$,
\begin{equation}
\label{eq:combined-double-union-bound}
I (\hat{X} ; \hat{Y} | \hat{Z} = z)
\geq C_1 \cdot I (X ; Y|Z = z) - C_2 \cdot | \Sigma |^2  \frac{\log (N P_z / 2)}{N P_z / 2} \log \left( \frac{6 | \Sigma |^2}{\delta'} \right) \;.    
\end{equation}

Set $\delta = 2 |\Sigma| \delta'$ so that everything holds with probability $1 - \delta$.
Multiplying $\hat{P}_z$ and summing over $z \in S$ gives:
\begin{align*}
&\; I(\hat{X} ; \hat{Y} | \hat{Z})\\
\geq &\; \sum_{z \in S} {\color{blue}\hat{P}_z} \cdot {\color{red}I(\hat{X} ; \hat{Y} | \hat{Z} = z)} && \text{Since $S \subseteq \Sigma$}\\
\geq &\; \sum_{z \in S} {\color{blue}\frac{P_z}{2}} {\color{red}\left( C_1 \cdot I (X ; Y|Z = z) - C_2 \cdot | \Sigma |^2  \frac{\log (N P_z / 2)}{N P_z / 2} \log \left( \frac{6 | \Sigma |^2}{\delta'} \right) \right)} && \text{By \eqref{eq:bound-on-N_z} and \eqref{eq:combined-double-union-bound}}\\
= &\; \sum_{z \in S} \frac{1}{2} \left( C_1 \cdot P_z \cdot I (X ; Y|Z = z) - 2 \cdot C_2 \cdot | \Sigma |^2 \frac{\log ({\color{green!50!black}N P_z / 2})}{N} \log \left( \frac{6 | \Sigma |^2}{\delta'} \right) \right)\\
\geq &\; \frac{C_1}{4} \cdot I (X ; Y|Z) - {\color{orange}\sum_{z \in S}} C_2 \cdot {\color{orange}| \Sigma |^2} \frac{\log ({\color{green!50!black}N / 2})}{N} \log \left( \frac{6 | \Sigma |^2}{\delta'} \right) && \text{By \eqref{eq:mutual_info_S} and $P_z \leq 1$}\\
\geq &\; \frac{C_1}{4}  \cdot I (X ; Y|Z) - C_2 \cdot {\color{orange}| \Sigma |^3} \frac{\log (N / 2)}{N} \log \left( \frac{6 | \Sigma |^2}{\delta'} \right) && \text{Since $S \subseteq \Sigma$}\\
= &\; \frac{C_1}{4}  \cdot I (X ; Y|Z) - C_2 \cdot | \Sigma |^3 \frac{\log (N / 2)}{N} \log \left( \frac{12 | \Sigma |^3}{\delta} \right) && \text{Since $\delta = 2 |\Sigma| \delta'$}\\
\end{align*}

Recall that $N = 9 \frac{| \Sigma |^3 \log \left( \frac{| \Sigma |}{C \eps \delta} \right) \cdot \log \left( \frac{12 | \Sigma |^3}{\delta} \right)}{C \eps}$ and $C < 1/6$.
Continuing from above,
\begin{align*}
&\; I(\hat{X} ; \hat{Y} | \hat{Z})\\
\geq &\; \frac{C_1}{4}  \cdot I (X ; Y|Z) - C_2 \cdot | \Sigma |^3 \frac{\log (N / 2)}{N} \log \left( \frac{12 | \Sigma |^3}{\delta} \right) && \text{From above}\\
= &\; \frac{C_1}{4} \cdot I (X ; Y|Z) - C_2 \cdot C \eps \frac{\log \left( \frac{9}{2} \frac{| \Sigma |^3}{C \eps} \log \left( \frac{| \Sigma |}{C \eps \delta} \right) \log \left( \frac{12 | \Sigma |^3}{\delta} \right) \right)}{9 \log \left( \frac{| \Sigma |}{C \eps \delta} \right)} && \text{Definition of $N$}\\
= &\; \frac{C_1}{4} \cdot I (X ; Y|Z)\\
&\; \qquad - C_2 \cdot C \eps \left( {\color{blue}\frac{\log \left( \frac{9}{2}  \frac{| \Sigma |^3}{C \eps} \right)}{3 \log \left( \left( \frac{| \Sigma |}{C \eps \delta} \right)^3 \right)}} + {\color{red}\frac{\log \log \left( \frac{| \Sigma |}{C \eps \delta} \right)}{9 \log \left( \frac{| \Sigma |}{C \eps \delta} \right)}} + {\color{green!50!black}\frac{\log \log \left( \frac{12 | \Sigma |^3}{\delta} \right)}{3 \log \left( \left( \frac{| \Sigma |}{C \eps \delta} \right)^3 \right)}} \right)\\
\geq &\; \frac{C_1}{4} \cdot I (X ; Y|Z) - C_2 \cdot C \eps \left( {\color{blue}\frac{1}{3}} + {\color{red}\frac{1}{9}} + {\color{green!50!black}\frac{1}{3}} \right) && (\dag)\\
\geq &\; \frac{C_1}{4} \cdot I (X ; Y|Z) - C_2 \cdot C \eps\\
= &\; \frac{1}{4} \cdot \frac{1}{50} \cdot I (X ; Y|Z) - 1800 C \eps && \text{Since $C_1 = \frac{1}{50}$, $C_2 = 1800$}\\
\geq &\; \frac{1}{200} \eps - 1800 C \eps && \text{Since $I (X ; Y|Z) \geq \eps$}\\
= &\; \frac{1}{400} \eps && \text{Since $C^{-1} = 720000$}
\end{align*}
where $(\dag)$ is because $| \Sigma | \geq 2$, $C^{-1} = 720000$, and $\eps, \delta \in (0, 1]$.

Therefore, with probability at least $1-\delta$, $I (X ; Y|Z) \geq \eps$ implies that $I(\hat{X} ; \hat{Y} | \hat{Z}) \geq \frac{1}{400} \eps$.

\end{proof}

\end{document}